\definecolor{darkblue}{rgb}{0, 0, 0.5}
\newcommand{\cmark}{\ding{51}}%
\newcommand{\xmark}{\ding{55}}%
\DeclareFontFamily{OT1}{pzc}{}
\DeclareFontShape{OT1}{pzc}{m}{it}{<-> s * [1.10] pzcmi7t}{}
\DeclareMathAlphabet{\mathpzc}{OT1}{pzc}{m}{it}
\newtheorem{defin}{Definition}
\newtheorem{sublemma}{SubLemma}
\newtheorem{lemma}{Lemma}
\newtheorem{theorem}{Theorem}
\newtheorem{myfact}{Fact}
\newtheorem{reduction}{Reduction}
\newtheorem{mysublemmastep}{SubLemmaProofStep}
\newtheorem{mylemmastep}{LemmaProofStep}
\crefname{section}{\S}{\S\S}
\Crefname{section}{\S}{\S\S}
\crefname{table}{Tab.}{}
\crefname{figure}{Fig.}{Figs.}
\crefname{algorithm}{Alg.}{}
\crefname{equation}{Eq.}{Eqs.}
\crefname{appendix}{App.}{}
\crefname{theorem}{Theorem}{}
\crefname{prop}{Proposition}{}
\crefname{defin}{Definition}{}
\crefname{reduction}{Reduction}{}
\crefname{cor}{Corollary}{}
\crefname{observation}{Observation}{}
\crefname{assumption}{Assumption}{}
\crefname{sublemma}{SubLemma}{SubLemmas}
\crefname{lemma}{Lemma}{Lemmas}
\DeclareMathOperator*{\BigCirc}{\bigcirc}
\DeclareMathOperator*{\argmin}{\mathrm{arg\,min}}
\definecolor{blind_blue}{HTML}{547FEF}
\definecolor{blind_magenta}{HTML}{DC267F}
\definecolor{blind_yellow}{HTML}{FFB000}
\definecolor{blind_orange}{HTML}{FE6100}
\newcommand{\colourbase}{blind_orange}
\newcommand{\coloursubword}{blind_magenta}
\newcommand{\colourcharacter}{blind_blue}
\newcommand{\colourmerge}{blind_magenta}
\newcommand{\colourtemp}{black}
\newcommand{\coloursat}{black}
\newcommand{\colourtok}{black}
\newcommand{\mymacro}[2]{\newcommand{#1}{{\color{\colourbase}#2}}}
\newcommand{\mytemp}[2]{\newcommand{#1}{{\color{\colourtemp}#2}}}
\newcommand{\mymerge}[2]{\newcommand{#1}{{\color{\colourmerge}#2}}}
\newcommand{\mysat}[2]{\newcommand{#1}{{\color{\coloursat}#2}}}
\newcommand{\mytok}[2]{\newcommand{#1}{{\color{\colourtok}#2}}}
\newcommand{\mysubword}[2]{\newcommand{#1}{{\color{\coloursubword}#2}}}
\newcommand{\mycharacter}[2]{\newcommand{#1}{{\color{\colourcharacter}#2}}}
\mytemp{\toktomaxsat}{g}
\newcommand{\satname}{\texttt{sat}}
\newcommand{\samesubstring}{$\cdot$}
\newcommand{\defn}[1]{\textbf{#1}}
\mycharacter{\character}{c}
\mycharacter{\characters}{\mathbf{\character}}
\mycharacter{\alphabet}{\Sigma}
\mycharacter{\alphabetsize}{n}
\mysubword{\subword}{s}
\mysubword{\subwords}{\mathbf{\subword}}
\mysubword{\vocab}{\mathcal{S}}
\mysubword{\vocabopt}{\vocab_{\mathtt{opt}}}
\mymacro{\tokenise}{\mathtt{tok}}
\mymacro{\concat}{\mathtt{concat}}
\mymacro{\sumlengths}{\mathtt{sum}}
\mytok{\dataset}{\mathcal{D}}
\mytok{\vocabsize}{K}
\mytok{\maxsymbols}{\delta}
\mytok{\maxsymbolsopt}{\delta_{\mathtt{opt}}}
\mytemp{\nrepeatvar}{f}
\mytemp{\satnvariables}{J}
\mytemp{\satnclauses}{I}
\mytemp{\vocabvariables}{\vocab_{0}}
\mytemp{\vcnvertices}{J}
\mytemp{\vcmedges}{I}
\mytemp{\addseqntargets}{J}
\mysat{\minclauses}{\gamma}
\mysat{\minclausessol}{\gamma^{\star}}
\mysat{\minclausesopt}{\minclauses_{\texttt{opt}}}
\mysat{\satvar}{X}
\mysat{\satval}{x}
\mytemp{\satyesvar}{x^{\mathtt{T}}}
\mysubword{\satyesvarjtok}{\mathbf{x}^{\mathtt{T}}_j}
\mysubword{\satyesvarjprimetok}{\mathbf{x}^{\mathtt{T}}_{j'}}
\mysubword{\satnotvarjtok}{\mathbf{x}^{\mathtt{F}}_j}
\mysubword{\satnotvarjprimetok}{\mathbf{x}^{\mathtt{F}}_{j'}}
\mycharacter{\satyesvarj}{\mathbf{x}^{\mathtt{T}}_j}
\mycharacter{\satyesvarjprime}{\mathbf{x}^{\mathtt{T}}_{j'}}
\mycharacter{\satnotvarj}{\mathbf{x}^{\mathtt{F}}_j}
\mycharacter{\satnotvarjprime}{\mathbf{x}^{\mathtt{F}}_{j'}}
\mysat{\satvars}{\mathcal{X}}
\mysat{\satvals}{\mathpzc{x}}
\mytemp{\satvalsol}{\satval^{\star}}
\mytemp{\satvalsolj}{\satval^{\star}_j}
\mytemp{\satvalssol}{\satvals^{\star}}
\mytemp{\satvalsoljnostar}{\satval_j}
\mysat{\satclauses}{\mathcal{C}}
\mysat{\satliteral}{L}
\mysat{\satliteralval}{\ell}
\mysat{\sateq}{\varphi}
\mytemp{\spacesymbol}{\subwordstring{1}}
\mytemp{\spacesymboltwo}{\symbolone\symbolone}
\mytemp{\oldspacesymbol}{\circledcirc}
\mytemp{\one}{\mathbbm{1}}
\mytemp{\valfalse}{\mathtt{F}}
\mytemp{\valtrue}{\mathtt{T}}
\mytemp{\approxfactor}{\rho}
\mymerge{\merge}{m}
\mymerge{\merges}{\mathbf{m}}
\mymerge{\mergesopt}{\merges_{\mathtt{opt}}}
\mymerge{\mergespart}{\merges^{\mathtt{p}}}
\mymerge{\mergepart}{m^{\mathtt{p}}}
\mymerge{\mergeset}{\mathcal{M}}
\mymacro{\compression}{\Delta}
\mymacro{\mincompression}{\Delta_{\mathrm{min}}}
\mymacro{\tokeniselengthfun}{\mathtt{toklen}}
\mymacro{\vocabbasemerge}{\vocab_{0}}
\mymacro{\mergebase}{\merges_{0}}
\mymacro{\mergefunc}{\mathtt{merge}}
\mymacro{\emptystring}{\emptyset}
\mymacro{\directtoken}{\tokenise_{\text{\pointer}}}
\mymacro{\bottomuptoken}{\tokenise_{\uparrow}}
\newcommand{\bottomuptokenfull}{\bottomuptoken[\merges](\characters)}
\mymacro{\reductionfuncthree}{\mathrm{R3}}
\newcommand{\reductionfuncthreefull}{\reductionfuncthree(\vertices, \edges, \kbudget)}
\mymacro{\reductionfunc}{\mathrm{R1}}
\newcommand{\reductionfuncfull}{\reductionfunc(\satvars, \satclauses, \minclauses)}
\mymacro{\reductiontwofunc}{\mathrm{R2}}
\newcommand{\reductiontwofuncfull}{\reductiontwofunc(\satvars, \satclauses, \minclauses)}
\mymacro{\maxsat}{\mathrm{M2S}}
\mymacro{\tomaxsat}{\mathrm{3OM2S}}
\newcommand{\tomaxsatfull}{\tomaxsat(\satvars, \satclauses, \minclauses)}
\mymacro{\mintok}{\mathrm{Tok}_{\text{\pointer}}}
\mymacro{\mintokmerge}{\mathrm{Tok}_{\uparrow}}
\newcommand{\satsatisfiedtag}{\star}
\newcommand{\satvarsatisfiedj}{\satval_j^{\satsatisfiedtag}}
\newcommand{\satvarsatisfiedjprime}{\satval_{j'}^{\satsatisfiedtag}}
\newcommand{\token}[1]{\ensuremath{\color{\coloursubword}#1}} 
\newcommand{\charstring}[1]{{\color{\colourcharacter}#1}}
\newcommand{\subwordstring}[1]{{\color{\coloursubword}#1}}
\newcommand{\subwordstringwithangle}[1]{\subwordstring{\langle#1\rangle}}
\newcommand{\mathcomment}[1]{\text{\textcolor{gray}{#1}}}
\mymacro{\bijectionvocabsat}{\mathrm{Conv}_{\vocab\to\satvals}}
\mymacro{\bijectionmergesat}{\mathrm{Conv}_{\merges\to\satvals}}
\mymerge{\mergesymbol}{\circledcirc}
\newcommand{\mergestring}[2]{\subwordstring{#1} \mathbin{\mergesymbol} \subwordstring{#2}}
\newcommand{\mergestringwithparens}[2]{(\mergestring{#1}{#2})}
\mymacro{\stringequiv}{\stackrel{{\circ}}{=}}
\newcommand{\defeq}{\stackrel{\texttt{\tiny def}}{=}}
\mymacro{\detokenise}{\mathtt{detok}}
\mymacro{\objectivefunc}{\mathfrak{G}}
\mytemp{\objectivefunclength}{\objectivefunc_{\ell}}
\mytemp{\objectivefuncreduce}{\objectivefunc_{\texttt{r}}}
\mytemp{\btheta}{\boldsymbol{\theta}}
\mytemp{\ptheta}{p_{\btheta}}
\mytemp{\R}{\mathbb{R}}
\mytemp{\N}{\mathbb{N}}
\newcommand{\apxclausecount}{n}
\newcommand{\maxsymbolsup}{\maxsymbols^{+}}
\newcommand{\maxsymbolslow}{\maxsymbols^{-}}
\newcommand{\minclausesup}{\minclauses^{+}}
\newcommand{\minclauseslow}{\minclauses^{-}}
\newcommand*{\circled}[1]{\tikz[baseline=(char.base)]{
        \node[shape=circle,draw,inner sep=1pt] (char) {\normalfont{\small #1}};}}
\mymacro{\ntokopt}{\mathrm{Tok}^{\star,{\alphabetsize}}}
\mymacro{\allntok}{\mathrm{Tok}^{{\alphabetsize}}}
\mymacro{\dirntok}{\mathrm{Tok}_{\text{\pointer}}^{{\alphabetsize}}}
\mymacro{\bupntok}{\mathrm{Tok}_{\uparrow}^{{\alphabetsize}}}
\mymacro{\dirbtok}{\mathrm{Tok}_{\text{\pointer}}^{2}}
\mymacro{\bupbtok}{\mathrm{Tok}_{\uparrow}^{2}}
\mymacro{\minutok}{\mathrm{Tok}_{\text{\pointer}}^{1}}
\newcommand{\minutokfull}{\minutok(\dataset, \vocabsize, \maxsymbols)}
\newcommand{\dutok}{\texttt{D-1-TOK}\xspace}
\newcommand{\usymbol}{\charstring{a}}
\newcommand{\usymboltok}{\subwordstring{a}}
\newcommand{\symbolzero}{\charstring{0}}
\newcommand{\symbolone}{\charstring{1}}
\newcommand{\symbolzerotok}{\subwordstring{0}}
\newcommand{\symbolonetok}{\subwordstring{1}}
\mysubword{\spacesymboltwotok}{\symbolonetok\symbolonetok}
\mysubword{\utoken}{a}
\newcommand{\assymbol}{b}
\newcommand{\assymbolset}{\mathbf{b}}
\newcommand{\assymbolsetlength}{R}
\newcommand{\assymbolsetlengthopt}{R}
\newcommand{\astargets}{\mathbf{t}}
\newcommand{\asmaxlength}{\zeta}
\newcommand{\astarget}{t}
\mysubword{\vocablength}{\mathcal{L}}
\mysubword{\vocabsublentok}{\ell}
\mycharacter{\vocabsublen}{\ell}
\mymacro{\splitop}{\mathtt{split}}      %
\newcommand{\vcp}{\texttt{vertex-cover}\xspace}
\newcommand{\uope}{\texttt{OPE-1-TOK}\xspace}
\newcommand{\bubtok}{\texttt{B-2-TOK}\xspace}
\newcommand{\bbtok}{\bubtok} %
\newcommand{\dbtok}{\texttt{D-2-TOK}\xspace}
\newcommand{\threeoccmaxtwosat}{\texttt{3-OCC-MAX2SAT}\xspace}
\newcommand{\vertexcharstring}[1]{\vocabsublen\charstring{_{#1}}}
\newcommand{\covercharstring}[1]{\vocabsublen'_{\charstring{#1}}}
\newcommand{\edgecharstring}[2]{\vocabsublen''_{\charstring{#1},\charstring{#2}}}
\newcommand{\vertextoken}[1]{\vocabsublentok\subwordstring{_{#1}}}
\newcommand{\covertoken}[1]{\vocabsublentok'_{\subwordstring{#1}}}
\newcommand{\edgetoken}[2]{\vocabsublentok''_{\subwordstring{#1},\subwordstring{#2}}}
\newcommand{\subwordt}[1]{\subword_{{\color{black}#1}}}
\newcommand{\subwordtupper}[2]{\subword^{{\color{black}#2}}_{{\color{black}#1}}}
\newcommand{\bdoneformtok}{\symbolonetok\symbolonetok, \satyesvarjtok, \satnotvarjtok, \symbolonetok\satyesvarjtok, \satyesvarjtok\symbolonetok, \symbolonetok\satnotvarjtok, \satnotvarjtok\symbolonetok, \satyesvarjtok\spacesymboltwotok, \spacesymboltwotok\satnotvarjtok}
\newcommand{\bdtwoformtok}{\symbolonetok\satyesvarjtok\symbolonetok, \symbolonetok\satnotvarjtok\symbolonetok, \symbolonetok\satyesvarjtok\spacesymboltwotok, \spacesymboltwotok\satnotvarjtok\symbolonetok}
\mycharacter{\bigvalue}{B}
\mysubword{\bigvaluetok}{B}
\newcommand{\wrongtokens}{t}
\newcommand{\as}{\texttt{add-chain}\xspace}
\mymacro{\encop}{\mathtt{enc}}
\newcommand{\enc}[1]{\ensuremath{\encop\big(\vertex_{#1}\big)}} %
\mymacro{\godop}{\mathrm{tok}^1\text{ }}  %
\mymacro{\minuope}{\mathrm{Tok}_{\mathtt{OPE}}^{1}} %
\newcommand{\uopefull}{\minuope(\dataset, \vocabsize, \maxsymbols)}
\mymacro{\addseq}{\mathrm{AddChain}}
\newcommand{\addseqfull}{\addseq(\astargets, \asmaxlength)}
\mytemp{\vertices}{\mathcal{V}}
\mytemp{\edges}{\mathcal{E}}
\mytemp{\vertex}{v}
\mytemp{\cover}{\mathcal{C}}
\mytemp{\kbudget}{\psi}
\mymacro{\vc}{\mathrm{VC}}
\newcommand{\vcoverfunc}{\vc(\vertices, \edges, \kbudget)}
\newcommand{\citeposs}[1]{\citeauthor{#1}'s \citeyear{#1}}
\newcommand{\np}{$\mathsf{NP}$\xspace}
\mysubword{\coinassignments}{\mathcal{Z}}
\mysubword{\coinassignment}{\mathbf{z}}
\mytemp{\powerset}{\mathcal{P}}
\newcommand{\ptas}{$\mathsf{PTAS}$\xspace}
\newcommand{\pequalnp}{$\mathsf{P}=\mathsf{NP}$\xspace}
\newcommand{\pnequalnp}{$\mathsf{P}\neq\mathsf{NP}$\xspace}
\title{Tokenisation over Bounded Alphabets is Hard}
\newcommand{\makesf}[1]{\textsf{{{#1}}}}
\newcommand{\ethemailadress}[1]{\href{mailto:#1@inf.ethz.ch}{\makesf{#1}}}
\newcommand{\sofiaemailaddress}[1]{\href{mailto:#1@uni-sofia.bg}{\makesf{#1}}}
\author{Violeta Kastreva\thanks{Equal contribution.\,\,
$^{\dagger}$ Work was done during a research internship at ETH Z\"urich.},\,$^{,\dagger,1,2}$\,\,
Philip Whittington,\!\!$^{*,1}$\,\, %
Dennis Komm,\!$^1$\,\, Tiago Pimentel$^1$ \\
  $^1$ETH Z\"urich, $^2$Sofia University ``St. Kliment Ohridski''\\
   \sofiaemailaddress{vkastreva}\makesf{@uni-sofia.bg},
  \makesf{\{}\ethemailadress{philip.whittington},
  \ethemailadress{dennis.komm},
  \ethemailadress{tiago.pimentel}\makesf{\}@inf.ethz.ch}
  }
\begin{document}
\maketitle
\begin{abstract}
   Recent works have shown that tokenisation is \np-complete. However, these works assume tokenisation is applied to inputs with unboundedly large alphabets---an unrealistic assumption, given that in practice tokenisers operate over fixed-size alphabets, such as bytes or Unicode characters. 
   We close this gap by analysing tokenisation over bounded $\alphabetsize$-ary alphabets, considering two natural variants: 
   \defn{bottom-up tokenisation} and \defn{direct tokenisation}, where we must, respectively, select a sequence of merge operations or a vocabulary whose application optimally compresses a dataset.
   First, we note that proving hardness results for an $\alphabetsize$-ary alphabet proves the same results for alphabets of any larger size.
   We then prove that even with binary alphabets, both variants are not only \np-complete, but admit no polynomial-time approximation scheme (unless \pequalnp). 
   We further show that direct tokenisation remains \np-complete even when applied to unary alphabets.
   While unary alphabets may not be practically useful, this result 
   establishes that the computational intractability of tokenisation is not an artifact of large alphabets or complex constructions, but a fundamental barrier.
   Overall, our results explain why practical algorithms such as BPE and UnigramLM are heuristic,
   and points toward approximation algorithms being an important path going forward for tokenisation research.\looseness=-1
\end{abstract}

\section{Introduction}

Tokenisation
is the first step in most natural language processing pipelines. 
Given a string of characters $\characters$, a tokeniser maps it to a sequence of subwords $\subwords$.
Language models then operate on these subword sequences rather than the raw characters. 
Despite its central role, however, we still lack a comprehensive understanding of tokenisation;
e.g., which properties of the produced strings of subwords $\subwords$ 
actually help downstream modelling?
A common property to aim for is \defn{compression} \citep{sennrich-etal-2016-neural,uzan-etal-2024-greed,zouhar-etal-2023-formal}, as using shorter subword-strings to encode a dataset allows for more efficient training and inference---more data can be passed through the model with the same number of flops.
While not a silver bullet \citep{schmidt-etal-2024-tokenization,ali-etal-2024-tokenizer}, compression has been shown to correlate with downstream model performance \citep{galle-2019-investigating,rust-etal-2021-good,zouhar-etal-2023-tokenization,goldman-etal-2024-unpacking} and will be our work's focus.\looseness=-1

A practical concern follows immediately: once an objective (e.g., compression) is fixed, can an optimal tokeniser be found efficiently?
Popular algorithms such as BPE and UnigramLM are greedy or heuristic and need not return an optimal tokeniser 
for the metrics they are designed to optimise.
Further, recent work has sharpened this picture, proving the \np-completeness of finding an optimal tokeniser under a compression-style objective \citep{kozma2024theoretical,whittington-etal-2025-tokenisationnpc,lim-choo-lauw-2025-concurrent}.
These papers, however, show this for the tokenisation of strings over unboundedly large alphabets.
Conversely, in practice the strings we care about are typically composed of Unicode characters or bytes, thus using bounded alphabets.
Whether it is possible to efficiently find optimal tokenisers over Unicode-strings (which have an alphabet size of roughly $170{,}000$), byte-strings (with an alphabet size of 256), or bit-strings (with an alphabet size of 2) are open questions of practical relevance.\looseness=-1

\newcommand{\ptasalg}{\texttt{ptas}\xspace}

In this paper, we first define the \boldmath\defn{$\alphabetsize$-ary tokenisation problem}:\unboldmath\ the
problem of finding an optimal tokeniser on strings constrained to alphabets of size $\alphabetsize$.
We examine this problem under two variants: direct and bottom-up tokenisation, where---given a dataset over an $\alphabetsize$-ary alphabet and a vocabulary size $\vocabsize$---we must find the vocabulary (in direct tokenisation) or sequence of merges (in bottom-up tokenisation) which, when applied to the dataset, maximally compresses it.
We prove that (i) assuming \pnequalnp, both direct and binary bottom-up tokenisation are not in the \defn{polynomial-time approximation scheme} (\ptas) class, meaning that they cannot be approximated arbitrarily well in polynomial time,\footnote{More specifically, we present a constant lower bound on the approximation ratio achievable by any polynomial-time algorithm for this problem (again, assuming \pnequalnp).}
and that (ii) for the direct case, even unary tokenisation is \np-complete.
Notably, unary and binary are the easiest of the $\alphabetsize$-ary tokenisation problems, and thus these hardness results also trivially extend to tokenisation problems with larger alphabets.\looseness=-1

Our results thus indicate that the computational hardness of tokenisation is not an artifact of large alphabets or elaborate merge operations: it already appears under direct tokenisation over unary alphabets.
This helps explain why practical algorithms (e.g., BPE) rely on approximations, and suggests that future work should focus on provably good approximate methods or on relaxations for this problem.\looseness=-1

\section{Tokenisation}

\begin{tcolorbox}[colback=white,colframe=gray,left=4pt,title=\!\!{\small Our notation's colour-coding (following {\hypersetup{citecolor=white}\citealp{whittington-etal-2025-tokenisationnpc}})}]
    {\small\begin{itemize}[leftmargin=2mm,itemsep=0pt]
        \item {\color{\colourcharacter} Blue} for raw data (i.e., characters $\characters \in \alphabet^*$);
        \item {\color{\coloursubword} Magenta} for tokeniser-specific data (i.e., subwords $\subwords \!\in\! \vocab^*$ and merges $\merges \!\in\! \mergeset^*$);
        \item {\color{\colourbase} Orange} for functions (e.g., $\tokenise$).
    \end{itemize}}
\end{tcolorbox}

Let $\characters \in \alphabet^*$ be\footnote{We note that $\alphabet^*$ denotes the Kleene star of $\alphabet$ (i.e., $\cup_{i=0}^{\infty} \alphabet^i$), and $\alphabet^+$ denotes its Kleene plus (i.e., $\cup_{i=1}^{\infty} \alphabet^i$).} a \defn{character-string}, composed of characters $\character$ from an alphabet $\alphabet$; for notational convenience, we may write one such string as $\characters = \charstring{\character_1\character_2\dots\character_{|\characters|}}$.
Character-strings compose the raw text data found, say, on the web, which make up the datasets on which language models are trained.
We denote one such dataset by $\dataset = \{\characters_{m}\}_{m=1}^{M}$.
Before feeding data to our models, however, we typically convert them to strings of subwords, which is the job of a tokeniser.

Formally, a tokeniser can be defined as a tuple $\langle\vocab, \detokenise, \tokenise \rangle$, composed of a vocabulary, a decoding and an encoding function.
A \defn{vocabulary} $\vocab$ is a finite set of subwords, each of which is a non-empty span of characters; we thus write $\vocab \subset \alphabet^+$. 
A \defn{subword-string} is then a sequence $\subwords \in \vocab^*$ and represents a character-string via the concatenation of its subwords' characters.
We say that a pair of character- and subword-strings are equivalent if
\begin{align}
     \characters \,\smash{\stringequiv}\, \subwords 
     \iff 
     \characters = \concat(\subwords),
     \qquad
    \concat(\subwords) = \subwordt{1} \circ \subwordt{2} \circ \dots \circ \subwordt{|\subwords|}
\end{align}
where $\subwords = \subwordstring{\langle\subwordt{1}, \subwordt{2}, \cdots, \subwordt{|\subwords|}\rangle}$,
each $\subwordt{t} \in \vocab$
is a subword, and operator $\circ$ denotes string concatenation.
Notably, $\alphabet \!\subseteq\! \vocab$ is typically enforced to guarantee that every $\characters\!\in\!\alphabet^*$ can be represented by at least one subword-string $\subwords \in \vocab^*$,
and
we say that a vocabulary's size is $|\vocab| = |\alphabet| + \vocabsize$.
Second in the tuple above, a \defn{decoding function} is defined as $\detokenise\colon \vocab^* \to \alphabet^*$, 
and given a subword-string it outputs the character-string it represents.
This function thus is simply defined as $\detokenise(\subwords) \,\smash{\defeq}\, \concat(\subwords)$.

Finally, an \defn{encoding function} $\tokenise\colon \alphabet^* \to \vocab^*$ maps character- to subword-strings while ensuring the equivalence $\characters \,\smash{\stringequiv}\, \subwords$ for $\subwords\mathop{=}\tokenise(\characters)$.
Several encoding functions may respect this constraint, as many subword-strings may be equivalent to a specific character-string.
For instance, given 
$\vocab \!=\! \{\subwordstring{a},\subwordstring{aa}, \subwordstring{aaa}\}$, the string $\characters \!=\! \charstring{aaa}$ could be tokenised as $\subwords = \subwordstring{\langle aaa \rangle}$ or as $\subwords = \subwordstring{\langle a, aa \rangle}$.
We focus on two encoding functions in this paper, which we follow \citet{whittington-etal-2025-tokenisationnpc} in labelling as direct and bottom-up.
The \defn{direct encoding function} ($\directtoken$) only requires a vocabulary, which it applies optimally to encode a character-string.
In turn, the \defn{bottom-up encoding function} ($\bottomuptoken$) takes a merge sequence $\merges = \langle \merge_1, \dots, \merge_{\vocabsize} \rangle$ as input, which it applies in order to a character-string;
each of these merges $\merge_k$ is composed of a pair of subwords, which we represent as $\mergestring{\subwordtupper{k}{[1]}}{\subwordtupper{k}{[2]}}$, 
and we write $\mergefunc_{\merge}\colon \vocab^* \to \vocab^*$ to represent a function which, given a subword-string, processes it left-to-right and replaces any consecutive occurrence of the pair $\subwordtupper{k}{[1]}, \subwordtupper{k}{[2]}$ with a new token $\subwordtupper{k}{\mathrm{[new]}} = \subwordtupper{k}{[1]} \circ \subwordtupper{k}{[2]}$.
Defining $\mergeset \defeq \vocab\times \vocab$, we say $\merge \in \mergeset$ and $\merges \in \mergeset^*$.
We now formalise these encoding functions as\looseness=-1%
\begin{align}
    \directtoken[\vocab](\characters) \defeq &\argmin_{\subwords \in \vocab^*} |\subwords|,
    \qquad\qquad 
    \bottomuptoken[\merges](\characters) \defeq \smash{\bigg(\bigodot_{\merge \in \merges} \mergefunc_{\merge}\bigg)} (\characters)\\
    &\mathrm{s.t.}\,\,\characters\smash{\stringequiv}\subwords \nonumber
\end{align}
where $\bigodot$ represents function composition.
A tokeniser is thus fully determined by a vocabulary or merge-sequence; for the direct case we have $\tokenise \smash{\defeq} \directtoken[\vocab]$, while for bottom-up $\tokenise \smash{\defeq} \bottomuptoken[\merges]$.
Importantly, the direct encoding function ($\directtoken[\vocab]$) can be efficiently computed in $O(|\characters|^2)$ time using the methods from \citet{schmidt-etal-2024-tokenization}.\looseness=-1

\subsection{Objective Functions and their Optimisation} \label{sec:objective}

As described above, a direct tokeniser is fully determined by a vocabulary, while a bottom-up tokeniser is identified by a merge-sequence.
How to select a specific tokeniser, though?
This is typically done via defining an objective function $\objectivefunc$ which, given an encoding function ($\tokenise$) and a dataset ($\dataset$), returns a value representing the cost of that particular choice.
Choosing a tokeniser then ``simply'' requires optimising this objective: e.g., for direct tokenisation we must find $\vocabopt = \argmin_{\vocab \subset \alphabet^+} \objectivefunc(\directtoken[\vocab], \dataset)$ under the constraint that $|\vocab| = |\alphabet| + \vocabsize$.

Several objective functions exist. 
UnigramLM \citep{kudo-2018-subword}, for instance, selects a vocabulary which optimises a dataset's unigram negative log-probability.
Other work has proposed alternative measures, such as the frequency of the 5-th\,\% least frequent token \citep{gowda-may-2020-finding}, or the tokeniser's R\'enyi efficiency \citep{zouhar-etal-2023-tokenization}.
As mentioned above, we focus on compression in this paper.
We do so following a battery of previous work which formally analyses tokenisers \citep{zouhar-etal-2023-formal,kozma2024theoretical,whittington-etal-2025-tokenisationnpc,lim-choo-lauw-2025-concurrent}.
Prior work has shown that a tokeniser's compression correlates with the downstream performance of language models trained on its output subword-strings \citep{galle-2019-investigating,zouhar-etal-2023-tokenization}.
We note, however, that other recent work has criticised compression as the sole objective for tokenisation, showing that these two properties (compression and downstream performance) may have a more complex relationship than originally suspected \citep{ali-etal-2024-tokenizer,schmidt-etal-2024-tokenization}.

There are two natural ways to define a compression objective:
\defn{compressed length}, which measures the number of remaining symbols after a string is tokenised,
and \defn{compression reduction}, which measures how many symbols are reduced in the string by a tokeniser. These are formalised as:
\begin{align}
    \underbrace{\objectivefunclength(\tokenise, \dataset) \defeq \sum\nolimits_{\characters \in \dataset} |\tokenise(\characters)|}_{\!\!\!\!\text{\defn{compressed length}, size of remaining string}\!\!\!\!\!\!}, \qquad\quad
    \underbrace{\objectivefuncreduce(\tokenise, \dataset) \defeq \sum\nolimits_{\characters \in \dataset} \Big(|\tokenise(\characters)| - |\characters|\Big)}_{\!\!\!\!\text{\defn{compression reduction}, number of reduced symbols}\!\!\!\!}
\end{align}
While equivalent in how they rank tokenisers, this choice can make a big difference when evaluating the quality of an approximation.
When using minimisation objectives, such as \smash{$\objectivefunclength$}, the \defn{approximation ratio} of an algorithm upper-bounds the ratio between the objective value achieved by the algorithm's solution and an optimal solution, being thus at least $1$ by definition.
A similar definition applies when using maximisation objectives, such as \smash{$\objectivefuncreduce$}, but the approximation ratio is inversed.
We say we have a \defn{$\boldsymbol{\delta}$-approximation algorithm} if, for every possible input, this ratio is bounded from above by $\delta$.
If a dataset has $1{,}000$ characters and would have $100$ symbols if optimally compressed, a suboptimal tokeniser which instead reduces it to at most $200$ symbols would have an approximation ratio of $2$ under $\objectivefunclength$ but of $1.125$ under $\objectivefuncreduce$.
Notably, prior work has analysed both these measures.
We argue here that compressed length is the more natural objective, as it directly relates to the throughput achieved by a language model processing that text, being thus connected to the model's training and inference costs.
A 2-approximation for $\objectivefunclength$ implies that a language model using that tokeniser may be 2-times slower (and more costly) than optimal when processing the same text.\footnote{Assuming that language models cannot achieve sub-linear computational complexity on their input's length.}

After deciding on an objective function, such as $\objectivefunclength$ above, we must select a vocabulary (\smash{$\vocab \subset \alphabet^+$}) or merge-sequence (\smash{$\merges \in \mergeset^*$}) which optimises it.
Unfortunately, both these optimisation problems have infinite search spaces (respectively, \smash{$\powerset(\alphabet^+)$} and  \smash{$\mergeset^*$}, where \smash{$\powerset$} denotes the powerset operation), which begs the question: is there an efficient way to find these optima?
Recent work has shown that, in general, this is not possible, proving compression-based tokenisation to be \np-complete; 
more specifically, \citet{kozma2024theoretical} showed this for bottom-up tokenisation, \citet{whittington-etal-2025-tokenisationnpc} for direct and bottom-up tokenisation, and \citet{lim-choo-lauw-2025-concurrent} for direct tokenisation with candidate tokens.
This means that, unless \pequalnp, there exists no polynomial-time algorithm to find compression-optimal tokenisers.
Beyond that, using the \smash{$\objectivefuncreduce$} objective function, \citet{kozma2024theoretical} showed that bottom-up tokenisation is not only \np-complete but also APX-hard, which implies that it is not in the polynomial-time approximation scheme (\ptas) complexity class (unless \pequalnp). 
The \ptas class is characterised by problems for which:
for every constant \smash{$\varepsilon>0$}, there exists a polynomial-time algorithm (whose run-time may depend on $\varepsilon$), which solves it with an approximation ratio upper-bounded by $1+\varepsilon$.
Not being in \ptas thus implies that there is no polynomial-time algorithm which can approximate the optimal solution with an approximation ratio arbitrarily close to $1$.
Notably, all of these complexity proofs apply to tokenisation problems over alphabets of arbitrarily large sizes.
Whether these results hold once alphabet sizes are bounded by a constant is thus left open.\looseness=-1

\section{Tokenisation over Bounded Alphabets} \label{sec:bounded_alphabets}

We now move to the analysis of tokenisation over bounded alphabets.
Let an \textbf{$\boldsymbol{\alphabetsize}$-ary alphabet} be an alphabet with size $|\alphabet| = \alphabetsize$.
We define the tokenisation problem over such bounded alphabets as follows.\looseness=-1

\newcommand{\tokenisespace}{\mathcal{T}}

\vspace{5pt}
\begin{defin}\label{eq:nary_tokenisation_problem}
    Let $\vocabsize$ be a vocabulary size and $\dataset$ be a dataset composed of character-strings from an alphabet of size $|\alphabet| = \alphabetsize$. 
    For a given $\maxsymbols$, the \defn{$\boldsymbol{\alphabetsize}$-ary tokenisation decision problem} requires deciding whether there exists a vocabulary $\vocabopt \subseteq \alphabet^+$ (for direct tokenisation) or a merge-sequence $\mergesopt \in \mergeset^*$ (for bottom-up tokenisation) which compresses $\dataset$ to at most $\maxsymbols$ symbols.
    The \defn{$\boldsymbol{\alphabetsize}$-ary tokenisation optimisation problem} is to find what the maximal such compression of $\dataset$ is. Formally:
    \begin{align}
        \underbrace{\maxsymbols \geq \!\min_{\tokenise \in \tokenisespace} 
        \sum\nolimits_{\characters \in \dataset} \left|\tokenise(\characters) \right|,\,\mathrm{s.t.}\,\,|\tokenise| = \vocabsize}_{{\text{$\alphabetsize$-ary tokenisation decision problem}}},\,\,\,
        &\underbrace{\maxsymbolsopt = \!\min_{\tokenise \in \tokenisespace} \sum\nolimits_{\characters \in \dataset} \left|\tokenise(\characters) \right|,\,\mathrm{s.t.}\,\,|\tokenise| = \vocabsize}_{{\text{$\alphabetsize$-ary tokenisation optimisation problem}}}
    \end{align}
    where 
    $\tokenisespace \!\defeq\! \{\directtoken[\vocab] \!\mid \vocab \!\subset\! \alphabet^+\}$ for direct tokenisation and $\tokenisespace \!\defeq\! \{\bottomuptoken[\merges] \!\mid \merges \!\in\! \mergeset^*\}$ for bottom-up.\looseness=-1
\end{defin}

We will more specifically call these the $\alphabetsize$-ary direct tokenisation problem and the $\alphabetsize$-ary bottom-up tokenisation problem when dealing with, respectively, direct and bottom-up tokenisers, writing $\dirntok(\dataset, \vocabsize, \maxsymbols)$ and $\bupntok(\dataset, \vocabsize, \maxsymbols)$ for the functions which return the solution to their decision problems.
Notably, the $\alphabetsize$-ary tokenisation problems form a clear hierarchy from easiest ($\alphabetsize=1$) to hardest ($\alphabetsize\rightarrow\infty$), with unary tokenisation being the easiest such problem.
In the next sections, we first prove that both binary direct and binary bottom-up tokenisation are hard to approximate, i.e., that both these problems are not in \ptas (in \cref{sec:all_binary}).
We then prove that unary direct tokenisation is \np-complete (in \cref{sec:all_unary}).\looseness=-1

\vspace{4pt}
\begin{myfact}\label{fact:nimpliesnprime}
    If $\alphabetsize$-ary tokenisation is \np-hard, all $\alphabetsize'$-ary tokenisation problems for $\alphabetsize' > \alphabetsize$ are \np-hard.
\vspace{-10pt}
\end{myfact}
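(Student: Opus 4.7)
The plan is to exhibit a trivial polynomial-time many-one reduction from $\alphabetsize$-ary tokenisation to $\alphabetsize'$-ary tokenisation for any $\alphabetsize' > \alphabetsize$, via alphabet embedding. Given an instance $(\dataset, \vocabsize, \maxsymbols)$ of the $\alphabetsize$-ary problem with $\dataset$ drawn from an alphabet $\alphabet$ of size $\alphabetsize$, I pick any superalphabet $\alphabet' \supset \alphabet$ of size $\alphabetsize'$ and reinterpret the same $\dataset$ as an $\alphabetsize'$-ary instance. The reduction is essentially the identity, runs in constant time once $\alphabet'$ is fixed, and preserves both the extra-token budget $\vocabsize$ (which lives on top of the base alphabet in both definitions) and the compression threshold $\maxsymbols$. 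It then suffices to show that the two instances have the same optimum, from which their decision problems have the same yes/no answer and \np-hardness transfers.

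One direction is immediate: any vocabulary $\vocab \subset \alphabet^+$ or merge sequence $\merges \in \mergeset^*$ witnessing compression $\le \maxsymbols$ for the $\alphabetsize$-ary instance also witnesses it for the $\alphabetsize'$-ary instance, since the additional forced singletons $\alphabet' \setminus \alphabet$ are included automatically and the $\vocabsize$ non-singleton tokens (or the $\vocabsize$ merges) remain valid and produce exactly the same encoding on $\dataset$. For the converse, the key observation is that characters in $\alphabet' \setminus \alphabet$ never appear in any $\characters \in \dataset$, so any subword in a vocabulary $\vocab'$ for the $\alphabetsize'$-ary instance that contains such a character cannot match a substring of any $\characters$, and any merge whose operands involve such a character never fires during bottom-up application on $\dataset$. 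These ``inert'' tokens and merges contribute nothing to compression, so we may replace each one with an arbitrary token or merge over $\alphabet^+$ (or simply delete it) without changing the resulting compressed length, yielding a valid $\alphabetsize$-ary solution with the same objective value.

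The main obstacle, though routine, lies in the bottom-up case: naively substituting an inert merge $\merge$ inside a merge sequence could in principle cascade, because a later merge might reference a token produced by $\merge$. The cleanest fix is to observe that inert merges are exact no-ops on $\dataset$ under $\bigodot_{\merge \in \merges} \mergefunc_{\merge}$, so they can be permuted to the tail of the sequence and then substituted with harmless filler merges over $\alphabet$ (for instance, repeating an already-applied merge), preserving the sequence length and all outputs on $\dataset$. With this bookkeeping in place, the two optima coincide, the reduction is polynomial-time, and \np-hardness of the $\alphabetsize$-ary problem lifts directly to the $\alphabetsize'$-ary problem for every $\alphabetsize' > \alphabetsize$, in both the direct and bottom-up variants.
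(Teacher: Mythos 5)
Your proposal is correct and takes essentially the same approach as the paper: a trivial polynomial-time reduction via alphabet embedding, observing that an $\alphabetsize$-ary instance is also a valid $\alphabetsize'$-ary instance with the same optimum. The paper's own proof is a one-liner that simply asserts the instance has ``the same solutions''; your version fills in the converse direction---that inert tokens or merges over $\alphabet' \setminus \alphabet$ contribute nothing and can be replaced by filler over $\alphabet$---which the paper leaves implicit but is the only nontrivial part of the argument.
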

\begin{proof}
    Let $\alphabetsize, \alphabetsize' \in \N$ with $\alphabetsize' \geq \alphabetsize$.
    Any instance of the $\alphabetsize$-ary tokenisation problem is a valid instance of the $\alphabetsize'$-ary problem with the same solutions, allowing for a trivial reduction between them.
    Thus, any proof of hardness for the $\alphabetsize$-ary tokenisation problem immediately applies to $\alphabetsize'$-ary problems.
\end{proof}

\paragraph{A Note on Optimisation vs.\ Decision Problems.}
Typically, \np-hardness is discussed mainly as a property of decision problems,
while hardness of approximation (and consequently, being contained or not in \ptas) is a notion regarding optimisation problems.
There is, however, a notion of equivalence between these classes of problems: if a polynomial-time algorithm exists to solve a decision problem (i.e., if this problem is not \np-hard), it can usually be leveraged to also find an efficient algorithm for its associated optimisation problem, and \emph{vice-versa}.
Similarly, if no polynomial-time algorithm can solve an optimisation problem with an approximation ratio arbitrarily close to 1 (i.e., if the problem is not in \ptas), this implies that there must be some constant $\varepsilon$ such that it is \np-hard to distinguish between instances that admit a solution with cost $x$ and those that admit a solution with cost $(1+\varepsilon)x$. 
We will use this latter property here to show hardness of approximation, relying on gap-preserving reductions.\footnote{
We note that hardness of approximation is not formally the same as proving APX-hardness \citep[as was done in][]{kozma2024theoretical}. However, it allows for the same conclusion:
the binary (and larger) tokenisation problems cannot be approximated arbitrarily well in polynomial time, unless \pequalnp.
Additionally, our gap-preserving reductions allow us to find explicit constants to which the problems cannot be approximated.
}
To this end, it will be useful to also define gap versions of the problems we discuss.
Formally, we will denote such gap versions 
similarly to their decision versions (e.g., $\dirntok(\dataset, \vocabsize, \maxsymbols)$ above), but while providing two decision boundaries instead (e.g., $\dirntok(\dataset, \vocabsize, (\maxsymbolslow, \maxsymbolsup))$).
In minimisation gap problems, the task is then to decide whether their optimal value is at most $\maxsymbolsup$ or at least $\maxsymbolslow$ (with the opposite being true for maximisation problems); if a value falls between these, any answer is acceptable.
For $\alphabetsize$-ary tokenisation problems, for instance, we would require an algorithm which computes:
\begin{align}
    \allntok(\dataset, \vocabsize, (\maxsymbolslow, \maxsymbolsup)) = \left\{
    \begin{array}{ll}
        \valtrue & \texttt{if }\,\maxsymbolsup \geq \!\min_{\tokenise \in \tokenisespace} 
        \sum_{\characters \in \dataset} \left|\tokenise(\characters) \right|,\,\,\mathrm{s.t.}\,\,|\tokenise| = \vocabsize \\
        \valfalse  & \texttt{elif }\,\maxsymbolslow \leq \!\min_{\tokenise \in \tokenisespace} 
        \sum_{\characters \in \dataset} \left|\tokenise(\characters) \right|,\,\,\mathrm{s.t.}\,\,|\tokenise| = \vocabsize \\
        ? & \texttt{else}
    \end{array} \right.
\end{align}

\section{Binary Tokenisation is Hard to Decide and Approximate} \label{sec:all_binary}

In this section, we will prove \np-hardness of the two binary tokenisation decision problems above, and of their corresponding gap problems (for specific gaps).
To this end, we will use a reduction from the \defn{3-occurrence maximum 2-satisfiability} problem (\threeoccmaxtwosat), which we define in \cref{sec:max2sat}.
We then move on to proving results showing hardness of approximation for the binary direct and binary bottom-up tokenisation problems (in \cref{sec:binary_apx_direct} and  \cref{sec:binary_apx_bottomup}, respectively).

\subsection{3-Occurrence Maximum 2-Satisfiability} \label{sec:max2sat}

Let $\satvar$ be a Boolean variable assigned a value $\satval \in \{\valfalse,\valtrue\}$, and let
$\satvars = \{\satvar_j\}_{j=1}^{\satnvariables}$ be a set of such variables, with joint assignment $\satvals = \{\satval_j\}_{j=1}^{\satnvariables}$.
Further, let $\satclauses = \{(\satliteral_i^1 \lor \satliteral_i^2)\}_{i=1}^{\satnclauses}$ be a set of clauses,\footnote{In some formalisations, \threeoccmaxtwosat allows clauses of size one. We work here, more specifically, with the 3-occurrence maximum exact-2-satisfiability variant of this problem, thus not allowing single literal clauses.\looseness=-1
}
where each literal $\satliteral_i$ is either a variable $\satvar_j$ or its negation $\neg \satvar_j$.
We define \threeoccmaxtwosat as follows.\looseness=-1

\vspace{3pt}
\begin{defin}
Let $\satvars = \{\satvar_j\}_{j=1}^{\satnvariables}$ be a set of Boolean variables and $\satclauses = \{(\satliteral_i^1 \lor \satliteral_i^2)\}_{i=1}^{\satnclauses}$ be a set of clauses. 
Further, let each variable $\satvar_j$ occur in exactly three clauses.
Given a target $\minclauses \in \N$, the \defn{\threeoccmaxtwosat decision problem} requires deciding whether there exists an assignment $\satvals \in \{\valfalse,\valtrue\}^{\satnvariables}$ such that at least $\minclauses$ clauses are satisfied.
The \defn{\threeoccmaxtwosat optimisation problem} requires finding the maximum number of satisfiable clauses. Formally:
\begin{align}
    \underbrace{\minclauses \leq \max_{\satvals \in \{\valfalse,\valtrue\}^{\satnvariables}} \smash{\sum\nolimits_{i=1}^{\satnclauses}} \one_{\satvals}\{\satliteral_i^1 \lor \satliteral_i^2\}}_{\text{\threeoccmaxtwosat decision problem}}
    \qquad\qquad
    \underbrace{\minclausesopt = \max_{\satvals \in \{\valfalse,\valtrue\}^{\satnvariables}} \smash{\sum\nolimits_{i=1}^{\satnclauses}} \one_{\satvals}\{\satliteral_i^1 \lor \satliteral_i^2\}}_{\text{\threeoccmaxtwosat optimisation problem}},
\end{align}
\end{defin}

We write $\tomaxsatfull$ to denote a function which, given an instance of the \threeoccmaxtwosat decision problem, returns its solution.
The \threeoccmaxtwosat problem was proven to be hard to approximate arbitrarily well by \cite{berman-karpinski-1999}, with their result also implying that this problem is \np-hard.

\subsection{Binary Direct Tokenisation is Hard to Decide and Approximate} \label{sec:binary_apx_direct}

In this section, we prove that the binary direct tokenisation problem is both hard to decide and to approximate beyond a certain constant $r>1$.
First, we will prove that the \emph{decision} version is \np-hard (in \cref{sec:binary_np_direct}).
Second, we will then use this initial result to prove that a \emph{gap} version of the problem is similarly \np-hard (in \cref{sec:direct_binary_tokenisation_apx}).
This will complete our proof that this problem's \emph{optimisation} version is hard to approximate arbitrarily well, as being contained in \ptas would allow us to solve the gap problem.\looseness=-1

\subsubsection{The Binary Direct Tokenisation Decision Problem is \texorpdfstring{\np}{NP}-Hard} \label{sec:binary_np_direct}

We now prove \np-completeness of binary direct tokenisation, which 
requires two things: inclusion in \np and being \np-hard.
Inclusion in \np follows from the general (unbounded) case, which was previously proven by \citet{whittington-etal-2025-tokenisationnpc}.
Proving \np-hardness requires a polynomial-time reduction from another \np-hard problem to this problem, which we will design in what follows.

\newcommand{\ohenc}{enc}

\vspace{3pt}
\begin{reduction}\label{reduction:threeoccmaxtwosat_to_dbtok}
Consider an instance of the \threeoccmaxtwosat decision problem and the binary alphabet $\alphabet = \{\symbolzero, \symbolone\}$.
Now, for each variable $\satvar_j$, let $\satyesvarj =  \symbolzero^{2j-1}$ and $\satnotvarj= \symbolzero^{2j}$, i.e., these are character-strings formed of $\symbolzero$ repeated $2j-1$ or $2j$ times.
Then we build subdatasets:
\begin{subequations}
\begin{align}
    & \dataset_1 =\! \{\symbolone\satyesvarj,\,\, \satyesvarj\symbolone,\,\, \symbolone\satnotvarj,\,\, \satnotvarj\symbolone 
    \mid 1 \leq j \leq \satnvariables\}
    \!\!\!\!  && \times \nrepeatvar, 
    && \dataset_2 =\! \{\symbolone\satyesvarj\symbolone,\,\, \symbolone\satnotvarj\symbolone 
    \mid 1 \leq j \leq \satnvariables\} 
    \!\!\!\! && \times \nrepeatvar' \\
    & \dataset_3 =\! \{\symbolone\satyesvarj\symbolone\satnotvarj\symbolone 
    \mid 1 \leq j \leq \satnvariables\}  && \times \nrepeatvar'', 
    && \dataset_4 =\! \{\symbolone\satliteral_i^1 \symbolone\satliteral_i^2 \symbolone 
    \mid 1 \leq i \leq \satnclauses\} && \times 1
\end{align}
\end{subequations}
where $\satliteral_i^1$ and $\satliteral_i^2$ are replaced by their respective variable characters as they appear in the $i$-th clause (i.e., $\satliteral_i$ is replaced by $\satyesvarj$ if it is equal to $\satvar_j$ or by $\satnotvarj$ if it equals $\neg\satvar_j$).
Further, $\times \nrepeatvar$ denotes that a set of strings should be repeated $\nrepeatvar$ times in the corresponding dataset.
These multiplicities are $\nrepeatvar'' \defeq 7$,
$\nrepeatvar' \defeq 2(\nrepeatvar'' + 3) + 1 = 21$, 
and $\nrepeatvar \defeq 2(\nrepeatvar' + \nrepeatvar'' + 3) + 1 = 63$. 
A full dataset is then formed by joining these subdatasets: $\dataset = \dataset_1 \cup \dataset_2 \cup \dataset_3 \cup \dataset_4$.
Finally, we set the number of allowed tokens to $\vocabsize = 5 \satnvariables$ and the target compression to $\maxsymbols = 4\nrepeatvar \satnvariables + 3\nrepeatvar'\satnvariables + 2\nrepeatvar'' \satnvariables + 3 \satnclauses - \minclauses = 329 \satnvariables + 3\satnclauses - \minclauses$. 
\footnote{
This reduction is inspired by \citeposs{whittington-etal-2025-tokenisationnpc} reduction, which we update to (i) rely on binary, as opposed to unbounded, alphabets; (ii) use constant-sized $\nrepeatvar$'s, which allow us to prove approximation hardness.\looseness=-1}
\end{reduction}

We will write $\reductionfunc(\satvars, \satclauses, \minclauses)$ to represent the \dbtok instance which is output by this reduction, represented by the tuple $(\dataset, \vocabsize, \maxsymbols)$.
Notably, this reduction runs in polynomial time. 
By proving its correctness, thus, we can show that binary direct tokenisation is \np-hard.
For this reduction to be correct, the given \threeoccmaxtwosat instance must be satisfiable if and only if its reduced tokenisation instance is as well, i.e.,
$\tomaxsatfull \iff \dirbtok(\reductionfunc(\satvars, \satclauses, \minclauses))$.
We now set out to prove both directions of this iff clause.

\vspace{3pt}
\begin{theorem}
\label{lemma:dbtok_nphard}
     The binary direct tokenisation decision problem is \np-complete.
\vspace{-10pt}
\end{theorem}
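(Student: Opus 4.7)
Membership in \np for the binary problem is inherited from the general (unbounded) case established by \citet{whittington-etal-2025-tokenisationnpc}, since every binary instance is also a valid instance of the general problem. The remaining work is to prove \np-hardness by verifying that \cref{reduction:threeoccmaxtwosat_to_dbtok} is correct, i.e., $\tomaxsatfull \iff \dirbtok(\reductionfunc(\satvars, \satclauses, \minclauses))$. Polynomial runtime of the reduction is immediate from its construction (constant multiplicities and strings of length $O(\satnvariables)$), so all the work lies in the iff.

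For the forward direction (satisfiable $\Rightarrow$ compressible), given an assignment $\satvalssol$ satisfying $\minclauses$ clauses, I would exhibit the \emph{canonical} vocabulary: for every variable $j$, the four boundary tokens $\symbolonetok\satyesvarjtok$, $\satyesvarjtok\symbolonetok$, $\symbolonetok\satnotvarjtok$, $\satnotvarjtok\symbolonetok$ together with an \emph{assignment token} equal to $\symbolonetok\satyesvarjtok\symbolonetok$ when $\satvalsolj = \valtrue$ and $\symbolonetok\satnotvarjtok\symbolonetok$ when $\satvalsolj = \valfalse$. This contributes exactly $5\satnvariables$ tokens beyond $\alphabet$, matching the budget. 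A direct case analysis of how the direct encoder processes each subdataset then yields $4\nrepeatvar\satnvariables$ symbols on $\dataset_1$, $3\nrepeatvar'\satnvariables$ on $\dataset_2$, $2\nrepeatvar''\satnvariables$ on $\dataset_3$, and on $\dataset_4$ exactly $2$ symbols per satisfied clause and $3$ per unsatisfied one, summing to $329\satnvariables + 3\satnclauses - \minclauses = \maxsymbols$.

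For the reverse direction (compressible $\Rightarrow$ satisfiable), assume a vocabulary $\vocab$ with $|\vocab| = |\alphabet| + 5\satnvariables$ achieves total compressed length at most $\maxsymbols$. The task is to show $\vocab$ essentially has the canonical structure, and then read off an assignment. I would argue in stages, exploiting the layered multiplicities: \textbf{(i)} an accounting argument on $\dataset_1$ forces all $4\satnvariables$ boundary tokens to lie in $\vocab$, because omitting any single one of them costs $\geq \nrepeatvar$ symbols there while the maximum savings an alternative token can recover elsewhere are bounded by a linear combination of $\nrepeatvar'$, $\nrepeatvar''$, and the $3$ clause-occurrences per variable in $\dataset_4$, which the recursion $\nrepeatvar = 2(\nrepeatvar' + \nrepeatvar'' + 3) + 1$ makes strictly smaller; \textbf{(ii)} an analogous argument on $\dataset_2$, using $\nrepeatvar' = 2(\nrepeatvar'' + 3) + 1$, forces the remaining $\satnvariables$ slots to be assignment tokens $\symbolonetok\satyesvarjtok\symbolonetok$ or $\symbolonetok\satnotvarjtok\symbolonetok$, exactly one per variable; \textbf{(iii)} setting $\satvalsolj = \valtrue$ iff $\symbolonetok\satyesvarjtok\symbolonetok \in \vocab$ yields a valid Boolean assignment; and \textbf{(iv)} each clause-string in $\dataset_4$ must tokenise to $\geq 2$ symbols, with equality only if at least one of its literals matches the chosen assignment token for its variable, so the compression budget forces $\geq \minclauses$ clauses to be satisfied.

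The main obstacle is the rigidity claim in (i)--(ii): one has to simultaneously rule out all non-canonical vocabularies, including tokens that span multiple variables, tokens longer than three characters, or tokens that trade $\dataset_1$ savings for $\dataset_2$--$\dataset_4$ savings. The recursive choices $\nrepeatvar'' = 7$, $\nrepeatvar' = 21$, $\nrepeatvar = 63$ are precisely calibrated so that the multiplicity at each layer strictly exceeds twice the total savings achievable in all finer layers (including the $3$ clause-occurrences of each variable in $\dataset_4$), which is the key quantitative fact that defeats any such trade-off. Once (i)--(ii) are established, (iii)--(iv) follow by inspection of how the direct encoder processes $\dataset_4$, completing the reduction and thus the \np-completeness proof.
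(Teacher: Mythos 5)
Your plan follows the same reduction and the same two-direction structure as the paper, and the forward direction matches the paper's exactly. The gap is in stage (ii) of your backward argument: you claim that $\dataset_2$ alone forces the remaining $\satnvariables$ tokens to be assignment tokens $\symbolonetok\satyesvarjtok\symbolonetok$ or $\symbolonetok\satnotvarjtok\symbolonetok$ \emph{with exactly one per variable}. The multiplicity $\nrepeatvar'$ on $\dataset_2$ does force the remaining tokens to have the 101-form, but it cannot force the per-variable distribution: a vocabulary containing both $\symbolonetok\satyesvarjtok\symbolonetok$ and $\symbolonetok\satnotvarjtok\symbolonetok$ for some $j$ while containing neither for another $j'$ covers exactly $\satnvariables$ of the $2\satnvariables$ distinct 101-strings and hence achieves the \emph{same} compressed length $3\satnvariables\nrepeatvar'$ on $\dataset_2$ as a sat-compliant vocabulary does. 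Nothing in your stages (i)--(ii) rules such a ``doubled-up'' vocabulary out.

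This is precisely the role of $\dataset_3$ in the paper's backward proof (Step 3 of Lemma 2): the string $\symbolone\satyesvarjprime\symbolone\satnotvarjprime\symbolone$, present with multiplicity $\nrepeatvar'' = 7$, costs 3 symbols instead of 2 whenever variable $j'$ has no assignment token, while the extra token on $j$ can save at most 3 symbols across its three occurrences in $\dataset_4$; since $\nrepeatvar'' > 3$, any doubled-up vocabulary is strictly suboptimal. You use $\dataset_3$ only in the forward-direction accounting and never invoke it in the rigidity argument, so without adding this step (or something equivalent) your reverse direction is incomplete. The rest of the plan---steps (i), (iii), (iv), and the summary of why the recursion $\nrepeatvar > \nrepeatvar' > \nrepeatvar''$ defeats cross-layer trades---is correct and matches the paper's approach.
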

\begin{proof}[Proof sketch]
This proof is done in two steps.

\textbf{Forward step ($\tomaxsatfull \implies \dirbtok(\reductionfuncfull)$).}
    See a formal proof in \cref{lemma:dbtok_nphard_if} in \cref{appendix:dbtok_nphard_if}.
    Assuming an instance of \threeoccmaxtwosat is satisfied by assignment $\satvalssol = \{\satvalsolj\}_{j=1}^{\satnvariables}$, we build a direct tokeniser with tokens $\symbolonetok\satyesvarjtok,\, 
    \satyesvarjtok\symbolonetok,\, 
    \symbolonetok\satnotvarjtok,\, 
    \satnotvarjtok\symbolonetok$, and with token $\symbolonetok\satyesvarjtok\symbolonetok$ if $\satvalsolj = \valtrue$, and $\symbolonetok\satnotvarjtok\symbolonetok$ otherwise.
    This tokeniser compresses $\dataset_1$ to $252\satnvariables$, $\dataset_2$ to $63\satnvariables$, $\dataset_3$ to $14\satnvariables$, and $\dataset_4$ to $3\satnclauses - \minclausessol$ tokens, where $\minclausessol$ is the number of clauses satisfied by $\satvalssol$.
    Adding these compressed lengths together, we find that they satisfy the direct tokenisation problem, as $\minclausessol \geq \minclauses$ by assumption.\looseness=-1
\textbf{Backward step ($\dirbtok(\reductionfuncfull) \implies \tomaxsatfull$).}
    See full proof in \cref{lemma:dbtok_nphard_onlyif} in \cref{appendix:dbtok_nphard_onlyif}.
    We first show that an optimal tokeniser for the \dbtok instance
    is always \defn{\satname-compliant}: it contains all tokens of the form $\symbolonetok\satyesvarjtok, \satyesvarjtok\symbolonetok, \symbolonetok\satnotvarjtok, \satnotvarjtok\symbolonetok$, and either $\symbolonetok\satyesvarjtok\symbolonetok$ or $\symbolonetok\satnotvarjtok\symbolonetok$ for each $j \in \{1, \dots, \satnvariables\}$.
    We do this by showing that 
    $\dataset_1$ guarantees that any optimal solution includes tokens $\symbolonetok\satyesvarjtok, \satyesvarjtok\symbolonetok, \symbolonetok\satnotvarjtok, \satnotvarjtok\symbolonetok$;
    $\dataset_2$ guarantees that any optimal solution 
    further only includes tokens of the form $\symbolonetok\satyesvarjtok\symbolonetok, \symbolonetok\satnotvarjtok\symbolonetok$; and 
    $\dataset_3$ guarantees that either token $\symbolonetok\satyesvarjtok\symbolonetok$ or $\symbolonetok\satnotvarjtok\symbolonetok$ exist for each $j \in \{1, \dots, \satnvariables\}$.
    Then, we show that if such a \satname-compliant tokeniser reaches the desired compression, it must correspond to an assignment $\satvalssol$ which satisfies the desired number of clauses.\looseness=-1
\end{proof}

\subsubsection{The Binary Direct Tokenisation Gap Problem is \texorpdfstring{\np}{NP}-hard}
\label{sec:direct_binary_tokenisation_apx}

We now prove that not only the decision version of the binary direct tokenisation problem is \np-hard, but so is its gap version.
Proving \np-hardness of a gap problem is an indirect way of proving that its optimisation version is hard to approximate: if an efficient algorithm can approximate the optimisation problem arbitrarily well (which is thus contained in \ptas), it could be used to solve the gap problem.\looseness=-1

\vspace{3pt}
\begin{restatable}{theorem}{dbtokapx}
    \label{thm:dbtok_hardapx}
    The binary direct tokenisation gap problem is \np-hard.
    Thus, the binary direct tokenisation optimisation problem is not in \ptas, unless \pequalnp.
    \vspace{-10pt}
\end{restatable}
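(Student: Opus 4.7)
The plan is to upgrade \cref{reduction:threeoccmaxtwosat_to_dbtok} into a gap-preserving reduction. First, I would invoke \citeposs{berman-karpinski-1999} result that \threeoccmaxtwosat is APX-hard: there is a constant $\varepsilon > 0$ for which it is \np-hard to distinguish instances with $\minclausesopt = \satnclauses$ from those with $\minclausesopt \leq (1-\varepsilon)\,\satnclauses$.

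Second, I would sharpen the correctness analysis in \cref{lemma:dbtok_nphard} from a decision-level equivalence to an objective-level one: the optimal compression of the dataset produced by \cref{reduction:threeoccmaxtwosat_to_dbtok} is exactly $329\,\satnvariables + 3\,\satnclauses - \minclausesopt$. The forward step of \cref{lemma:dbtok_nphard} already exhibits a tokeniser achieving this compression (taking $\minclausessol = \minclausesopt$). For the matching lower bound, I would refine the \satname-compliance argument from \cref{appendix:dbtok_nphard_onlyif} to show that any optimal tokeniser encodes some assignment $\satvalssol$ under which each string $\symbolone\satliteral_i^1\symbolone\satliteral_i^2\symbolone$ in $\dataset_4$ uses exactly $2$ tokens when its clause is satisfied and $3$ tokens otherwise, while $\dataset_1, \dataset_2, \dataset_3$ contribute a fixed $329\,\satnvariables$ tokens independent of $\satvalssol$.

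Third, I would transfer the gap via the double-counting identity $3\,\satnvariables = 2\,\satnclauses$ (each variable appears in exactly three clauses, each clause has size two). YES instances of the Berman--Karpinski promise then map to tokenisation instances with optimum $\maxsymbolsup \defeq 332\,\satnvariables$, while NO instances map to optima at least $\maxsymbolslow \defeq (332 + \tfrac{3\varepsilon}{2})\,\satnvariables$. The ratio $\maxsymbolslow/\maxsymbolsup = 1 + \tfrac{3\varepsilon}{664}$ is a constant strictly greater than $1$, so any polynomial-time algorithm achieving an approximation ratio below this constant would decide the Berman--Karpinski gap, contradicting \pnequalnp. This shows the binary direct tokenisation gap problem is \np-hard and, since a PTAS would eventually achieve a ratio below $1 + \tfrac{3\varepsilon}{664}$, rules out PTAS membership.

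The main obstacle I anticipate is precisely the objective-tracking sharpening in step two: the existing \np-hardness proof needs only that \emph{some} optimal tokeniser corresponds to an assignment satisfying at least $\minclauses$ clauses, whereas the gap reduction requires the finer statement that optimal compression and $\minclausesopt$ are linked by a fixed linear relation. I expect this to follow directly from the per-subdataset analysis already present in \cref{appendix:dbtok_nphard_onlyif}, but it warrants verifying that no subtler tokenisation of $\dataset_4$ can do better than $2$ tokens per string (which would require a length-five token not permitted by \satname-compliance) and that the contributions of $\dataset_1, \dataset_2, \dataset_3$ are truly invariant under the binary choice between $\symbolonetok\satyesvarjtok\symbolonetok$ and $\symbolonetok\satnotvarjtok\symbolonetok$ for each $j$.
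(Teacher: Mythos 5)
Your high-level plan (lift the reduction to a gap-preserving one, sharpen the decision-level analysis to the objective-level identity $\maxsymbolsopt = 329\satnvariables + 3\satnclauses - \minclausesopt$, normalise via $3\satnvariables = 2\satnclauses$, and bound the ratio $\maxsymbolslow/\maxsymbolsup$) is exactly the paper's approach, and your step-two concern is correctly resolved by the \satname-compliance argument in \cref{appendix:dbtok_nphard_onlyif}, which already pins the $\dataset_1\cup\dataset_2\cup\dataset_3$ contribution to a fixed $329\satnvariables$.

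However, your step one invokes a false form of \citeposs{berman-karpinski-1999} result. You claim it is \np-hard to distinguish $\minclausesopt = \satnclauses$ (perfect satisfiability) from $\minclausesopt \leq (1-\varepsilon)\satnclauses$. This cannot hold for \emph{any} variant of \texttt{MAX-2SAT}: deciding whether a 2-CNF is perfectly satisfiable is the ordinary 2-SAT problem, which is in \p{} (linear time via implication-graph SCCs). Unlike 3-SAT, one never has perfect completeness in a 2-SAT hardness-of-approximation gap. The actual Berman--Karpinski statement, and the one the paper uses, places \emph{both} thresholds well below $\satnclauses$: for instances with $\satnclauses = 2016\apxclausecount$ clauses it is \np-hard to distinguish $\minclausesopt \geq (2012-\varepsilon)\apxclausecount$ from $\minclausesopt \leq (2011+\varepsilon)\apxclausecount$. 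Consequently your claimed YES-side optimum $\maxsymbolsup = 332\satnvariables$ (which presumes $\minclausesopt = \satnclauses$) is unavailable, and the resulting constant $1 + 3\varepsilon/664$ does not follow. Substituting the correct thresholds into your framework yields the paper's ratio
\begin{align}
    \frac{\maxsymbolslow}{\maxsymbolsup}
    = \frac{329\satnvariables + 3\satnclauses - \frac{2011+\varepsilon}{2016}\satnclauses}{329\satnvariables + 3\satnclauses - \frac{2012-\varepsilon}{2016}\satnclauses}
    = \frac{446213 - \varepsilon}{446212 + \varepsilon} > 1.000002 ,
\end{align}
and the rest of your argument then goes through unchanged.
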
 
\begin{proof}[Proof sketch]
    See a formal proof in \cref{appendix:direct_binary_tokenisation_apx}.
    As shown by \citet{berman-karpinski-1998,berman-karpinski-1999}, the \threeoccmaxtwosat gap problem is \np-hard to approximate for problems with 
    $\satnclauses = 2016\apxclausecount$ clauses, 
    $\minclauseslow = (2011 + \varepsilon)\apxclausecount$ lower bound, and
    $\minclausesup = (2012 - \varepsilon)\apxclausecount$ upper bound.
    We can use \cref{lemma:dbtok_nphard_if,lemma:dbtok_nphard_onlyif} to prove a reduction from this gap problem to \dbtok's gap problem.
    Notably, our reduction equates $\maxsymbols = 329\satnvariables + 3\satnclauses - \minclauses$, for both $\minclauseslow$ and $\minclausesup$.
    Analysing the gap of the resulting tokenisation problem, we find that this problem is thus \np-hard for an approximation ratio $\frac{\maxsymbolslow}{\maxsymbolsup}$ of at least $1.000002$.
    This implies that no polynomial-time algorithm can approximate the binary direct tokenisation optimisation problem with an approximation ratio better than this constant, unless \pequalnp.\looseness=-1
\end{proof}

While the constant above (i.e., $1.000002$) is remarkably small, we note that our proof makes no attempt to optimise this bound.
Our lemma's main takeaway is that it is not possible to compute \dbtok with approximation ratios arbitrarily close to 1 in polynomial time.
Other larger bounds likely exist and, in fact, it might even be possible that there is no constant-factor approximation for \dbtok at all.\looseness=-1

\subsection{Binary Bottom-Up Tokenisation is Hard to Decide and Approximate}
\label{sec:binary_apx_bottomup}

This section addresses the computational hardness of finding an optimal merge sequence in binary bottom-up tokenisation. 
We establish that the problem is \np-hard in \cref{sec:bup_binary_nphard}.
We then prove that the problem is also hard to approximate in \cref{sec:bottomup_binary_tokenisation_apx}, thus showing that it is not in \ptas, unless \pequalnp.
As for the direct case, our argument proceeds by first proving the hardness of the \emph{decision problem}, and then leveraging this result to demonstrate the hardness of a corresponding gap problem.

\subsubsection{The Binary Bottom-Up Tokenisation Problem is \texorpdfstring{\np}{NP}-Complete}
\label{sec:bup_binary_nphard}

As before, we use a reduction from \threeoccmaxtwosat to prove this problem's NP-hardness.

\vspace{3pt}
\begin{reduction}\label{reduction:threeoccmaxtwosat_to_bbtok}
Consider an instance of the \threeoccmaxtwosat decision problem and the binary alphabet $\alphabet = \{\symbolzero, \symbolone\}$.
Again, for each variable $\satvar_j$, let $\satyesvarj =  \symbolzero^{2j-1}$ and $\satnotvarj= \symbolzero^{2j}$.
Then we build subdatasets: %
{\small\begin{subequations}    
\begin{align}  
    & \dataset_1 \!=\! \{\spacesymboltwo\} \cup 
        \{\satyesvarj, \satnotvarj, \symbolone\satyesvarj, \satyesvarj\symbolone, 
        \symbolone\satnotvarj, \satnotvarj\symbolone, 
        \satyesvarj\symbolone\symbolone, \symbolone\symbolone\satnotvarj\}_{j=1}^{\satnvariables} 
        && \times \nrepeatvar \\
    & \dataset_2 \!=\! 
        \{\symbolone\satyesvarj\symbolone, \symbolone\satnotvarj\symbolone, 
        \symbolone\satyesvarj\spacesymboltwo, \spacesymboltwo\satnotvarj\symbolone\}_{j=1}^{\satnvariables} 
        && \times \nrepeatvar' \\
    & \dataset_3 \!=\! 
        \{\symbolone\satyesvarj\symbolone\satnotvarj\symbolone, 
        \spacesymboltwo\satnotvarj\symbolone\satyesvarj\spacesymboltwo\}_{j=1}^{\satnvariables} 
        && \times \nrepeatvar'' \\
    & \dataset_4 \!=\! 
        \{\symbolone\satnotvarj\symbolone\satyesvarj\spacesymboltwo, \spacesymboltwo\satnotvarj\symbolone\satyesvarj\symbolone\}_{j=1}^{\satnvariables} 
        && \times \nrepeatvar''' \\
    & \dataset_5 \!=\! 
        \left\{
        \begin{array}{ll}
            \charstring{\symbolone\satyesvarj\symbolone\satnotvarjprime\symbolone} 
            & \mathtt{if}\ \satliteral_i^1 = \satvar_{j},\ \satliteral_i^2 = \neg\satvar_{j'} \\
            \charstring{\symbolone\satyesvarjprime\symbolone\satnotvarj\symbolone} 
            & \mathtt{if}\ \satliteral_i^1 = \neg\satvar_{j},\ \satliteral_i^2 = \satvar_{j'} \\
            \charstring{\spacesymboltwo\satnotvarj\symbolone\satnotvarjprime\symbolone} 
            & \mathtt{if}\ \satliteral_i^1 = \neg\satvar_{j},\ \satliteral_i^2 = \neg\satvar_{j'} \\
            \charstring{\symbolone\satyesvarj\symbolone\satyesvarjprime\spacesymboltwo} 
            & \mathtt{if}\ \satliteral_i^1 = \satvar_{j},\ \satliteral_i^2 = \satvar_{j'}
        \end{array}
        \right\}_{i=1}^{\satnclauses} 
        && \times 1
\end{align}
\end{subequations}}
These subdataset multiplicities are $\nrepeatvar''' \defeq 4$, $\nrepeatvar'' \defeq 2(2\nrepeatvar''' + 3) + 1 = 23$, $\nrepeatvar' \defeq 2(2\nrepeatvar'' + 2\nrepeatvar''' + 3) + 1 = 115$, and $\nrepeatvar \defeq 2(2\nrepeatvar' + 2\nrepeatvar'' + 2\nrepeatvar''' + 3) + 1 = 575$.
We set the vocabulary size to $\vocabsize=10\satnvariables$ and the target compressed length to $\maxsymbols = (8\satnvariables + 1)\nrepeatvar + 6\satnvariables\nrepeatvar' + 4\satnvariables\nrepeatvar'' + 4\satnvariables\nrepeatvar''' + 3\satnclauses - \minclauses = 5398\satnvariables + 575 + 3\satnclauses - \minclauses$.
\end{reduction}

We write $\reductiontwofunc(\satvars, \satclauses, \minclauses)$ to represent the \bbtok instance  $(\dataset, \vocabsize, \maxsymbols)$ constructed by this reduction.
As before, this is a polynomial-time 
reduction. 
We now prove the equivalence
\(
    \tomaxsatfull 
    \iff
    \bupbtok(\reductiontwofunc(\satvars, \satclauses, \minclauses))
\) which shows the reduction's correctness and thus that \bbtok is \np-hard.

\vspace{5pt}
\begin{theorem}\label{thm:bup_binary_npcomplete}
     The binary bottom-up tokenisation decision problem is \np-complete.
     \vspace{-7pt}
\end{theorem}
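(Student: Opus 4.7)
My plan is to mirror the structure of \cref{lemma:dbtok_nphard}. Membership in \np follows immediately from the general unbounded-alphabet case proven by \citet{whittington-etal-2025-tokenisationnpc}: a candidate merge-sequence can be applied to $\dataset$ and the resulting compression measured in polynomial time. For \np-hardness, since \cref{reduction:threeoccmaxtwosat_to_bbtok} is polynomial-time, it suffices to establish the equivalence $\tomaxsatfull \iff \bupbtok(\reductiontwofuncfull)$.

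For the \emph{forward direction}, given a satisfying assignment $\satvalssol$ with $\minclausessol \geq \minclauses$ satisfied clauses, I would construct an explicit merge-sequence of length $10\satnvariables$ producing the intended vocabulary: first $\spacesymboltwotok$ from $\symbolonetok \mathbin{\mergesymbol} \symbolonetok$; then, reusing shared shorter runs, the tokens $\satyesvarjtok$ and $\satnotvarjtok$ for each $j$ (chains of merges on runs of $\symbolzerotok$); then the four single wrappings $\symbolonetok\satyesvarjtok$, $\satyesvarjtok\symbolonetok$, $\symbolonetok\satnotvarjtok$, $\satnotvarjtok\symbolonetok$; and finally the assignment-dependent doubly-wrapped token $\symbolonetok\satyesvarjtok\symbolonetok$ when $\satvalsolj = \valtrue$ and $\symbolonetok\satnotvarjtok\symbolonetok$ otherwise, composed using the already-formed $\spacesymboltwotok$. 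Summing the compressed lengths contributed by $\dataset_1,\dots,\dataset_5$, with $\dataset_5$ contributing $3\satnclauses - \minclausessol$, gives a total at most $\maxsymbols$ by direct arithmetic, exactly paralleling the direct case.

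For the \emph{backward direction}, I would argue that any merge-sequence reaching compression at most $\maxsymbols$ must be \emph{\satname-compliant}: after all merges fire, it has produced precisely the tokens listed above, with exactly one of $\symbolonetok\satyesvarjtok\symbolonetok$ or $\symbolonetok\satnotvarjtok\symbolonetok$ per index $j$. The hierarchically decreasing multiplicities $\nrepeatvar > \nrepeatvar' > \nrepeatvar'' > \nrepeatvar'''$ drive a cascading exchange argument: the contribution of $\dataset_1$ is so dominant that any merge-budget reallocation failing to capture its prescribed tokens loses more than it can recover, and the same reasoning cascades through $\dataset_2$, then $\dataset_3$, then $\dataset_4$. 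Once compliance is established, the single forced choice per $j$ defines an assignment $\satvalssol$, the compression achieved on $\dataset_5$ equals $3\satnclauses - \minclausessol$ where $\minclausessol$ counts the clauses satisfied by $\satvalssol$, and the target compression $\maxsymbols$ forces $\minclausessol \geq \minclauses$.

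The main obstacle, relative to the direct case, is the backward compliance argument. In bottom-up tokenisation, merges have order-dependent side-effects: a greedy early wrapping merge such as $\satyesvarjtok \mathbin{\mergesymbol} \spacesymboltwotok$ consumes occurrences of $\spacesymboltwotok$ that a later merge $\spacesymboltwotok \mathbin{\mergesymbol} \satnotvarjtok$ would otherwise exploit, and the left-to-right application of merges means the same target vocabulary reached by different orderings can yield different compressions. The subdatasets $\dataset_3$ and $\dataset_4$, which have no analogue in \cref{reduction:threeoccmaxtwosat_to_dbtok}, are specifically engineered to penalize asymmetric wrappings like $\symbolonetok\satyesvarjtok\spacesymboltwotok$ or $\spacesymboltwotok\satnotvarjtok\symbolonetok$, pinning the merge-sequence down to the canonical symmetric form. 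Carefully tracking these interactions across all five subdatasets, and verifying that no clever non-canonical merge-sequence can match the target compression, is the crux of the proof and where the bulk of the formal appendix-lemma work will lie.
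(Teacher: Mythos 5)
Your proposal correctly identifies the high-level skeleton (membership in \np via the general case, plus an iff equivalence for \cref{reduction:threeoccmaxtwosat_to_bbtok}), but both the forward and the backward step contain substantive gaps that would cause the argument to fail as written.

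\textbf{Forward direction: wrong vocabulary.} You essentially transplant the vocabulary from \cref{reduction:threeoccmaxtwosat_to_dbtok} (the direct case) into the bottom-up setting, building only the four single wrappings $\symbolonetok\satyesvarjtok, \satyesvarjtok\symbolonetok, \symbolonetok\satnotvarjtok, \satnotvarjtok\symbolonetok$ plus one doubly-wrapped token per $j$. But \cref{reduction:threeoccmaxtwosat_to_bbtok} has $\vocabsize = 10\satnvariables$ and its $\dataset_1$ contains the strings $\spacesymboltwo, \satyesvarj, \satnotvarj, \satyesvarj\symbolone\symbolone, \symbolone\symbolone\satnotvarj$ in addition to the four you listed, each with multiplicity $\nrepeatvar$, so each \emph{must} become a single token; and $\dataset_2$ contains $\symbolone\satyesvarj\spacesymboltwo$ and $\spacesymboltwo\satnotvarj\symbolone$, which also demand coverage. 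With your vocabulary the compressed length of $\dataset_1$ alone overshoots the budget by $2\satnvariables\nrepeatvar$ symbols. The paper's merge sequence therefore has \emph{two} assignment-dependent merges per variable --- one for the symmetric wrapping $\symbolonetok\satyesvarjtok\symbolonetok$ / $\symbolonetok\satnotvarjtok\symbolonetok$ and one for the asymmetric wrapping $\symbolonetok\satyesvarjtok\spacesymboltwotok$ / $\spacesymboltwotok\satnotvarjtok\symbolonetok$ --- plus structural merges for $\spacesymboltwotok\satnotvarjtok$ and $\satyesvarjtok\spacesymboltwotok$. A further subtlety you gloss over: compressing all runs $\{\symbolzero^{j}\}_{j=1}^{2\satnvariables}$ to single tokens in $2\satnvariables - 1$ merges cannot be done by the naive chain $\mergestring{\symbolzerotok^{j}}{\symbolzerotok}$, because the greedy left-to-right replacement of an earlier merge blocks a later one; the paper needs a binary-tree construction (\cref{lemma:exactly_n_minus_one_merges}) to make this work.

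\textbf{Misreading the role of $\dataset_3, \dataset_4$.} You claim these subdatasets ``penalize asymmetric wrappings like $\symbolonetok\satyesvarjtok\spacesymboltwotok$ or $\spacesymboltwotok\satnotvarjtok\symbolonetok$, pinning the merge-sequence down to the canonical symmetric form.'' This is the opposite of what the reduction enforces: a \satname-compliant tokeniser \emph{must} contain one asymmetric wrapping per variable (this is forced by $\dataset_2$), and $\dataset_3$ and $\dataset_4$ exist to force \emph{consistency} between the symmetric and asymmetric choices --- i.e.\ that for each $j$ the tokeniser contains either both $\symbolonetok\satyesvarjtok\symbolonetok, \symbolonetok\satyesvarjtok\spacesymboltwotok$ (true flavour) or both $\symbolonetok\satnotvarjtok\symbolonetok, \spacesymboltwotok\satnotvarjtok\symbolonetok$ (false flavour), never a mixture. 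Your argument would need to be substantially rebuilt around this.

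\textbf{Backward direction: you argue at the wrong level.} You propose a ``cascading exchange argument'' directly over merge sequences, while acknowledging that merge-order effects make this delicate. The paper explicitly avoids this: since a direct tokeniser with the same vocabulary always compresses at least as well as a bottom-up one, $\bupbtok(\reductiontwofuncfull) \implies \dirbtok(\reductiontwofuncfull)$, so one may run the exchange argument purely over \emph{direct} tokenisers (where token order is immaterial), conclude the optimal direct tokeniser is \satname-compliant, and then separately show (the paper's Step~\circled{5}) that any compliant direct tokeniser on these instances can be realised as a bottom-up tokeniser with identical compression, closing the loop. Without this detour through direct tokenisers, the exchange argument you sketch would have to account for every possible merge ordering, which --- as the paper notes --- ``quickly gets messy'' and is likely intractable in the form you propose. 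You correctly flag this as ``the crux'' but do not supply the idea that resolves it.
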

\begin{proof}[Proof sketch] 
This proof is done in two steps.

\textbf{Forward step ($\tomaxsatfull \implies \bupbtok(\reductionfuncfull)$).}
See full proof in \cref{lemma:bbtok_nphard_if} in \cref{appendix:bbtok_nphard_if}.
   Assume the \threeoccmaxtwosat instance admits an assignment
$\satvalssol = \{\satvalsolj\}_{j=1}^{\satnvariables}$ 
satisfying at least $\minclauses$ clauses. 
We construct a merge sequence
$\merges = \merges_1 \circ \merges_2 \circ \merges_3 \circ \merges_4 \circ \merges_5 \circ \merges_6 $, where  $\merges_1, \merges_2, \merges_4, \merges_6$ are \defn{structural merges} that appear in every valid tokeniser solution and ensure that all strings corresponding to single variables are properly compressed;
 and $\merges_3, \merges_5$ are \defn{assignment-dependent merges}, chosen according to $\satvalssol$: 
    for each variable $\satvalsolj$, we merge $\mergestring{\symbolonetok}{\satyesvarjtok\spacesymboltwotok}, \mergestring{\symbolonetok\satyesvarjtok}{\symbolonetok}$ if $\satvalsolj = \valtrue$, and $\mergestring{\spacesymboltwotok\satnotvarjtok}{\symbolonetok}, \mergestring{\symbolonetok}{\satnotvarjtok\symbolonetok}$ otherwise.
Applying $\merges$ to the string subdatasets
$\dataset_1, \dataset_2, \dataset_3, \dataset_4$ gives the fixed compressed length $5398\satnvariables + 575$.
For the strings $\dataset_5$, the construction ensures that each clause compresses to 2 tokens if at least one of its two literals is true under $\satvalssol$, and remains at 3 tokens otherwise. 
Since $\satvalssol$ satisfies at least $\minclauses$ clauses, we obtain at most $3\satnclauses - \minclauses$ symbols.
Compression thus satisfies the budget $\maxsymbols$.\looseness=-1

\textbf{Backward step ($\bupbtok(\reductiontwofuncfull) \implies \tomaxsatfull$).}
    See full proof in \cref{lemma:bbtok_nphard_onlyif} in \cref{appendix:bbtok_nphard_onlyif}. 
    We consider \satname-compliant \emph{direct} tokenisers, which must contain all tokens of the form $\bdoneformtok$ and must
    contain either $\symbolonetok\satyesvarjtok\symbolonetok, \symbolonetok\satyesvarjtok\spacesymboltwotok$ or $\symbolonetok\satnotvarjtok\symbolonetok, \spacesymboltwotok\satnotvarjtok\symbolonetok$ for each $j \in \{1, \dots, \satnvariables\}$.
    Again, an optimal direct tokeniser for the \bbtok-instance is always \satname-compliant, which is enforced by datasets $\dataset_1$ to $\dataset_4$.
    We then show that if such a tokeniser achieves the desired compression, it must correspond to an assignment $\satvalssol$ which satisfies the desired number of clauses.
    To finish, we show that,
    for any instance generated by \cref{reduction:threeoccmaxtwosat_to_bbtok}, 
    a \satname-compliant direct tokeniser always corresponds to a bottom-up tokeniser with the same compression quality.
\end{proof}

\subsubsection{The Binary Bottom-Up Tokenisation Gap Problem is \texorpdfstring{\np}{NP}-hard}
\label{sec:bottomup_binary_tokenisation_apx}

As in \cref{sec:direct_binary_tokenisation_apx}, we perform a reduction from a gap variant to show hardness of approximation.

\vspace{5pt}
\begin{restatable}{theorem}{bbtokapx}
    \label{thm:bottomup_binary_tokenisation_apx}
    The binary bottom-up tokenisation gap problem is \np-hard.
    Thus, the binary bottom-up tokenisation optimisation problem is not in \ptas, unless \pequalnp.
\vspace{-7pt}
\end{restatable}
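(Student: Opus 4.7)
The plan is to mirror the approach of \Cref{thm:dbtok_hardapx}, turning \Cref{reduction:threeoccmaxtwosat_to_bbtok} into a gap-preserving reduction from the \threeoccmaxtwosat gap problem to the binary bottom-up tokenisation gap problem. I would start from the inapproximability result of \citet{berman-karpinski-1998,berman-karpinski-1999}, which states that for \threeoccmaxtwosat instances with $\satnclauses = 2016\apxclausecount$ clauses it is \np-hard to distinguish the case in which at least $\minclausesup = (2012 - \varepsilon)\apxclausecount$ clauses are simultaneously satisfiable from the case in which at most $\minclauseslow = (2011 + \varepsilon)\apxclausecount$ are. Since every variable occurs in exactly three clauses and each clause contains exactly two literals, $3\satnvariables = 2\satnclauses$, so $\satnvariables = 1344\apxclausecount$.

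I would then invoke the correctness lemmas backing \Cref{thm:bup_binary_npcomplete}. By \cref{lemma:bbtok_nphard_if}, any YES instance of \threeoccmaxtwosat (with at least $\minclausesup$ satisfiable clauses) produces a \bbtok instance that admits a merge sequence of compression at most $\maxsymbolsup = 5398\satnvariables + 575 + 3\satnclauses - \minclausesup$. By \cref{lemma:bbtok_nphard_onlyif}, any NO instance (at most $\minclauseslow$ satisfiable clauses) produces a \bbtok instance whose optimal compression is at least $\maxsymbolslow = 5398\satnvariables + 575 + 3\satnclauses - \minclauseslow$. Substituting the Berman--Karpinski values gives $\maxsymbolsup = (7258948 + \varepsilon)\apxclausecount + 575$ and $\maxsymbolslow = (7258949 - \varepsilon)\apxclausecount + 575$.

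The induced approximation ratio $\maxsymbolslow / \maxsymbolsup$ then tends to $1 + 1/7258948$ as $\apxclausecount \to \infty$ and $\varepsilon \to 0$, a constant strictly greater than $1$. Any polynomial-time algorithm approximating \bbtok within this ratio would therefore resolve the underlying \threeoccmaxtwosat gap problem, contradicting \pnequalnp. This simultaneously establishes \np-hardness of the binary bottom-up tokenisation gap problem and places its optimisation version outside \ptas (unless \pequalnp).

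The main obstacle is verifying that \cref{lemma:bbtok_nphard_onlyif} truly delivers a \emph{lower bound} on the optimal compression of a NO instance, rather than merely certifying the decision-problem threshold. Concretely, I would have to argue that any bottom-up tokeniser achieving compression strictly below $\maxsymbolslow$ must correspond, under the reduction from bottom-up to \satname-compliant direct tokenisers used in \Cref{thm:bup_binary_npcomplete}, to an assignment satisfying strictly more than $\minclauseslow$ clauses. The interplay between the assignment-dependent merges $\merges_3, \merges_5$ and the clause-encoding strings in $\dataset_5$ is the delicate part; the additive constant $575$ is dwarfed by the $\apxclausecount$-scaling of both gap endpoints and therefore does not affect the asymptotic ratio.
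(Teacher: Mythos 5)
Your proposal is correct and follows essentially the same route as the paper's proof in \cref{appendix:bottomup_binary_tokenisation_apx}: invoke the Berman--Karpinski gap for \threeoccmaxtwosat, plug it through \Cref{reduction:threeoccmaxtwosat_to_bbtok}, use \cref{lemma:bbtok_nphard_if,lemma:bbtok_nphard_onlyif} for the two directions, and compute the resulting ratio $\tfrac{\maxsymbolslow}{\maxsymbolsup} \to \tfrac{7258949}{7258948}$; the only cosmetic difference is that you substitute $\satnvariables = 1344\apxclausecount$ directly instead of first rewriting everything in terms of $\satnvariables$ via $\satnclauses = \tfrac{3}{2}\satnvariables$, but the arithmetic lands in the same place. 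The ``obstacle'' you flag at the end is a genuine thing to check, but it resolves cleanly: since $\reductiontwofunc(\satvars, \satclauses, \minclauses)$ produces the same $\dataset$ and $\vocabsize$ for every $\minclauses$ (only $\maxsymbols$ changes), the contrapositive of \cref{lemma:bbtok_nphard_onlyif} applied at threshold $\minclauseslow + 1$ says that a NO instance (at most $\minclauseslow$ satisfiable clauses) admits no merge sequence compressing $\dataset$ to $5398\satnvariables + 575 + 3\satnclauses - \minclauseslow - 1$ symbols or fewer, hence the optimum is at least $\maxsymbolslow$. This is exactly what the gap reduction needs, and it is what the paper implicitly relies on when it asserts the biconditional at each endpoint.
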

\begin{proof}[Proof sketch]
    See proof in \cref{appendix:bottomup_binary_tokenisation_apx}.
    A similar proof to \Cref{thm:dbtok_hardapx} applies here, except with different values. 
    We find that no polynomial-time algorithm can solve the binary bottom-up tokenisation optimisation problem with an approximation ratio better than $1.0000001$, unless \pequalnp.\looseness=-1
\end{proof}

\mycharacter{\unaryspace}{\N}
\mytemp{\datasetlength}{\dataset_{\unaryspace}}
\mysubword{\subwordunary}{\subword_{\unaryspace}}
\mysubword{\subwordsunary}{\subwords_{\unaryspace}}
\mysubword{\vocabunary}{\vocab_{\unaryspace}}
\newcommand{\vocabunarystar}{\vocab^*_{\unaryspace}}

\section{Unary Tokenisation is Hard to Decide}\label{sec:all_unary}

We now move on to the unary tokenisation case.
Here, we work with alphabets composed of a single symbol, i.e., $\alphabet = \{\usymbol\}$.
As $\alphabet^* = \{\usymbol^\vocabsublen \mid \vocabsublen \in \N\}$, it follows that unary character-strings $\characters \in \alphabet^*$ may only differ from one another in their length.
There exists thus an isomorphism (given by the function $|\cdot|$ and its inverse) between these character-strings and their \defn{string-lengths}, $\vocabsublen \in \unaryspace$.
A natural notation for such problems is then
to work directly with string-lengths.
In this section, we will thus represent
a character-string $\characters \in \alphabet^*$ by its length $\vocabsublen \in \unaryspace$;
a dataset $\dataset = \{\characters_{m}\}_{m=1}^{M}$ 
by the lengths of its strings $\datasetlength = \{\vocabsublen_{m}\}_{m=1}^{M}$, where $\characters_{m} = \usymbol^{\vocabsublen_{m}}$; 
and a vocabulary $\vocab \subset \alphabet^+$ by a set of string-lengths $\vocabunary \subset \unaryspace_{+}$.
A subword-string is then a sequence of such string-lengths, $\subwordsunary \in \vocabunarystar$
and we have:
\begin{align}
\label{eq:unarytok_func}
    \detokenise(\subwordsunary) \defeq \sumlengths(\subwordsunary)
    \qquad
    \directtoken[\vocabunary](\vocabsublen) \defeq \argmin_{\subwordsunary \in \vocabunarystar} |\subwordsunary|, 
    \text{s.t.,}\, \vocabsublen = \sumlengths(\subwordsunary)
\end{align}
where we overload the functions $\detokenise$ and $\directtoken$ to handle this unary-strings representation.
Note that all these definitions are equivalent (up to an isomorphism) to the definitions in \cref{sec:bounded_alphabets}.

When posing either the optimisation or decision version of the unary tokenisation problems, we could thus work with either representation of our data (as strings or string-lengths) and the solutions must be the same.
However, the complexity of an algorithm is typically measured as a function of the length of its input.
If this input is a unary string, the input will be as long as this string's length.
If this input is a number, however, this input's length behaves logarithmically on the value of the number itself (as this number would typically be encoded in a compact binary representation).
When dealing with problems such as unary tokenisation, this introduces an important subtlety: the problem's complexity status may change depending on how we represent it (with strings or string-lengths).
If such a problem is \np-hard when either representation is given, it is called \defn{\boldmath strongly \np-hard\unboldmath}.
If this problem is \np-hard only in its string-length representation, but not when represented using unary strings, it is \defn{\boldmath weakly \np-hard\unboldmath}.\footnote{Note that the opposite case---where a problem is \np-hard only when representing the data as strings, but not strings-lengths---is not possible, as strings have a larger size than their lengths.}
Importantly, \Cref{fact:nimpliesnprime} 
applies only to strongly \np-hard unary problems; as the trivial identity we use in its proof would not be valid for unary problems with string-length representations.
For unary tokenisation, the string representation (where strings are explicitly represented) is more natural, and we are thus interested in strong \np-hardness.

\subsection{Unary Direct Tokenisation is Strongly \texorpdfstring{\np}{NP}-Complete}

In this section, we prove that the unary direct tokenisation problem is strongly \np-complete. 
In \cref{sec:dutok_is_np}, we prove that the problem is in \np, even if the input is in string-length representation. 
To prove \np-hardness of unary direct tokenisation, we then design a polynomial-time reduction from the well-known vertex cover problem ($\vcp$).\footnote{
The \np-hardness of $\vcp$ was proven by \citet{karp-1972} in his groundbreaking paper introducing the very concept of \np-hardness, and can be found in the textbook by \citet{garey-johnson-1979}.}
Let $(\vertices,\edges)$ represent a finite, simple, undirected graph with
$\vertices = \{\vertex_1,\dots,\vertex_{\vcnvertices}\}$ and $\edges \subseteq \{(\vertex, \vertex') \mid \vertex, \vertex' \in \vertices, \vertex \neq \vertex'\}$.
A set $\cover \subseteq \vertices$ is a \defn{vertex cover} if, for every edge $(\vertex, \vertex')\in \edges$, we have that either $\vertex$ or $\vertex'$ is in $\cover$.
Given a budget $\kbudget \in \N$, the vertex cover problem requires deciding whether a graph has a vertex cover of at most $\kbudget$ vertices.

\vspace{5pt}
\begin{defin}
\label{defn:vc_decision_problem}
Given a graph $(\vertices,\edges)$ and a budget $\kbudget \in \N$, the \defn{vertex cover decision problem} asks whether there exists a vertex cover $\cover \subseteq \vertices$ with $|\cover| \leq \kbudget$ in this graph.
\end{defin}

For convenience, we will write $\vcoverfunc$ for a function which returns $\valtrue$ if its input is a satisfiable instance of the \vcp decision problem, and $\valfalse$ otherwise.
We now provide a polynomial-time reduction from \vcp to \dutok, which will prove \dutok's \np-hardness.

\mytemp{\uoffset}{N}
\mytemp{\coversol}{\cover^{\star}}

\vspace{5pt}
\begin{reduction}
\label{reduction:vc_to_utok}
    Consider an instance $(\vertices,\edges, \kbudget)$ of \vcp and
    let $\uoffset \defeq (\vcnvertices+\vcmedges+1)^4$, where $\vcnvertices=|\vertices|$ and $\vcmedges=|\edges|$.
    Now, let $\enc{j} = j+j^2\uoffset+j^3\uoffset^2$ and $\bigvalue = \uoffset^4$.
    We construct three subdatasets from this graph as:\looseness=-1
    \begin{subequations}
    \begin{align}
        &\dataset_1 = \{\usymbol^{\vertexcharstring{j}} \mid \vertex_j \in \vertices\} \,\cup\, \{\bigvalue\}, 
        &&\texttt{where } \vertexcharstring{j} = \enc{j}
        &\mathcomment{vertex-strings}\\
        &\dataset_2 = \{\usymbol^{\covercharstring{j}} \mid \vertex_j \in \vertices\}, 
        &&\texttt{where } \covercharstring{j} = \enc{j} + \bigvalue
        &\mathcomment{cover-strings}\\
        &\dataset_3 = \{\usymbol^{\edgecharstring{j}{j'}} \mid (\vertex_j, \vertex_{j'} \in \edges)\},
        &&\texttt{where } \edgecharstring{j}{j'} = \enc{j} + \enc{j'} + \bigvalue
        &\mathcomment{edge-strings}
    \end{align}
    \end{subequations}
Finally, we merge these subdatasets to form a dataset $\dataset = \dataset_1 \cup \dataset_2 \cup \dataset_3$,
and set
$\vocabsize  = \vcnvertices + 1 + \kbudget$ and
$\maxsymbols = 3\vcnvertices + 2\vcmedges + 1 - \kbudget$.
\end{reduction}

As before, we complete our \np-hardness proof by showing this to be a valid reduction, i.e., that
$\vcoverfunc \iff \minutok(\reductionfuncthreefull)$.
Notably, our reduction outputs (in polynomial time, as all lengths are polynomially bounded in the size of the original instance) an instance of the unary direct tokenisation problem in string form. 
As such, by proving the correctness of this reduction, we prove the \emph{strong} \np-hardness of \dutok.\looseness=-1

\vspace{5pt}
\begin{theorem}
\label{lemma:dutok_nphard}
     The unary direct tokenisation decision problem is strongly \np-complete.
     \vspace{-7pt}
\end{theorem}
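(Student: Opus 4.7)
My plan is to prove membership in \np and strong \np-hardness separately. Membership in \np is essentially identical to the bounded-alphabet case: given a candidate vocabulary $\vocabunary$, each $\directtoken[\vocabunary](\vocabsublen_m)$ can be computed by a shortest-path/dynamic-programming argument in time polynomial in the unary input size, and the resulting total compression compared to $\maxsymbols$. For strong \np-hardness I prove the correctness of \cref{reduction:vc_to_utok}. The reduction runs in polynomial time because $\uoffset=(\vcnvertices+\vcmedges+1)^4$ is polynomial in $\vcnvertices+\vcmedges$ and every length appearing in $\dataset$ is $O(\uoffset^4)$, so the unary representation of $\dataset$ has polynomial total size.

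The forward direction is constructive. Given a vertex cover $\coversol$ of size $\kbudget$ (padding with extra vertices if smaller), I propose the vocabulary
\[
\vocabunary \;=\; \{1\}\,\cup\,\{\enc{j}\}_{j=1}^{\vcnvertices}\,\cup\,\{\bigvalue\}\,\cup\,\{\enc{j}+\bigvalue : \vertex_j \in \coversol\},
\]
which has $|\alphabet|+\vocabsize$ elements. Each vertex-string and $\bigvalue$ is then $1$-tokenised ($\vcnvertices+1$ tokens in total); each cover-string $\enc{j}+\bigvalue$ is $1$-tokenised when $\vertex_j\in\coversol$ and otherwise decomposes as $\enc{j}+\bigvalue$, giving $\kbudget+2(\vcnvertices-\kbudget)=2\vcnvertices-\kbudget$ tokens; and each edge-string decomposes as $\enc{j}+(\enc{j'}+\bigvalue)$ using that at least one endpoint (say $\vertex_{j'}$) lies in $\coversol$, contributing $2\vcmedges$ tokens. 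These sum to exactly $\maxsymbols$.

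The backward direction is the main work. Suppose $\vocabunary$ achieves compression at most $\maxsymbols$. A tight counting argument---each of the $2\vcnvertices+\vcmedges+1$ dataset strings contributes at least one token, and the gap $\maxsymbols-(2\vcnvertices+\vcmedges+1)=\vcnvertices+\vcmedges-\kbudget$ bounds the total extra tokens---forces exactly $\vcnvertices+1+\kbudget$ strings to be $1$-tokenised, the rest to be exactly $2$-tokenised, and none to use three or more tokens. Since $|\vocabunary\setminus\{1\}|=\vcnvertices+1+\kbudget$, every non-unit vocabulary element must equal the length of some dataset string, hence belongs to $\{\enc{j},\,\bigvalue,\,\enc{j}+\bigvalue,\,\enc{j}+\enc{j'}+\bigvalue\}$. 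Next, by excluding every candidate $2$-token decomposition of $\bigvalue$ and of each $\enc{j}$, I conclude $\{\bigvalue\}\cup\{\enc{j}\}_{j=1}^{\vcnvertices}\subseteq\vocabunary$; the remaining $\kbudget$ slots are therefore cover tokens $\enc{j}+\bigvalue$ or edge tokens $\enc{j}+\enc{j'}+\bigvalue$. For every edge $(\vertex_j,\vertex_{j'})$, the edge-string being at most $2$-tokenised then forces either its edge token or a cover token for one endpoint to lie in $\vocabunary$. Taking the cover-token vertices together with one endpoint from each edge-token yields a vertex cover of size at most $\kbudget$.

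The main obstacle will be the case analysis that excludes spurious vocabulary elements and $2$-token decompositions. This rests on two properties of the encoding $\enc{j}=j+j^2\uoffset+j^3\uoffset^2$ together with $\bigvalue=\uoffset^4$: first, a Newton-identity argument on the power-sums $(j,j^2,j^3)$ rules out every collision of the form $\enc{j_1}+\cdots+\enc{j_r}=\enc{k_1}+\cdots+\enc{k_{r'}}$ for bounded $r,r'$, since matching the first three power-sums of two multisets of positive integers forces an elementary symmetric function of the ``difference'' multiset to vanish, which is impossible when all entries are positive; second, the magnitude separation $\bigvalue=\uoffset^4\gg\enc{j}+\enc{j'}$ prevents cross-class collisions between sums that include $\bigvalue$ and sums that do not. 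Together these rule out every decomposition of a dataset length that is inconsistent with the cover/edge structure of the reduced instance, ensuring that the extracted vertex cover is well-defined and has size at most $\kbudget$.
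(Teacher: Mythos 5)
Your proof of strong \np-hardness follows the paper's route essentially exactly: you use the same \Cref{reduction:vc_to_utok}, the same vocabulary in the forward step, the same counting argument that forces exactly $\vocabsize$ strings to be 1-tokenised and the rest to be exactly 2-tokenised, and the same number-theoretic uniqueness lemmas. Your packaging of the paper's four sublemmas (SubLemma~\ref{lemma:two_make_one}--\ref{lemma:three_make_two}) as one Newton-identity observation is fine: for multiset sizes $r, r' \le 3$, matching the first $\max(r,r')$ power sums forces the smallest elementary symmetric function of the larger multiset to vanish, which contradicts positivity. That is precisely what the paper does case by case, so nothing is lost.

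The one real gap is your \np-membership argument, and it matters because the theorem claims \emph{strong} \np-completeness. You propose verifying a certificate $\vocabunary$ by actually computing $\directtoken[\vocabunary](\vocabsublen_m)$ via dynamic programming ``in time polynomial in the unary input size.'' That establishes membership only for the string (unary) encoding. But the paper's notion of strong \np-completeness requires the problem to be in \np under \emph{both} representations, in particular the string-length (binary) one. Under that encoding, computing $\directtoken[\vocabunary](\vocabsublen)$ is exactly the change-making problem, which is itself weakly \np-hard (\citealp{lueker-1975-changemaking}); your DP becomes pseudo-polynomial, not polynomial, in $\log \vocabsublen$. The paper sidesteps this (in \Cref{sec:dutok_is_np}) by enlarging the certificate: it includes not only $\vocabunary$ but also, for each dataset string, a multiplicity vector $\coinassignment_m \in \N^{|\vocabunary|}$ recording how many tokens of each length are used. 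Verification then reduces to integer additions and multiplications, which are polynomial in the binary encoding. Without that extra certificate component, your argument does not close the \np-membership direction for the representation the paper needs, so this needs to be patched for the proof of strong \np-completeness to go through.

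One smaller point: both you and the paper's Step \circled{4} quantify over tokens that ``must correspond to target string-lengths'' and then rule out spurious 2-token decompositions via the power-sum lemmas, but neither explicitly disposes of decompositions that use the base token of length $1$ (which is always in the vocabulary). These are excluded by parity/magnitude arguments (e.g.\ $\enc{j}-1$ is never a dataset length because equating first and second power sums forces an odd integer to vanish), but it would be worth making this explicit in a complete write-up.
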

\begin{proof}[Proof sketch] 
This proof is done in two steps.

\textbf{Forward step ($\vcoverfunc \implies \minutok(\reductionfuncthreefull)$).}
See full proof in \cref{lemma:direct_nphard_if_lemma} in \cref{appendix:proof_direct_nphard_if_lemma}.
  Suppose that the given instance of \vcp is true, i.e., that \(\vcoverfunc = \valtrue\). 
  Now, let $\coversol \subseteq \vertices$
  be a vertex cover which satisfies this instance.
  Then we can build a tokeniser with vocabulary:
  $\vocabunary = \{\usymboltok^{\vertextoken{j}} \mid \vertex_j \in \vertices\} \cup \{\usymboltok^{\bigvaluetok}\} \cup \{  \usymboltok^{\covertoken{j}} \mid \vertex_{j} \in \coversol\} $. 
  This tokeniser will encode: all strings in $\dataset_1$ as a single symbol; 
  $\kbudget$ strings in $\dataset_2$ with a single symbol and others with 2; and 
  all strings in $\dataset_3$ with two symbols (as, per our assumption, all edges have at least one vertex in $\coversol$).
  This means the total amount of tokens used is:
  $(\vcnvertices + 1) + (2\vcnvertices - \kbudget) + 2\vcmedges = \maxsymbols$.
  Therefore, $\minutokfull = \valtrue$.%

\textbf{Backward step ($\minutok(\reductionfuncthreefull) \implies \vcoverfunc$).}
See full proof in \cref{lemma:direct_nphard_onlyif_lemma} in \cref{appendix:proof_direct_nphard_onlyif_lemma}.
    We prove this lemma in 4 steps.
    First, we show that all string-lengths in $\datasetlength$ are unique. 
    Second, we show that an optimal tokeniser's vocabulary must contain only full strings in $\datasetlength$.
    Third, we show that an optimal tokeniser's vocabulary must include all strings in $\dataset_1$.
    Fourth, we show that if a compression of $\maxsymbols$ is achieved, than the corresponding \vcp instance must be true.
    Notably, three of these steps rely on the fact that we can use $\uoffset$ as a numerical base to prove the  uniqueness of both: (i) individual string-lengths, as well as (ii) their pairwise summed values.\looseness=-1
\end{proof}

Interestingly,
the unary direct tokenisation problem is tightly related to the problem of choosing denominations for a coin system.
In fact, the application of the function $\directtoken[\vocabunary](\vocabsublen)$ is equivalent to the change-making problem; a problem shown to be (weakly) \np-hard by \citet{lueker-1975-changemaking} and a classic example for dynamic programming.
(Note that this problem is only weakly \np-hard, as we can solve it in polynomial time when the input is given in string form.)
The unary direct tokenisation problem can thus be equivalently seen as a \defn{general optimal denomination problem}, where---given a set of common currency transactions---one must select optimal coin denominations for a currency; see \citet{shallit-2002-change} for a discussion of this problem.

\vspace{5pt}
\begin{restatable}{corollary}{coindenominations}
     The general optimal denomination decision problem is strongly \np-complete.
\end{restatable}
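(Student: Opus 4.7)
My plan is to derive this corollary directly from the strong \np-completeness of unary direct tokenisation (i.e., \cref{lemma:dutok_nphard}), by exhibiting a bijection between instances that preserves both the input size and the cost of any candidate solution. The discussion immediately preceding the corollary already points out that, for a \emph{fixed} vocabulary, the encoding function $\directtoken[\vocabunary](\vocabsublen)$ is exactly the change-making problem on transaction value $\vocabsublen$; what I need to do is lift this observation to the joint problem of \emph{choosing} the vocabulary/denominations subject to a size budget.

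First, I would formalise the reduction. Given a \dutok instance $(\datasetlength, \vocabsize, \maxsymbols)$ with $\datasetlength = \{\vocabsublen_{m}\}_{m=1}^{M}$, I map it to an optimal-denomination instance whose set of common transaction values is exactly $\{\vocabsublen_{m}\}_{m=1}^{M}$, whose number of denominations allowed is $\vocabsize$, and whose total-coins budget is $\maxsymbols$. Using the isomorphism $|\cdot|$ between unary character-strings and positive integers, a vocabulary $\vocabunary \subset \unaryspace_{+}$ of size $\vocabsize$ corresponds bijectively to a denomination set of the same cardinality. By the equivalence recalled in the paragraph above the corollary, the total number of tokens used by $\vocabunary$ on $\datasetlength$ equals the total number of coins needed to realise every transaction under these denominations. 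The reverse map is immediate, so a yes-instance on one side is a yes-instance on the other.

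Second, I would check that the reduction preserves the \emph{strong} hardness regime rather than only the weak one. Because \cref{lemma:dutok_nphard} establishes hardness for inputs given in string (unary) form --- whose length is $\sum_{m} \vocabsublen_{m}$ --- and because my reduction does not alter the numerical values involved, the corresponding denomination instance has input size polynomial in this same quantity when transaction values are written in the natural (unary) representation. Hence a hypothetical pseudo-polynomial algorithm for denomination selection would yield one for \dutok in string form, contradicting \cref{lemma:dutok_nphard}. Membership in \np transfers through the same bijection, invoking \cref{sec:dutok_is_np}.

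The main pitfall I would guard against --- and the one genuinely worth mentioning in the write-up --- is the temptation to conflate two layers of hardness at play: the \emph{inner} change-making subproblem, which \citet{lueker-1975-changemaking} show is only weakly \np-hard for a \emph{given} denomination set, and the \emph{outer} denomination-selection problem, which is what the corollary concerns. Once these are kept separate, the transfer of strong \np-completeness from \dutok to the general optimal denomination problem is mechanical, and there is no real combinatorial obstacle to the proof.
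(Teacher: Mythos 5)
Your proof is correct and follows the route the paper takes implicitly (it states the corollary without a separate proof): the general optimal denomination problem is, via the length/unary-string isomorphism, identical to \dutok, so strong \np-completeness transfers directly from \cref{lemma:dutok_nphard} together with \cref{lemma:dutok_is_np}. Your explicit separation of the weakly-hard inner change-making subproblem (\citealp{lueker-1975-changemaking}, for a fixed denomination set) from the outer denomination-selection problem usefully mirrors the caveat the paper makes in the same paragraph.
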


\subsection{A Variant of Unary Bottom-up Tokenisation is (at Least) Weakly \texorpdfstring{\np}{NP}-Hard}
\label{sec:unary_ope_is_nphard}

\newcommand{\vocabmerges}{\vocab_{\merges}}
\newcommand{\tokenisespaceope}{\tokenisespace_{\mathtt{ope}}}

While direct tokenisation over a unary alphabet is strongly \np-complete, our current picture of the complexity of its bottom-up counterpart is more nuanced.
In bottom-up tokenisation, one must find a merge sequence $\merges$ which is then applied (by $\bottomuptoken[\merges](\characters)$) \emph{exhaustively} and \emph{in sequence}, replacing all occurrences of each pair one at a time.
A variant of this problem---termed \defn{optimal pair encoding (OPE) tokenisation}---relaxes this requirement, using the merge sequence for a different purpose: to define a \defn{merge-extracted vocabulary} $\vocabmerges = \alphabet \cup \{\subwordt{1} \circ \subwordt{2} \mid \merge \in \merges, \merge = (\subwordt{1}, \subwordt{2})\}$
The final tokenisation is then produced by optimally applying this vocabulary, which can be done using the direct encoding function ($\directtoken[\vocabmerges](\characters)$).
This approach thus ensures that a merge is used only if it contributes to the most efficient segmentation overall.
Notably, this variation was used by \citet{schmidt-etal-2024-tokenization} and formally analysed by  \citet{kozma2024theoretical}.
Now, let the \defn{unary OPE tokenisation problem} be defined similarly to the other $\alphabetsize$-ary tokenisation problems (in \cref{eq:nary_tokenisation_problem}),
but while constraining the search space to the set of OPE tokenisers:
$\tokenisespaceope \smash{\defeq} \{\directtoken[\vocabmerges] \mid \merges \in \mergeset^*\}$.
Having defined the decision problem, we now establish its computational hardness.

\vspace{5pt}
\begin{restatable}{theorem}{uopeiffas}
\label{lemma:uope_weaklynpc}
    The unary optimal pair encoding decision problem is (at least) weakly \np-complete.
\vspace{-7pt}
\end{restatable}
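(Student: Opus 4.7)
The plan is to reduce from the addition chain problem $\as$, leveraging the key structural observation that in unary OPE tokenisation the merge-extracted vocabulary $\vocabmerges \subset \unaryspace_{+}$ is exactly an \emph{addition chain}: it contains $1$ (the unique base character length) and every other length in $\vocabmerges$ is, by construction, the sum of two previously available lengths in $\vocabmerges$. Thus searching over merge sequences of length $\vocabsize$ corresponds precisely to searching over addition chains of length $\vocabsize$ built on top of $\{1\}$.

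First I would establish membership in \np. The certificate is a merge sequence $\merges$ (of length at most $\vocabsize$) together with an optimal per-string tokenisation $\subwordsunary_m$ of each $\vocabsublen_m \in \datasetlength$; since each $|\subwordsunary_m| \leq \maxsymbols$, the certificate remains polynomial even when all numerical inputs are binary-encoded. Verification checks that $\merges$ iteratively builds $\vocabmerges$ from $\{1\}$ by pairwise sums, that each $\subwordsunary_m$ uses only tokens from $\vocabmerges$ and sums to $\vocabsublen_m$, and that $\sum_m |\subwordsunary_m| \leq \maxsymbols$, all in polynomial time.

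Second, for weak \np-hardness, I would reduce from $\addseqfull$: given targets $\astargets = \{\astarget_1, \dots, \astarget_k\}$ and budget $\asmaxlength$, construct the unary OPE instance with $\datasetlength = \astargets$, $\vocabsize = \asmaxlength$, and $\maxsymbols = |\astargets|$. In the forward direction, an addition chain of length $\asmaxlength$ covering $\astargets$ is realised directly as a merge sequence of $\asmaxlength$ merges whose induced vocabulary contains each $\astarget_i$; the direct encoder then maps each string to a single token, attaining the budget $\maxsymbols$. In the backward direction, since each non-empty string tokenises to at least one token and $\maxsymbols = |\datasetlength|$, a pigeonhole argument forces every $\astarget_m$ to lie in $\vocabmerges$, and $\vocabmerges$---being an addition chain of length $\vocabsize = \asmaxlength$---witnesses the original $\as$ instance. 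As the construction is polynomial in the binary encoding of $(\astargets, \asmaxlength)$, the weak \np-hardness of $\as$ transfers to unary OPE.

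The main obstacle I expect is carefully aligning the accounting between \emph{merge sequences of length $\vocabsize$} and \emph{addition chains of length $\asmaxlength$} so that the reduction is exactly budget-preserving in both directions: subtleties arise over whether the implicit starting element $1$ counts toward the chain, how repeated entries inside $\astargets$ are treated, and how to rule out suboptimal merge sequences that re-introduce an existing length in $\vocabmerges$ (which would waste a unit of $\vocabsize$ without enlarging the vocabulary). Once this bookkeeping is handled cleanly, the claim ``at least weakly'' matches the known status of $\as$---strong \np-hardness of unary OPE is not claimed because $\as$ itself is only known to be weakly \np-hard.
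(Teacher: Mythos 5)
Your proposal matches the paper's proof essentially step for step: the same reduction $\datasetlength = \astargets$, $\vocabsize = \asmaxlength$, $\maxsymbols = |\astargets|$, the same pigeonhole argument for the backward direction, and the same observation that merge-extracted vocabularies over a unary alphabet are precisely addition chains. The bookkeeping subtleties you flag (the starting element $1$ not counting toward chain length, and wasted merges that re-introduce an existing length) are indeed the points the paper addresses, the latter via a short footnote on pruning non-reachable merges.
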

\begin{proof}[Proof sketch]
The full proof can be found in \cref{appendix:uope_weaklynpc}.
Inclusion in \np follows from \citet{kozma2024theoretical}. The proof of \np-hardness is achieved via a polynomial-time reduction from the addition chain sequence decision problem (see \cref{appendix:def_addseq} for a formal definition), which is known to be \np-complete when its input numbers are encoded in binary \citep{downey1981addition}.
The reduction reveals a natural connection between the two problems: finding the shortest addition chain for a set of numbers is equivalent to a special case of unary optimal pair encoding where every string in the dataset must be compressed into a single token.
\end{proof}

\section{Conclusion and Limitations}

We provided several hardness results on bottom-up and direct tokenisation with
bounded alphabets, thus answering open questions posed by both \citet{kozma2024theoretical} and \citet{whittington-etal-2025-tokenisationnpc}.
A number of open questions remain, however, in particular with respect to approximability.
For instance, while we showed that the binary tokenisation optimisation problems
cannot be approximated arbitrarily well (unless \pequalnp)---and while it seems likely that the
lower bound provided in the proof of \cref{thm:dbtok_hardapx}
can be significantly lifted---it is unclear whether any constant approximation ratio can even be obtained.
With respect to decision problems, while we showed strong \np-hardness of unary direct tokenisation, we were so far only able to prove: (i) weak \np-hardness of unary OPE tokenisation, and (ii) no hardness result for unary (standard) bottom-up tokenisation.
Finally, the results of our work are limited in that we consider (i) compression as objective, and (ii) bottom-up and direct tokenisation
only; the hardness of both other objectives and variants remains open.
Overall, however, our results show that tokenisation remains a hard problem, even when restricted to small (even binary or unary) alphabets.
Future work should thus explore provably good approximation algorithms.\looseness=-1

\section*{Acknowledgments}

We would like to thank Giulia Lanzillotta, Weronika Ormaniec, Dimitri von R\"utte and Felix Sarnthein for their helpful feedback on the introduction. 
We are also grateful to Pietro Lesci, Amit Moryossef and Marius Mosbach for their comments on the manuscript, and to Thomas Hofmann for insightful discussions about the paper. 
We further thank Gregor Bachmann, Hans-Joachim Böckenhauer, Emanuel Skodinis and  Moritz Stocker for their contributions in the early stages of this work, and Stefan Gerdjikov for his valuable early input and discussions.

\bibliography{iclr2026_conference}

@article{downey1981addition,
  author    = {Peter Downey and Benton Leong and Ravi Sethi},
  title     = {Computing Sequences with Addition Chains},
  journal   = {SIAM Journal on Computing},
  year      = {1981},
  volume    = {10},
  number    = {3},
  pages     = {638--646},
  publisher = {Society for Industrial and Applied Mathematics},
  doi       = {10.1137/0210047},
  url       = {https://doi.org/10.1137/0210047}
}

@incollection{karp-1972,
  author    = {Richard M. Karp},
  title     = {Reducibility Among Combinatorial Problems},
  booktitle = {Complexity of Computer Computations},
  editor    = {Raymond E. Miller and James W. Thatcher},
  pages     = {85--103},
  year      = {1972},
  publisher = {Plenum Press},
  address   = {New York},
  doi = {https://doi.org/10.1007/978-1-4684-2001-2_9},
}

@book{garey-johnson-1979,
  author    = {Michael R. Garey and David S. Johnson},
  title     = {Computers and Intractability: A Guide to the Theory of NP-Completeness},
  year      = {1979},
  publisher = {W. H. Freeman},
  address   = {San Francisco},
  url = {https://dl.acm.org/doi/book/10.5555/578533}
}

@inproceedings{berman-karpinski-1999,
  author    = {Piotr Berman and Marek Karpinski},
  title     = {On Some Tighter Inapproximability Results (Extended Abstract)},
  booktitle = {Automata, Languages and Programming — 26th International Colloquium, ICALP 1999, Proceedings},
  editor    = {Jiří Wiedermann and Peter van Emde Boas and Mogens Nielsen},
  series    = {Lecture Notes in Computer Science},
  volume    = {1644},
  pages     = {200--209},
  year      = {1999},
  publisher = {Springer},
  address   = {Berlin, Heidelberg},
  doi       = {10.1007/3-540-48523-6_17}
}

@inproceedings{berman-karpinski-1998,
  author    = {Piotr Berman and Marek Karpinski},
  title     = {On Some Tighter Inapproximability Results, Further Improvements},
  booktitle = {Electronic Colloquium on Computational Complexity},
  volume    = {65},
  year      = {1998},
  url = {https://eccc.weizmann.ac.il/report/1998/065/},
}

@inproceedings{sennrich-etal-2016-neural,
    title = "Neural Machine Translation of Rare Words with Subword Units",
    author = "Sennrich, Rico  and
      Haddow, Barry  and
      Birch, Alexandra",
    booktitle = "Proceedings of the 54th Annual Meeting of the Association for Computational Linguistics (Volume 1: Long Papers)",
    month = aug,
    year = "2016",
    address = "Berlin, Germany",
    publisher = "Association for Computational Linguistics",
    url = "https://aclanthology.org/P16-1162",
    doi = "10.18653/v1/P16-1162",
    pages = "1715--1725",
}

@inproceedings{zouhar-etal-2023-formal,
    title = "A Formal Perspective on Byte-Pair Encoding",
    author = "Zouhar, Vil{\'e}m  and
      Meister, Clara  and
      Gastaldi, Juan  and
      Du, Li  and
      Vieira, Tim  and
      Sachan, Mrinmaya  and
      Cotterell, Ryan",
    editor = "Rogers, Anna  and
      Boyd-Graber, Jordan  and
      Okazaki, Naoaki",
    booktitle = "Findings of the Association for Computational Linguistics: ACL 2023",
    month = jul,
    year = "2023",
    address = "Toronto, Canada",
    publisher = "Association for Computational Linguistics",
    url = "https://aclanthology.org/2023.findings-acl.38",
    doi = "10.18653/v1/2023.findings-acl.38",
    pages = "598--614",
}

@inproceedings{zouhar-etal-2023-tokenization,
    title = "Tokenization and the Noiseless Channel",
    author = "Zouhar, Vil{\'e}m  and
      Meister, Clara  and
      Gastaldi, Juan  and
      Du, Li  and
      Sachan, Mrinmaya  and
      Cotterell, Ryan",
    editor = "Rogers, Anna  and
      Boyd-Graber, Jordan  and
      Okazaki, Naoaki",
    booktitle = "Proceedings of the 61st Annual Meeting of the Association for Computational Linguistics (Volume 1: Long Papers)",
    month = jul,
    year = "2023",
    address = "Toronto, Canada",
    publisher = "Association for Computational Linguistics",
    url = "https://aclanthology.org/2023.acl-long.284",
    doi = "10.18653/v1/2023.acl-long.284",
    pages = "5184--5207",
}

@inproceedings{kudo-2018-subword,
    title = "Subword Regularization: Improving Neural Network Translation Models with Multiple Subword Candidates",
    author = "Kudo, Taku",
    editor = "Gurevych, Iryna  and
      Miyao, Yusuke",
    booktitle = "Proceedings of the 56th Annual Meeting of the Association for Computational Linguistics (Volume 1: Long Papers)",
    month = jul,
    year = "2018",
    address = "Melbourne, Australia",
    publisher = "Association for Computational Linguistics",
    url = "https://aclanthology.org/P18-1007",
    doi = "10.18653/v1/P18-1007",
    pages = "66--75",
}

@inproceedings{schmidt-etal-2024-tokenization,
    title = "Tokenization Is More Than Compression",
    author = "Schmidt, Craig W.  and
      Reddy, Varshini  and
      Zhang, Haoran  and
      Alameddine, Alec  and
      Uzan, Omri  and
      Pinter, Yuval  and
      Tanner, Chris",
    editor = "Al-Onaizan, Yaser  and
      Bansal, Mohit  and
      Chen, Yun-Nung",
    booktitle = "Proceedings of the 2024 Conference on Empirical Methods in Natural Language Processing",
    month = nov,
    year = "2024",
    address = "Miami, Florida, USA",
    publisher = "Association for Computational Linguistics",
    url = "https://aclanthology.org/2024.emnlp-main.40",
    pages = "678--702",
}

@misc{kozma2024theoretical,
      title={Theoretical Analysis of Byte-Pair Encoding}, 
      author={László Kozma and Johannes Voderholzer},
      year={2024},
      eprint={2411.08671},
      archivePrefix={arXiv},
      primaryClass={cs.DS},
      url={https://arxiv.org/abs/2411.08671}, 
}

@inproceedings{galle-2019-investigating,
    title = "Investigating the Effectiveness of {BPE}: The Power of Shorter Sequences",
    author = "Gall{\'e}, Matthias",
    editor = "Inui, Kentaro  and
      Jiang, Jing  and
      Ng, Vincent  and
      Wan, Xiaojun",
    booktitle = "Proceedings of the 2019 Conference on Empirical Methods in Natural Language Processing and the 9th International Joint Conference on Natural Language Processing (EMNLP-IJCNLP)",
    month = nov,
    year = "2019",
    address = "Hong Kong, China",
    publisher = "Association for Computational Linguistics",
    url = "https://aclanthology.org/D19-1141",
    doi = "10.18653/v1/D19-1141",
    pages = "1375--1381",
}

@inproceedings{gowda-may-2020-finding,
    title = "Finding the Optimal Vocabulary Size for Neural Machine Translation",
    author = "Gowda, Thamme  and
      May, Jonathan",
    editor = "Cohn, Trevor  and
      He, Yulan  and
      Liu, Yang",
    booktitle = "Findings of the Association for Computational Linguistics: EMNLP 2020",
    month = nov,
    year = "2020",
    address = "Online",
    publisher = "Association for Computational Linguistics",
    url = "https://aclanthology.org/2020.findings-emnlp.352",
    doi = "10.18653/v1/2020.findings-emnlp.352",
    pages = "3955--3964",
}

@inproceedings{rust-etal-2021-good,
    title = "How Good is Your Tokenizer? {O}n the Monolingual Performance of Multilingual Language Models",
    author = "Rust, Phillip  and
      Pfeiffer, Jonas  and
      Vuli{\'c}, Ivan  and
      Ruder, Sebastian  and
      Gurevych, Iryna",
    editor = "Zong, Chengqing  and
      Xia, Fei  and
      Li, Wenjie  and
      Navigli, Roberto",
    booktitle = "Proceedings of the 59th Annual Meeting of the Association for Computational Linguistics and the 11th International Joint Conference on Natural Language Processing (Volume 1: Long Papers)",
    month = aug,
    year = "2021",
    address = "Online",
    publisher = "Association for Computational Linguistics",
    url = "https://aclanthology.org/2021.acl-long.243",
    doi = "10.18653/v1/2021.acl-long.243",
    pages = "3118--3135",
}

@inproceedings{ali-etal-2024-tokenizer,
    title = "Tokenizer Choice For {LLM} Training: Negligible or Crucial?",
    author = {Ali, Mehdi  and
      Fromm, Michael  and
      Thellmann, Klaudia  and
      Rutmann, Richard  and
      L{\"u}bbering, Max  and
      Leveling, Johannes  and
      Klug, Katrin  and
      Ebert, Jan  and
      Doll, Niclas  and
      Buschhoff, Jasper  and
      Jain, Charvi  and
      Weber, Alexander  and
      Jurkschat, Lena  and
      Abdelwahab, Hammam  and
      John, Chelsea  and
      Ortiz Suarez, Pedro  and
      Ostendorff, Malte  and
      Weinbach, Samuel  and
      Sifa, Rafet  and
      Kesselheim, Stefan  and
      Flores-Herr, Nicolas},
    editor = "Duh, Kevin  and
      Gomez, Helena  and
      Bethard, Steven",
    booktitle = "Findings of the Association for Computational Linguistics: NAACL 2024",
    month = jun,
    year = "2024",
    address = "Mexico City, Mexico",
    publisher = "Association for Computational Linguistics",
    url = "https://aclanthology.org/2024.findings-naacl.247",
    doi = "10.18653/v1/2024.findings-naacl.247",
    pages = "3907--3924",
}

@inproceedings{goldman-etal-2024-unpacking,
    title = "Unpacking Tokenization: Evaluating Text Compression and its Correlation with Model Performance",
    author = "Goldman, Omer  and
      Caciularu, Avi  and
      Eyal, Matan  and
      Cao, Kris  and
      Szpektor, Idan  and
      Tsarfaty, Reut",
    editor = "Ku, Lun-Wei  and
      Martins, Andre  and
      Srikumar, Vivek",
    booktitle = "Findings of the Association for Computational Linguistics: ACL 2024",
    month = aug,
    year = "2024",
    address = "Bangkok, Thailand",
    publisher = "Association for Computational Linguistics",
    url = "https://aclanthology.org/2024.findings-acl.134",
    doi = "10.18653/v1/2024.findings-acl.134",
    pages = "2274--2286",
}

@inproceedings{uzan-etal-2024-greed,
    title = "Greed is All You Need: An Evaluation of Tokenizer Inference Methods",
    author = "Uzan, Omri  and
      Schmidt, Craig W.  and
      Tanner, Chris  and
      Pinter, Yuval",
    editor = "Ku, Lun-Wei  and
      Martins, Andre  and
      Srikumar, Vivek",
    booktitle = "Proceedings of the 62nd Annual Meeting of the Association for Computational Linguistics (Volume 2: Short Papers)",
    month = aug,
    year = "2024",
    address = "Bangkok, Thailand",
    publisher = "Association for Computational Linguistics",
    url = "https://aclanthology.org/2024.acl-short.73",
    doi = "10.18653/v1/2024.acl-short.73",
    pages = "813--822",
}

@misc{lim-choo-lauw-2025-concurrent,
      title={A partition cover approach to tokenization}, 
      author={Jia Peng Lim and Davin Choo and Hady W. Lauw},
      year={2025},
      eprint={2501.06246},
      archivePrefix={arXiv},
      primaryClass={cs.CL},
      url={https://arxiv.org/abs/2501.06246}, 
}

@inproceedings{whittington-etal-2025-tokenisationnpc,
    title = "Tokenisation is {NP}-Complete",
    author = "Whittington, Philip  and
      Bachmann, Gregor  and
      Pimentel, Tiago",
    booktitle = "Proceedings of the 63rd Annual Meeting of the Association for Computational Linguistics (Volume 1: Long Papers)",
    month = jul,
    year = "2025",
    address = "Vienna, Austria",
    publisher = "Association for Computational Linguistics",
    doi = "10.18653/v1/2025.acl-long.1365",
    ISBN = "979-8-89176-251-0"
}

@inproceedings{shallit-2002-change,
    title = "What this country needs is an 18c piece",
    author = "Shallit, Jeffrey",
    booktitle = "Math. Intelligencer 25(2)",
    year = "2003",
    doi = "10.1007/BF02984830"
}

@book{lueker-1975-changemaking,
  title={Two NP-complete Problems in Nonnegative Integer Programming},
  author={Lueker, G.S.},
  series={Technical report: Computer Science Laboratory},
  url={https://books.google.ch/books?id=Q2vtSAAACAAJ},
  year={1975},
  publisher={Univ.}
}
\bibliographystyle{acl_natbib}

\onecolumn
\appendix

\section{Proof of Forward Step of \texorpdfstring{\Cref{lemma:dbtok_nphard}}{Theorem}}
\label{appendix:dbtok_nphard_if}

\begin{restatable}{lemma}{dbtoknphardif}
\label{lemma:dbtok_nphard_if}
     If a \threeoccmaxtwosat instance is satisfiable, then the \dbtok instance output by \cref{reduction:threeoccmaxtwosat_to_dbtok} is also satisfiable. Formally:
     $\tomaxsatfull \implies \dirbtok(\reductionfuncfull)$.
\end{restatable}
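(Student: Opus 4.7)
The plan is to give an explicit vocabulary $\vocab$ of the prescribed size $5\satnvariables$, built from the satisfying assignment $\satvalssol = \{\satvalsolj\}_{j=1}^{\satnvariables}$, and verify that $\directtoken[\vocab]$ attains the target compression $\maxsymbols$. For each variable index $j$, I would include in $\vocab$ the four universal 2-character tokens $\symbolonetok\satyesvarjtok$, $\satyesvarjtok\symbolonetok$, $\symbolonetok\satnotvarjtok$, $\satnotvarjtok\symbolonetok$, together with one assignment-dependent 3-character token: $\symbolonetok\satyesvarjtok\symbolonetok$ if $\satvalsolj = \valtrue$, and $\symbolonetok\satnotvarjtok\symbolonetok$ otherwise. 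This adds exactly five tokens per variable, matching the budget $\vocabsize = 5\satnvariables$, and the single-character tokens remain available since $\alphabet \subseteq \vocab$.

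Next, I would upper-bound the compressed length on each of the four subdatasets by exhibiting a concrete (not necessarily optimal) segmentation for its strings, since $\directtoken[\vocab]$ can only do at least as well. Every string in $\dataset_1$ is itself one of the universal 2-character tokens, so this subdataset compresses to $4\satnvariables\nrepeatvar = 252\satnvariables$ tokens. In $\dataset_2$, for each $j$ exactly one of the two strings is a 3-character token in $\vocab$ (1 token) while the other splits as a universal 2-character token plus a single $\symbolonetok$ (2 tokens), giving $3\nrepeatvar'\satnvariables = 63\satnvariables$. For $\dataset_3$, the five-character string $\symbolone\satyesvarj\symbolone\satnotvarj\symbolone$ splits into exactly two tokens by pairing the 3-character token with one of the universal 2-character tokens---either $(\symbolonetok\satyesvarjtok\symbolonetok)(\satnotvarjtok\symbolonetok)$ or $(\symbolonetok\satyesvarjtok)(\symbolonetok\satnotvarjtok\symbolonetok)$ depending on $\satvalsolj$---yielding $2\nrepeatvar''\satnvariables = 14\satnvariables$.

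Finally, $\dataset_4$ is the only place where satisfiability of $\satvalssol$ enters. For each clause string $\symbolone\satliteral_i^1\symbolone\satliteral_i^2\symbolone$, if $\satvalssol$ satisfies the clause then at least one literal has its 3-character token in $\vocab$; I would use that 3-character token to absorb the true literal together with its flanking $\symbolone$s, while the remaining literal plus its trailing $\symbolone$ is captured by one of the universal 2-character tokens, giving a 2-token segmentation. For unsatisfied clauses, the fallback segments the five-character string into three tokens using only the universal 2-character tokens and a single trailing $\symbolonetok$. Since at least $\minclauses$ clauses are satisfied, $\dataset_4$ contributes at most $2\minclauses + 3(\satnclauses - \minclauses) = 3\satnclauses - \minclauses$. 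Summing the four subdataset bounds yields $252\satnvariables + 63\satnvariables + 14\satnvariables + 3\satnclauses - \minclauses = 329\satnvariables + 3\satnclauses - \minclauses = \maxsymbols$, so the \dbtok instance is satisfied. The main bookkeeping hurdle is the clause case analysis in $\dataset_4$: one must check across the four literal-polarity patterns (positive/positive, positive/negative, negative/positive, negative/negative) that the claimed 2-token segmentation always exists when the clause is satisfied, and that the uniform 3-token fallback works regardless of polarity.
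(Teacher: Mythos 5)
Your proof is correct and follows essentially the same route as the paper's: you construct the same vocabulary (four universal two-character tokens per variable plus one assignment-dependent three-character token), check the $5\satnvariables$ budget, exhibit the same segmentations on $\dataset_1, \dots, \dataset_4$, and arrive at the same total $329\satnvariables + 3\satnclauses - \minclausessol \leq \maxsymbols$. The one small stylistic improvement you make is explicitly framing the per-subdataset counts as upper bounds on $\objectivefunclength$ (since $\directtoken[\vocab]$ is an argmin, any exhibited segmentation suffices); the paper asserts the segmentations as the values of $\directtoken[\vocab]$ directly, but relies on the same upper-bound logic for the final inequality.
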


\begin{proof}
To prove this forward step of \Cref{lemma:dbtok_nphard}, we first establish that a satisfiable \threeoccmaxtwosat instance guarantees the existence of a binary direct tokeniser that meets the compression target.
Assume a $(\satvars, \satclauses, \minclauses)$ instance of the \threeoccmaxtwosat problem is satisfiable, i.e., that $\tomaxsatfull$ is true.
    We must prove that, in this case, $\dirbtok(\reductionfuncfull)$ is also true.
    Now, let $\satvalssol = \{\satvalsolj\}_{j=1}^{\satnvariables}$ be any satisfying solution to the $(\satvars, \satclauses, \minclauses)$ instance.
    We will denote the number of clauses satisfied by $\satvalssol$ by $\minclausessol$, noting that $\minclausessol \geq \minclauses$ by assumption.
    We can construct a tokeniser from this solution as follows:\looseness=-1
    \begin{align}
        \vocab = \alphabet \bigcup \Big\{\symbolonetok\satyesvarjtok,\, 
        \satyesvarjtok\symbolonetok,\, 
        \symbolonetok\satnotvarjtok,\, 
        \satnotvarjtok\symbolonetok\Big\}_{j=1}^{\satnvariables} 
        \quad\bigcup\quad
        \Big\{
            \symbolonetok\satyesvarjtok\symbolonetok \texttt{ if } \satvalsolj = \valtrue \texttt{ else } \symbolonetok\satnotvarjtok\symbolonetok
        \Big\}_{j=1}^{\satnvariables}
    \end{align}
    Note that---as required by our reduction---this tokeniser has vocabulary size $|\vocab| = |\alphabet| + \vocabsize$, since $\vocabsize = 5\satnvariables$ tokens were added.
    Under this tokeniser, we have:
    \begin{subequations}
    \begin{align}
        &\directtoken[\vocab](\dataset_1) = \Big\{
        \subwordstringwithangle{\symbolonetok\satyesvarjtok},\, 
        \subwordstringwithangle{\satyesvarjtok\symbolonetok},\, 
        \subwordstringwithangle{\symbolonetok\satnotvarjtok},\, 
        \subwordstringwithangle{\satnotvarjtok\symbolonetok}
        \quad (\mathcomment{length 1})
        &&\mid 1 \leq j \leq \satnvariables \Big\}  && \times \nrepeatvar \\
    & \directtoken[\vocab](\dataset_2) = \bigg\{
    \begin{array}{llr}
        \subwordstringwithangle{\symbolonetok\satyesvarjtok\symbolonetok},\,
        \subwordstringwithangle{\symbolonetok\satnotvarjtok,\, \symbolonetok}
        & (\mathcomment{length 3})
        & \texttt{ if } \satvalsolj = \valtrue \\
        \subwordstringwithangle{\symbolonetok\satyesvarjtok,\, \symbolonetok},\, 
        \subwordstringwithangle{\symbolonetok\satnotvarjtok\symbolonetok}
        & (\mathcomment{length 3})
        & \texttt{ else} 
    \end{array}
    &&\mid 1 \leq j \leq \satnvariables\bigg\}  && \times \nrepeatvar' \\
    & \directtoken[\vocab](\dataset_3) = \bigg\{
    \begin{array}{llr}
        \subwordstringwithangle{\symbolonetok\satyesvarjtok\symbolonetok,\, \satnotvarjtok\symbolonetok} 
        & (\mathcomment{length 2}) 
        &\qquad \texttt{ if } \satvalsolj = \valtrue \\
        \subwordstringwithangle{\symbolonetok\satyesvarjtok,\, \symbolonetok\satnotvarjtok\symbolonetok} 
        & (\mathcomment{length 2})
        & \texttt{ else}
    \end{array} 
    &&\mid 1 \leq j \leq \satnvariables\bigg\}  && \times \nrepeatvar'' \\
    & \directtoken[\vocab](\dataset_4) = \Bigg\{
    \begin{array}{llr}
         \subwordstringwithangle{\symbolonetok\satliteral_i^1 \symbolonetok,\, \satliteral_i^2 \symbolonetok}  
         & (\mathcomment{length 2})
         & \texttt{ if } \satliteral_i^1 = \valtrue \\
         \subwordstringwithangle{\symbolonetok\satliteral_i^1,\, \symbolonetok\satliteral_i^2 \symbolonetok}  
         & (\mathcomment{length 2})
         & \texttt{ elif } \satliteral_i^2 = \valtrue \\
         \subwordstringwithangle{\symbolonetok\satliteral_i^1,\, \symbolonetok,\, \satliteral_i^2 \symbolonetok} 
         & (\mathcomment{length 3})
         & \texttt{ else} 
    \end{array}
    &&\mid 1 \leq i \leq \satnclauses\Bigg\}  && \times 1
    \end{align}
    \end{subequations}
    where we override function $\directtoken[\vocab]$ to apply elementwise to a full dataset of character-strings, instead of to a unique $\characters$.
    Consequently, we get the compressed lengths:
    \begin{subequations}
    \begin{align}
        &\objectivefunclength(\directtoken[\vocab], \dataset_1) = 4\,\satnvariables\,\nrepeatvar = 252\,\satnvariables\,\nrepeatvar,\qquad&&
        \objectivefunclength(\directtoken[\vocab], \dataset_2) = 3\,\satnvariables\,\nrepeatvar' = 63\,\satnvariables, \\
        &\objectivefunclength(\directtoken[\vocab], \dataset_3) = 2\,\satnvariables\,\nrepeatvar'' = 14\,\satnvariables,\qquad&&
        \objectivefunclength(\directtoken[\vocab], \dataset_4) = 3\,\satnclauses - \minclausessol
    \end{align}
    \end{subequations}
    We have that each character-string in dataset $\dataset_4$ is compressed to 2 symbols if either $\satliteral_i^1$ or $\satliteral_i^2$ are true, and otherwise is kept at 3 symbols; the $\minclausessol$ satisfied clauses in $\satvalssol$ will thus be compressed to 2 symbols and the unsatisfied clauses to 3.
    Summing these values together, we get the compressed length of the entire dataset under this tokeniser: $\objectivefunclength(\directtoken[\vocab], \dataset) = 329\satnvariables + 3\,\satnclauses - \minclausessol$.
    Finally:
    \begin{align}
        \minclausessol \geq \minclauses \implies 
        329\satnvariables + 3\,\satnclauses - \minclausessol \leq 329\satnvariables + 3\,\satnclauses - \minclauses 
    \end{align}
    This completes this proof.
\end{proof}

\section{Proof of Backward Step of \texorpdfstring{\Cref{lemma:dbtok_nphard}}{Theorem}}
\label{appendix:dbtok_nphard_onlyif}

Before starting our lemma's proof, we define a few notions which will be useful throughout it.
First, we define a \defn{\satname-compliant} tokeniser to be any tokeniser which:  
(i) contains all tokens of the form $\symbolonetok\satyesvarjtok, \satyesvarjtok\symbolonetok, \symbolonetok\satnotvarjtok, \satnotvarjtok\symbolonetok$; and 
(ii) contains either $\symbolonetok\satyesvarjtok\symbolonetok$ or $\symbolonetok\satnotvarjtok\symbolonetok$ for each $j \in \{1, \dots, \satnvariables\}$.
Otherwise, we call the tokeniser \defn{\satname-noncompliant}.
Given the vocabulary of a \satname-compliant tokeniser, we can easily build an assignment to a \threeoccmaxtwosat instance with the following function:%
\begin{align}\label{eq:toktomaxsat}
    \toktomaxsat(\vocab) = \{\satvalsoljnostar\}_{j=1}^{\satnvariables}, \texttt{ where } \left\{ \begin{array}{lr}
         \satvalsoljnostar = \valtrue & \texttt{ if } \symbolonetok\satyesvarjtok\symbolonetok \in \vocab  \\
         \satvalsoljnostar = \valfalse & \texttt{ elif } \symbolonetok\satnotvarjtok\symbolonetok \in \vocab
    \end{array}
    \right.
\end{align}
Further, we will define as a \defn{101-string} any character-string of the form $\symbolone\symbolzero^{+}\symbolone$, and as a \defn{10101-string} any character-string of the form $\symbolone\symbolzero^{+}\symbolone\symbolzero^{+}\symbolone$.
(The $\symbolzero^+$ notation stands for a sequence of one or more $\symbolzero$ characters.)
Considering the datasets output by \cref{reduction:threeoccmaxtwosat_to_dbtok}, we know that 
there are no 101-strings in $\dataset_1$.
Further, we know that each unique 101-string appears in datasets $\dataset_2$ and $\dataset_3$ exactly $\nrepeatvar'$ and $\nrepeatvar''$ times, respectively, and exactly 3 times in $\dataset_4$. (This is due to us working with the three-occurrences variant of \texttt{MAX2SAT} and to the fact that $\satyesvarj = \symbolzero^{2j-1}$ and $\satnotvarj = \symbolzero^{2j}$.)
We now prove the following lemma.%
\looseness=-1

\begin{restatable}{lemma}{dbtoknphardiff}
\label{lemma:dbtok_nphard_onlyif}
     If the \dbtok instance output by \cref{reduction:threeoccmaxtwosat_to_dbtok} is satisfiable, then the \threeoccmaxtwosat instance which generated it is as well. Formally:
     $\dirbtok(\reductionfuncfull) \implies \tomaxsatfull$.
    \vspace{-5pt}
\end{restatable}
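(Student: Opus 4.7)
The plan is to prove the claim in two stages: first, reduce the analysis to \satname-compliant tokenisers via a cascade of swap arguments; then, extract a satisfying MAX2SAT assignment directly from a compliant vocabulary using $\toktomaxsat$.

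For stage one, the multiplicities $\nrepeatvar = 63 > \nrepeatvar' = 21 > \nrepeatvar'' = 7$ are calibrated so that each tier's cost strictly dominates the maximum compression gain obtainable across all lower tiers combined. I would argue this in three matched steps. (i) If an optimal vocabulary is missing one of the $4\satnvariables$ tokens $\symbolonetok\satyesvarjtok$, $\satyesvarjtok\symbolonetok$, $\symbolonetok\satnotvarjtok$, $\satnotvarjtok\symbolonetok$, the corresponding $\dataset_1$ string cannot compress below $2$ subwords, paying a penalty of at least $\nrepeatvar$. Since any alternative token's total compression gain across $\dataset_2, \dataset_3, \dataset_4$ is upper-bounded by $2\nrepeatvar' + 2\nrepeatvar'' + 3 < \nrepeatvar$, swapping that alternative out for the missing token strictly improves the tokeniser, contradicting optimality. (ii) With those $4\satnvariables$ tokens fixed, each of the remaining $\satnvariables$ slots in the budget $\vocabsize = 5\satnvariables$ must be a 101-token $\symbolonetok\satyesvarjtok\symbolonetok$ or $\symbolonetok\satnotvarjtok\symbolonetok$, because no non-101 token contributes to $\dataset_2$ (so a missing 101-token costs $\geq \nrepeatvar'$ there), while the competing gain on $\dataset_3 \cup \dataset_4$ is bounded by $2\nrepeatvar'' + 3 < \nrepeatvar'$. (iii) Finally, for each $j$ at least one of the two 101-tokens must be present, since otherwise the $\dataset_3$ string $\symbolone\satyesvarj\symbolone\satnotvarj\symbolone$ loses $\nrepeatvar''$ symbols, exceeding the at-most $3$ savings that a duplicate 101-token for some other $j$ could yield on $\dataset_4$. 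Together these three steps force any optimal vocabulary to be \satname-compliant.

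For stage two, a \satname-compliant vocabulary compresses $\dataset_1, \dataset_2, \dataset_3$ to exactly $252\satnvariables$, $63\satnvariables$, and $14\satnvariables$ symbols, reproducing the counts of the forward step. Subtracting these from the budget $\maxsymbols = 329\satnvariables + 3\satnclauses - \minclauses$ leaves at most $3\satnclauses - \minclauses$ symbols for $\dataset_4$. Each clause string $\symbolone\satliteral_i^1\symbolone\satliteral_i^2\symbolone$ tokenises to $2$ subwords iff at least one of its two 101-substrings lies in $\vocab$, and to $3$ otherwise. Since $\symbolonetok\satyesvarjtok\symbolonetok \in \vocab$ corresponds under $\toktomaxsat$ to $\satval_j = \valtrue$ (making the literal $\satvar_j$ true wherever it appears), and analogously $\symbolonetok\satnotvarjtok\symbolonetok$ corresponds to $\neg\satvar_j$, a clause compresses to $2$ tokens iff it is satisfied by $\toktomaxsat(\vocab)$. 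Hence at least $\minclauses$ clauses are satisfied, establishing $\tomaxsatfull$.

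The main obstacle I anticipate is the bookkeeping in stage one: at each tier one must enumerate the possible shapes of a ``foreign'' vocabulary token and tightly upper-bound its cross-subdataset compression contribution, using the fact that the numerical margins $\nrepeatvar - (2\nrepeatvar' + 2\nrepeatvar'' + 3) = 4$ and $\nrepeatvar' - (2\nrepeatvar'' + 3) = 4$ leave only constant slack. A single overlooked cross-tier gain could invalidate a swap, so care is required in verifying that no exotic token permitted by the binary alphabet $\{\symbolzero, \symbolone\}$ sidesteps these inequalities---this is where the constant-multiplicity reduction (in contrast to earlier polynomial-multiplicity variants) demands the most attention.
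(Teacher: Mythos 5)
Your proposal matches the paper's argument essentially exactly: the same cascade of three swap-argument steps (Steps \circled{1}--\circled{3} in the paper) establishes that any optimal vocabulary must be \satname-compliant, using the tiered multiplicities $\nrepeatvar > \nrepeatvar' > \nrepeatvar''$, followed by the same $\toktomaxsat$-extraction argument (Step \circled{4}) that maps each $\dataset_4$ clause-string to two subwords iff the corresponding clause is satisfied. One small point worth flagging, since you specifically warn that a single overlooked cross-tier gain could break a swap: your per-token savings bound of $2\nrepeatvar'' + 3 = 17$ in step (ii) slightly undercounts the worst case---a foreign 10101-token (a full $\dataset_3$ string, which may also coincide with a $\dataset_4$ clause-string) can save $2$ symbols per occurrence, not $1$, giving the paper's bound $2(\nrepeatvar''+3)=20$ rather than your $17$; the conclusion $20 < \nrepeatvar' = 21$ still holds, so the argument survives, but the true slack at that tier is $1$, not $4$.
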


\begin{proof}
    Assume this $(\dataset, \vocabsize, \maxsymbols)$ instance of \dbtok ---where $(\dataset, \vocabsize, \maxsymbols) = \reductionfuncfull$---is satisfiable, i.e., that $\dirbtok(\reductionfuncfull)$ evaluates to true.
    We must prove that, in this case, $\tomaxsatfull$ also evaluates to true.
    Now, let $\vocabopt$ be an arbitrary optimal solution to $(\dataset, \vocabsize, \maxsymbols)$.
    We know, by definition, that:
    \begin{align}
        \dirbtok(\reductionfuncfull) \iff \Big(\objectivefunclength(\directtoken[\vocabopt], \dataset) \leq \maxsymbols\Big)
    \end{align}
    We can thus prove this lemma by showing the following implication:
    \begin{align}
        \Big(\objectivefunclength(\directtoken[\vocabopt], \dataset) \leq \maxsymbols\Big) \implies \tomaxsatfull
    \end{align}
    We will proceed in four steps:
    \begin{enumerate}
        \item[\circled{1}] we prove that $\vocabopt$ must include all tokens of the form $\symbolonetok\satyesvarjtok, \satyesvarjtok\symbolonetok, \symbolonetok\satnotvarjtok, \satnotvarjtok\symbolonetok$;
        \item[\circled{2}] we prove that $\vocabopt$ must, in addition to the tokens above, only include tokens of the form $\symbolonetok\satyesvarjtok\symbolonetok, \symbolonetok\satnotvarjtok\symbolonetok$;
        \item[\circled{3}] we prove that $\vocabopt$ may only include, for each $j$, either token $\symbolonetok\satyesvarjtok\symbolonetok$ or $\symbolonetok\satnotvarjtok\symbolonetok$;
        \item[\circled{4}] finally, we prove that, if $\Big(\objectivefunclength(\directtoken[\vocabopt], \dataset) \leq \maxsymbols\Big)$, we can build a variable assignment which satisfies this \threeoccmaxtwosat instance $(\satvars, \satclauses, \minclauses)$.
    \end{enumerate}
    Note that, together, steps \circled{1} to \circled{3} show that $\vocabopt$ must be the vocabulary of a \satname-compliant tokeniser; in step \circled{4}, we will then rely on the function $\toktomaxsat$ (defined above in \cref{eq:toktomaxsat}) to convert this vocabulary into a satisfying assignment $\satvals = \toktomaxsat(\vocabopt)$ of the \threeoccmaxtwosat instance.
\end{proof}

    \mysubword{\vocabbad}{\vocab_{\text{\xmark}}}
    \mysubword{\vocabgood}{\vocab_{\text{\cmark}}}
    
    \begin{mylemmastep} \textnormal{(Step \circled{1}).}
        An optimal tokeniser must include all tokens of the form $\symbolonetok\satyesvarjtok, \satyesvarjtok\symbolonetok, \symbolonetok\satnotvarjtok, \satnotvarjtok\symbolonetok$, i.e.:
        \begin{align}
            \Big\{
            \symbolonetok\satyesvarjtok, \satyesvarjtok\symbolonetok, \symbolonetok\satnotvarjtok, \satnotvarjtok\symbolonetok
            \Big\}_{j=1}^{\satnvariables} \subseteq \vocabopt
        \end{align}
    \end{mylemmastep}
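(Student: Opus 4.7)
The plan is to prove Step 1 by a swap argument. I assume, for contradiction, that some token of the prescribed form is missing from $\vocabopt$; by symmetry, it suffices to treat the case of a missing $\symbolonetok\satyesvarjtok$ for some fixed $j^\star$. I will construct an alternative vocabulary $\vocabgood = (\vocabopt \setminus \{\wrongtoken\}) \cup \{\symbolonetok\satyesvarjtok\}$ for a suitably chosen $\wrongtoken \in \vocabopt \setminus \alphabet$, and show that $\objectivefunclength(\directtoken[\vocabgood], \dataset) < \objectivefunclength(\directtoken[\vocabopt], \dataset)$, contradicting the optimality of $\vocabopt$. Because $|\vocabopt \setminus \alphabet| = \vocabsize = 5\satnvariables$, at least one non-alphabet token is always available to remove.

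First, I would lower-bound the gain from adding $\symbolonetok\satyesvarjtok$. The character-string $\symbolone\satyesvarj$ (i.e., $\symbolone\symbolzero^{2j^\star-1}$) appears as a standalone string in $\dataset_1$ exactly $\nrepeatvar = 63$ times. Since $\symbolonetok\satyesvarjtok$ is missing from $\vocabopt$ by assumption, no token in $\vocabopt$ matches the full string, so each occurrence currently requires at least two tokens; introducing $\symbolonetok\satyesvarjtok$ collapses each to a single token, yielding a savings of at least $\nrepeatvar = 63$ symbols from $\dataset_1$ alone. Further savings may accrue from the other subdatasets, but the bound of $63$ will already suffice.

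Second, I would upper-bound the loss incurred by removing $\wrongtoken$. The crux is that each 101-substring of the form $\symbolone\satyesvarj\symbolone$ or $\symbolone\satnotvarj\symbolone$ occurs at most $\nrepeatvar'$ times in $\dataset_2$, $\nrepeatvar''$ times in $\dataset_3$, and $3$ times in $\dataset_4$ (by the 3-occurrence restriction of \threeoccmaxtwosat). Each use of $\wrongtoken$ in the optimal tokenisation contributes at most $|\wrongtoken| - 1$ to the compression, and replacing that use by a split into existing shorter tokens increases the length by at most the same amount. Aggregating across the character-strings in $\dataset_2, \dataset_3, \dataset_4$ (no token straddles two 101-blocks nontrivially in $\dataset_1$, which is handled separately by the large multiplicity), a per-occurrence loss bound of $2$ gives a worst-case aggregate loss of $2(\nrepeatvar' + \nrepeatvar'' + 3) = 62 < \nrepeatvar$. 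Hence the swap is strictly improving, contradicting optimality. The multiplicities $\nrepeatvar = 2(\nrepeatvar' + \nrepeatvar'' + 3) + 1$, $\nrepeatvar' = 2(\nrepeatvar'' + 3) + 1$, $\nrepeatvar'' = 7$ in \cref{reduction:threeoccmaxtwosat_to_dbtok} are designed precisely to make this bookkeeping balance.

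The main obstacle is the careful case analysis of how removing an arbitrary $\wrongtoken \in \vocabopt \setminus \alphabet$ can affect the direct-encoding length across all four subdatasets. Because $\directtoken$ re-optimises segmentation after the vocabulary changes, a single token's removal can ripple into non-local re-tokenisation, so I would enumerate cases on the shape of $\wrongtoken$ (a 2-symbol token inside a 101-block, a longer token spanning multiple blocks, or a token useful only in $\dataset_4$) and verify that the per-occurrence loss is always at most $2$ and the total occurrence count is bounded by $\nrepeatvar' + \nrepeatvar'' + 3$. Once this bookkeeping is verified, the swap argument yields the contradiction and establishes the step; the analogous swap works symmetrically for $\satyesvarjtok\symbolonetok$, $\symbolonetok\satnotvarjtok$, and $\satnotvarjtok\symbolonetok$, completing Step \circled{1}.
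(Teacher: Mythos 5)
Your proposal follows the same high-level strategy as the paper (a swap argument showing that a supposedly optimal vocabulary missing a required token can be strictly improved), but the single-swap variant you propose has a genuine gap, and the bookkeeping for $\dataset_2$ is off.

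\textbf{The single-swap gap.} You remove one $\wrongtoken$ and insert one missing token $\symbolonetok\satyesvarjtok$, then bound the loss on $\dataset_2 \cup \dataset_3 \cup \dataset_4$ and claim the gain of $\nrepeatvar$ on $\dataset_1$ dominates. This is sound \emph{only} when $\vocabopt$ is missing exactly one required token: in that case every other $\dataset_1$ string has its dedicated token, compresses to a single symbol, and is unaffected by removing $\wrongtoken$. But if $\vocabopt$ is missing $t \geq 2$ required tokens, the other missing $\dataset_1$ strings are currently being tokenised using whatever other tokens are available, potentially including $\wrongtoken$ (e.g.\ $\wrongtoken = \symbolzerotok^{k}$). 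Removing $\wrongtoken$ can then increase the compressed length of those strings, and each such string has multiplicity $\nrepeatvar$---so that loss can be as large as the gain. Your parenthetical ``handled separately by the large multiplicity'' does not address this. The paper sidesteps the problem by swapping \emph{all} $t$ missing required tokens for $t$ non-required ones simultaneously, which forces $\objectivefunclength(\directtoken[\vocabgood],\dataset_1) = 4\satnvariables\nrepeatvar$ exactly (every $\dataset_1$ string is a single token), giving a clean lower bound of $t\nrepeatvar$ on the gain independent of which tokens were removed.

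\textbf{The $\dataset_2$ bound.} Your per-occurrence loss bound of $2$ is too generous for $\dataset_2$: with all required tokens present, every $\dataset_2$ string already compresses to at most $2$ symbols (e.g.\ $\subwordstringwithangle{\symbolonetok\satyesvarjtok,\symbolonetok}$), so removing a non-compliant token can cost at most $1$ per occurrence there, not $2$. The correct bound is $\nrepeatvar' + 2(\nrepeatvar'' + 3) = 41$, as used in the paper, whereas your $2(\nrepeatvar' + \nrepeatvar'' + 3) = 62$ clears $\nrepeatvar = 63$ by a single unit only because the construction is over-specified. Using the tight bound makes the argument robust rather than a near-miss, and more importantly matches the actual constraint $\nrepeatvar > \nrepeatvar' + 2(\nrepeatvar'' + 3)$ the step really needs.

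To repair the proposal, replace the single swap by the paper's multi-swap (swap all $t$ missing required tokens for $t$ arbitrarily chosen non-required tokens) so that $\vocabgood$ contains all required tokens and the $\dataset_1$ accounting becomes exact, and tighten the $\dataset_2$ per-occurrence loss to $1$.
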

    \begin{proof}
        We prove this step by contradiction.
        Assume there exists an optimal tokeniser with vocabulary $\vocabbad$ which does not include $\wrongtokens > 0$ of the tokens above.
        Now, choose an arbitrary set of $\wrongtokens$ tokens in this vocabulary which are not of the above form, and replace them with the missing tokens in this set.
        We denote this new tokeniser's vocabulary by $\vocabgood$.
        Note that the strings in $\dataset_1$ with these missing tokens were represented with at least $2$ symbols under $\vocabbad$, but with a single token under $\vocabgood$, i.e.:
        \begin{align}
            \label{eq:dirbin_onlyif_step1_objective_dataset_1}
            \objectivefunclength(\directtoken[\vocabbad], \dataset_1) \geq (4\satnvariables + t)\nrepeatvar, \qquad
            \objectivefunclength(\directtoken[\vocabgood], \dataset_1) = 4\satnvariables\nrepeatvar
        \end{align}
        Further, note that under $\vocabgood$, we have that strings in dataset $\dataset_2$ are compressed to at most two symbols, while strings in $\dataset_3$ and $\dataset_4$ are compressed to at most three symbols:
        \begin{align}
            \forall \characters \in \dataset_2\colon \objectivefunclength(\directtoken[\vocabgood], \characters) \leq 2,\quad
            \forall \characters \in \dataset_3 \cup \dataset_4: 
            \objectivefunclength(\directtoken[\vocabgood], \characters) \leq 3
        \end{align}
        To improve on this compressed length, $\vocabbad$ must, thus, compress strings in $\dataset_2$ to a single symbol, or strings in $\dataset_3$ and $\dataset_4$ to one or two symbols.
        Notably, this can only be done if the non-compliant tokens in $\vocabbad$ contain 101-strings.
        This is because, to compress a string in $\dataset_2$ to a single symbol, the full character-string must become a token, and $\dataset_2$ only includes 101-strings. 
        Moreover, under $\vocabgood$, strings in $\dataset_3$ and $\dataset_4$ are already compressed to at most $\subwordstringwithangle{\symbolonetok\satyesvarjtok, \symbolonetok, \satyesvarjtok\symbolonetok}$. To further compress them, tokeniser $\vocabbad$ must include tokens which cross the ``middle'' of this character-string, which would make this tokens at least have a 101 prefix or suffix. We consider the best case scenario, which is if they are exactly 101-strings, as any longer string will be at most as frequent as it.

        As discussed above, however, each 101-string appears at most: $\nrepeatvar'$ times in $\dataset_2$,
        $\nrepeatvar''$ times in $\dataset_3$, and
        $3$ times in $\dataset_4$.
        This gives us a best case scenario---in which all the strings in which a new token appears are compressed to a single symbol---where:
        \begin{align}
            \objectivefunclength(\directtoken[\vocabgood], \dataset_2 \cup \dataset_3 \cup \dataset_4) - \objectivefunclength(\directtoken[\vocabbad], \dataset_2 \cup \dataset_3 \cup \dataset_4) 
            \leq t(\nrepeatvar' + 2(\nrepeatvar'' + 3))
        \end{align}
        As the difference in \cref{eq:dirbin_onlyif_step1_objective_dataset_1} is of at least $t\nrepeatvar$ tokens, we put these together:
        \begin{align}
            \objectivefunclength(\directtoken[\vocabgood], \dataset) - \objectivefunclength(\directtoken[\vocabbad], \dataset)
            \leq t(\nrepeatvar' + 2(\nrepeatvar'' + 3)) - t\nrepeatvar
        \end{align}
        As $\nrepeatvar > \nrepeatvar' + 2(\nrepeatvar'' + 3)$,
        this difference is smaller than zero, implying that $\vocabgood$ improves on $\vocabbad$. This shows a contradiction, which completes our proof.
    \end{proof}

    \begin{mylemmastep} \textnormal{(Step \circled{2}).}
        An optimal tokeniser must include all tokens of the form  $\symbolonetok\satyesvarjtok, \satyesvarjtok\symbolonetok, \symbolonetok\satnotvarjtok, \satnotvarjtok\symbolonetok$, and further only tokens of the form $\symbolonetok\satyesvarjtok\symbolonetok, \symbolonetok\satnotvarjtok\symbolonetok$, i.e.:
        \begin{align}
            \Big\{
            \symbolonetok\satyesvarjtok, \satyesvarjtok\symbolonetok, \symbolonetok\satnotvarjtok, \satnotvarjtok\symbolonetok
            \Big\}_{j=1}^{\satnvariables} \subseteq \vocabopt 
            \quad\texttt{and}\quad
            \vocabopt \subset \Big\{
            \symbolonetok\satyesvarjtok, \satyesvarjtok\symbolonetok, \symbolonetok\satnotvarjtok, \satnotvarjtok\symbolonetok,
            \symbolonetok\satyesvarjtok\symbolonetok, \symbolonetok\satnotvarjtok\symbolonetok
            \Big\}_{j=1}^{\satnvariables}
        \end{align}
    \end{mylemmastep}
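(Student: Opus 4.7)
The plan is to follow the same contradiction-swap template as Step 1, but working within the $\satnvariables$ token slots that remain once the $4\satnvariables$ boundary tokens are fixed. Since $|\vocabopt| = |\alphabet| + 5\satnvariables$, exactly $\satnvariables$ slots remain. Assume for contradiction that $\wrongtokens > 0$ of these slots contain \emph{bad} tokens, by which I mean tokens that are neither of the form $\symbolonetok\satyesvarjtok\symbolonetok$ nor $\symbolonetok\satnotvarjtok\symbolonetok$ for any $j$; write $\vocabbad \defeq \vocabopt$.

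Construct $\vocabgood$ by removing the $\wrongtokens$ bad tokens and inserting $\wrongtokens$ distinct tokens of the form $\symbolonetok\satyesvarjtok\symbolonetok$ or $\symbolonetok\satnotvarjtok\symbolonetok$ that are not already present. There are $2\satnvariables$ such candidates in total, and at most $\satnvariables - \wrongtokens$ of them occupy $\vocabbad$, so at least $\wrongtokens$ remain available. It then suffices to show $\objectivefunclength(\directtoken[\vocabgood], \dataset) < \objectivefunclength(\directtoken[\vocabbad], \dataset)$, contradicting the optimality of $\vocabopt$.

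The next step compares the compression saving of each swapped-in 101-string token against that of each removed bad token, both measured relative to the $4\satnvariables$-boundary baseline. $\dataset_1$ is already tokenised to one token per string by the boundary tokens, so no additional saving is possible there. Every string in $\dataset_2$ is a 101-string $\symbolone\satyesvarj\symbolone$ or $\symbolone\satnotvarj\symbolone$ already compressed to $2$ tokens, and reducing it to $1$ token requires the exact 101-string as a token, which by definition a bad token is not. The length-$5$ strings in $\dataset_3$ and $\dataset_4$ sit at $3$ tokens, and a single extra token saves at most $2$ symbols per occurrence; furthermore, a 101-string occurs at most $\nrepeatvar''$ times in $\dataset_3$ and (by the 3-occurrence property) at most $3$ times in $\dataset_4$, and the Step 1 argument shows that any longer token is at most as frequent. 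Summing, each bad token contributes at most $2(\nrepeatvar'' + 3)$ symbols of saving. In contrast, each added 101-string token $\symbolonetok\satyesvarjtok\symbolonetok$ or $\symbolonetok\satnotvarjtok\symbolonetok$ compresses the $\nrepeatvar'$ copies of the matching string in $\dataset_2$ from $2$ tokens to $1$, contributing at least $\nrepeatvar'$ symbols of saving. Since $\nrepeatvar' = 2(\nrepeatvar'' + 3) + 1$, the swap strictly reduces the compressed length.

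The main obstacle is the case analysis behind the bad-token upper bound: a bad token could in principle take any shape --- longer than a 101-string, straddling multiple dataset strings, or matching unusual substrings --- and I must ensure none of these variants sneaks past the $2(\nrepeatvar'' + 3)$ budget. The saving grace, inherited from Step 1, is that any token longer than a 101-string is at most as frequent, and that the $\nrepeatvar'$ gain on $\dataset_2$ is accessible only to exact 101-strings. Once these frequency bounds are formalised, the constant gap $\nrepeatvar' - 2(\nrepeatvar'' + 3) = 1$ closes the contradiction.
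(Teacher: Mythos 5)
Your proof is correct and follows essentially the same contradiction-swap approach as the paper: replace the $\wrongtokens$ bad tokens with unused compliant 101-tokens, bound the bad tokens' contribution on $\dataset_3\cup\dataset_4$ by $2(\nrepeatvar''+3)$ per token via the 101-string frequency argument, and set this against the guaranteed $\nrepeatvar'$ gain per added 101-token on $\dataset_2$, closing the contradiction since $\nrepeatvar' > 2(\nrepeatvar''+3)$. The only cosmetic difference is that you aggregate the accounting per swapped token while the paper tallies compressed length per subdataset, but the constants and the logic are identical.
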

    \begin{proof}
        As before, we prove this step by contradiction.
        Given step $\circled{1}$, we know an optimal tokeniser includes all tokens $\symbolonetok\satyesvarjtok, \satyesvarjtok\symbolonetok, \symbolonetok\satnotvarjtok, \satnotvarjtok\symbolonetok$.
        Now, assume there exists an optimal tokeniser with vocabulary $\vocabbad$ with $\wrongtokens > 0$ tokens which are not of the form $\symbolonetok\satyesvarjtok, \satyesvarjtok\symbolonetok, \symbolonetok\satnotvarjtok, \satnotvarjtok\symbolonetok$ or $\symbolonetok\satyesvarjtok\symbolonetok, \symbolonetok\satnotvarjtok\symbolonetok$; note that these $\wrongtokens$ tokens are \satname-noncompliant.
        Choose an arbitrary set of $\wrongtokens$ unused compliant tokens---i.e., with form $\symbolonetok\satyesvarjtok\symbolonetok, \symbolonetok\satnotvarjtok\symbolonetok$---to replace the non-compliant tokens with, forming a new tokeniser's vocabulary $\vocabgood$.
        Both these vocabularies compress strings in $\dataset_1$ equally:
        \begin{align}
            \objectivefunclength(\directtoken[\vocabbad], \dataset_1) = 4\satnvariables\nrepeatvar, \qquad
            \objectivefunclength(\directtoken[\vocabgood], \dataset_1) = 4\satnvariables\nrepeatvar
        \end{align}
        For strings in $\dataset_2$: if the entire string is in the vocabulary, it is encoded as a single token; otherwise, it is represented with two symbols.
        Under $\vocabgood$, there are $\satnvariables$ tokens covering strings in $\dataset_2$.
        Under $\vocabbad$, there are only $(\satnvariables - t)$ tokens covering strings in $\dataset_2$.
        This implies:
        \begin{align}
            \objectivefunclength(\directtoken[\vocabbad], \dataset_2) = (3\satnvariables + t)\nrepeatvar', \qquad
            \objectivefunclength(\directtoken[\vocabgood], \dataset_2) = 3\satnvariables\nrepeatvar'
        \end{align}
        Finally, for strings in $\dataset_3$ and $\dataset_4$, a similar argument to the previous step applies: 
        (i) only tokens containing 101-strings can compress these datasets;
        (ii) each 101-string appears at most $\nrepeatvar''+3$ times in them;
        (iii) each 101-string will lead to at most two symbols being saved.
        As $\vocabbad$ differs from $\vocabgood$ in $\wrongtokens$ tokens, we get that it will improve on it by at most: 
        \begin{align}
            \objectivefunclength(\directtoken[\vocabgood], \dataset_3 \cup \dataset_4) - \objectivefunclength(\directtoken[\vocabbad], \dataset_3 \cup \dataset_4) 
            \leq 2t(\nrepeatvar'' + 3)
        \end{align}
        Summing together the compression on all datasets, we get that their difference is bounded by:
        \begin{align}
            \objectivefunclength(\directtoken[\vocabgood], \dataset) - \objectivefunclength(\directtoken[\vocabbad], \dataset) 
            \leq 2t(\nrepeatvar'' + 3) - t\nrepeatvar'
        \end{align} 
        As $\nrepeatvar' > 2(\nrepeatvar'' + 3)$, this difference is smaller than zero, implying that $\vocabgood$ improves on $\vocabbad$. This shows a contradiction, which completes our proof.
    \end{proof}
    
    \begin{mylemmastep} \textnormal{(Step \circled{3}).}
        An optimal tokeniser must be \satname-compliant: 
        it must contain all tokens of the form $\symbolonetok\satyesvarjtok, \satyesvarjtok\symbolonetok, \symbolonetok\satnotvarjtok, \satnotvarjtok\symbolonetok$ and 
        it must contain either $\symbolonetok\satyesvarjtok\symbolonetok$ or $\symbolonetok\satnotvarjtok\symbolonetok$ for each $1 \leq j \leq \satnvariables$.
    \end{mylemmastep}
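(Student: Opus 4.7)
My plan is to prove Step~\circled{3} by a local swap argument that exploits the tight structural constraints already forced by Steps~\circled{1} and~\circled{2}. Together, those steps pin down $\vocabopt$ to contain all $4\satnvariables$ tokens of the form $\symbolonetok\satyesvarjtok, \satyesvarjtok\symbolonetok, \symbolonetok\satnotvarjtok, \satnotvarjtok\symbolonetok$, and to have its remaining $\vocabsize - 4\satnvariables = \satnvariables$ slots filled exclusively with tokens of the form $\symbolonetok\satyesvarjtok\symbolonetok$ or $\symbolonetok\satnotvarjtok\symbolonetok$. I would argue by contradiction: suppose there is some index $j^*$ with neither corresponding triple in $\vocabopt$. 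Since $\satnvariables$ triples are distributed over the $\satnvariables$ variable indices and index $j^*$ receives none, by pigeonhole some other index $j'$ must hold both $\symbolonetok\satyesvarjprimetok\symbolonetok$ and $\symbolonetok\satnotvarjprimetok\symbolonetok$.

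From $\vocabopt$ I would then construct $\vocabgood$ by removing one of the two $j'$-triples and inserting one triple for $j^*$, selecting polarities carefully: the removed $j'$-triple should correspond to the polarity of $\satvar_{j'}$ that appears in fewer of the (at most three) clauses containing $\satvar_{j'}$, while the inserted $j^*$-triple should match the polarity of $\satvar_{j^*}$ appearing in more such clauses. The goal is to show that $\objectivefunclength(\directtoken[\vocabgood], \dataset) < \objectivefunclength(\directtoken[\vocabopt], \dataset)$, contradicting the optimality of $\vocabopt$.

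The accounting then splits cleanly across the four subdatasets. On $\dataset_1$, neither the removed nor the added token occurs as a substring, so compression is unchanged. On $\dataset_2$, acquiring a triple for $j^*$ cuts one string from $2$ down to $1$ tokens (saving $\nrepeatvar'$), while losing a triple for $j'$ raises one string from $1$ back up to $2$ tokens (costing $\nrepeatvar'$), for a net change of zero. On $\dataset_3$, the string at index $j^*$ drops from $3$ to $2$ tokens per copy because one triple now suffices, whereas the analogous string at index $j'$ already required only one triple and stays at $2$ tokens; this yields a net saving of $\nrepeatvar'' = 7$. On $\dataset_4$, the worst case is that the removed $j'$-triple was previously saving one token in each of the three clauses containing $\satvar_{j'}$ with that polarity (a loss of at most $3$) while the added $j^*$-triple saves nothing (a gain of $0$), giving a worst-case $\dataset_4$ cost of $3$. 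Summing, the swap saves at least $\nrepeatvar'' - 3 = 4$ tokens, contradicting the optimality of $\vocabopt$.

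I expect the main obstacle to lie in the $\dataset_4$ bookkeeping. Because each variable occurs in exactly three clauses under the 3-occurrence constraint, and because a triple token only helps clauses whose literal matches its polarity, one must verify cleanly that the worst-case loss from removing a $j'$-triple is capped at $3$ and cannot be compounded by any re-routing through other tokens in $\vocabgood$; this will follow from the fact that the only way to compress a clause-string below length~$3$ is via a triple token centred on one of its two literals. The slack $\nrepeatvar'' > 3$ engineered into the reduction is precisely what makes the swap tight, and making that numerical margin explicit is the conceptual pivot of the argument.
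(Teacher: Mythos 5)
Your proposal is essentially the same argument as the paper's: a vocabulary-swap showing $\dataset_1,\dataset_2$ unchanged, a saving of $\nrepeatvar''$ per swap from $\dataset_3$, a worst-case loss of at most $3$ per swap from $\dataset_4$ (since each triple token can help at most the three clauses containing that variable), and the conclusion from $\nrepeatvar'' > 3$. The only cosmetic differences are that you swap one pair at a time (the paper swaps all $t$ doubly-assigned/unassigned pairs simultaneously, obtaining the same per-pair bound) and that you add a careful polarity-selection heuristic that is harmless but unnecessary given the slack $\nrepeatvar'' - 3 = 4$; the paper simply picks $\symbolonetok\satnotvarjtok\symbolonetok$ to remove and $\symbolonetok\satyesvarjtok\symbolonetok$ to insert arbitrarily.
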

    \begin{proof}
        As before, we prove this step by contradiction.
        Given step $\circled{1}$, we know an optimal tokeniser includes all tokens $\symbolonetok\satyesvarjtok, \satyesvarjtok\symbolonetok, \symbolonetok\satnotvarjtok, \satnotvarjtok\symbolonetok$.
        Further, given step $\circled{2}$, we know its other tokens all have form $\symbolonetok\satyesvarjtok\symbolonetok, \symbolonetok\satnotvarjtok\symbolonetok$.
        Now, assume there exists an optimal tokeniser with vocabulary $\vocabbad$ which includes both $\symbolonetok\satyesvarjtok\symbolonetok$ and $\symbolonetok\satnotvarjtok\symbolonetok$ for $\wrongtokens > 0$ variables, and thus neither of those two for $\wrongtokens > 0$ other variables.
        Then, define $\vocabgood$ as a vocabulary where the $\symbolonetok\satnotvarjtok\symbolonetok$ token of all $\wrongtokens$ doubly assigned variables are replaced with the $\symbolonetok\satyesvarjtok\symbolonetok$ token of all non-assigned variables.
        Note that $\vocabgood$ is \satname-compliant.
        These two tokenisers achieve the same compression on $\dataset_1$ and $\dataset_2$:
        \begin{align}
            \objectivefunclength(\directtoken[\vocabbad], \dataset_1 \cup \dataset_2) = 4\satnvariables\nrepeatvar + 3\satnvariables\nrepeatvar', \qquad
            \objectivefunclength(\directtoken[\vocabgood], \dataset_1 \cup \dataset_2) = 4\satnvariables\nrepeatvar + 3\satnvariables\nrepeatvar'
        \end{align}
        The tokeniser with vocabulary $\vocabgood$ will then compress each string in $\dataset_3$ to $2$ symbols, while $\vocabbad$ will compress the $\wrongtokens$ strings $\symbolone\satyesvarj\symbolone\satnotvarj\symbolone$ with unassigned variables to 3 symbols.
        This will lead to a total compression of:
        \begin{align}
            \objectivefunclength(\directtoken[\vocabbad], \dataset_3) = (2\satnvariables + t)\nrepeatvar'', \qquad
            \objectivefunclength(\directtoken[\vocabgood], \dataset_3) = 2\satnvariables\nrepeatvar''
        \end{align}
        Finally, the $\wrongtokens$ doubly assigned tokens of the form $\symbolonetok\satnotvarjtok\symbolonetok$ (which $\vocabgood$ does not contain) appear at most three times in $\dataset_4$ and will lead to at most one symbol being saved, leading to a bound:
        \begin{align}
            \objectivefunclength(\directtoken[\vocabgood], \dataset_4) -
            \objectivefunclength(\directtoken[\vocabbad], \dataset_4) \leq 3\wrongtokens
        \end{align}
        Putting these compressed lengths together, we get:
        \begin{align}
            \objectivefunclength(\directtoken[\vocabgood], \dataset) -
            \objectivefunclength(\directtoken[\vocabbad], \dataset) \leq 3\wrongtokens - \wrongtokens\nrepeatvar''
        \end{align}
        As $\nrepeatvar'' > 3$, this difference is smaller than zero, implying that $\vocabgood$ improves on $\vocabbad$. This shows a contradiction, which completes our proof.
    \end{proof}

    \begin{mylemmastep} \textnormal{(Step \circled{4}).}
        If an optimal tokeniser achieves a compressed length of at most 
        $329\satnvariables + 3\satnclauses - \minclauses$, the original \threeoccmaxtwosat instance is satisfiable, i.e.:
        \begin{align}
            \Big(\objectivefunclength(\directtoken[\vocabopt], \dataset) \leq 329\satnvariables + 3\satnclauses - \minclauses\Big) \implies \tomaxsatfull
        \end{align}
    \end{mylemmastep}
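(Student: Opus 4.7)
By Steps \circled{1}--\circled{3}, any optimal $\vocabopt$ is \satname-compliant, so the plan is to set $\satvals = \toktomaxsat(\vocabopt)$ via the map in \cref{eq:toktomaxsat} and verify, by directly computing $\objectivefunclength(\directtoken[\vocabopt], \dataset)$, that $\satvals$ satisfies at least $\minclauses$ clauses of the original $(\satvars, \satclauses, \minclauses)$ instance. The computation splits cleanly into three fixed contributions (from $\dataset_1, \dataset_2, \dataset_3$) plus one contribution from $\dataset_4$ that tracks the number of satisfied clauses.

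\textbf{Fixed contributions from $\dataset_1, \dataset_2, \dataset_3$.} First we would check that, for every \satname-compliant $\vocabopt$, the compressed lengths on $\dataset_1, \dataset_2, \dataset_3$ are constants, independent of which of $\symbolonetok\satyesvarjtok\symbolonetok, \symbolonetok\satnotvarjtok\symbolonetok$ sits in $\vocabopt$ for each $j$. Every string in $\dataset_1$ is itself a token, giving $4\satnvariables\nrepeatvar = 252\satnvariables$. For each $j$, exactly one of $\symbolone\satyesvarj\symbolone, \symbolone\satnotvarj\symbolone$ in $\dataset_2$ compresses to one token while the other compresses to two (using Step \circled{1} tokens plus a single $\symbolonetok$), yielding $3\satnvariables\nrepeatvar' = 63\satnvariables$. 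A short case split shows each string in $\dataset_3$ compresses to exactly two tokens regardless of the choice per $j$ (either $\subwordstringwithangle{\symbolonetok\satyesvarjtok\symbolonetok, \satnotvarjtok\symbolonetok}$ or $\subwordstringwithangle{\symbolonetok\satyesvarjtok, \symbolonetok\satnotvarjtok\symbolonetok}$), contributing $2\satnvariables\nrepeatvar'' = 14\satnvariables$.

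\textbf{Variable contribution from $\dataset_4$ and conclusion.} Next we would analyse each clause string $\symbolone\satliteral_i^1\symbolone\satliteral_i^2\symbolone$. Since every \satname-compliant length-2 token contains exactly one $\symbolone$ and the clause string contains three, any 2-token cover must employ a token carrying two $\symbolone$'s; the only such tokens in $\vocabopt$ are of the form $\symbolonetok\satyesvarjtok\symbolonetok$ or $\symbolonetok\satnotvarjtok\symbolonetok$, and among contiguous substrings of the clause string the only candidates are $\symbolonetok\satliteral_i^1\symbolonetok$ and $\symbolonetok\satliteral_i^2\symbolonetok$. By the definition of $\toktomaxsat$, the presence of either such token in $\vocabopt$ is exactly the condition that $\satvals$ satisfies clause $i$; otherwise the string needs three tokens (always achievable using Step \circled{1} tokens alone, e.g.\ $\subwordstringwithangle{\symbolonetok\satliteral_i^1, \symbolonetok\satliteral_i^2, \symbolonetok}$). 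Writing $\minclausessol$ for the number of clauses satisfied by $\satvals$, this gives $\objectivefunclength(\directtoken[\vocabopt], \dataset_4) = 2\minclausessol + 3(\satnclauses - \minclausessol) = 3\satnclauses - \minclausessol$. Summing the four contributions yields $\objectivefunclength(\directtoken[\vocabopt], \dataset) = 329\satnvariables + 3\satnclauses - \minclausessol$, so the hypothesis $\objectivefunclength(\directtoken[\vocabopt], \dataset) \leq 329\satnvariables + 3\satnclauses - \minclauses$ forces $\minclausessol \geq \minclauses$, and $\satvals$ witnesses $\tomaxsatfull$.

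\textbf{Expected obstacle.} The only non-mechanical step is the $\dataset_4$ analysis: we must make the ``each length-2 \satname-compliant token contains exactly one $\symbolone$'' observation air-tight to conclude that 2-token covers require a length-3 $\symbolonetok\satliteral\symbolonetok$ token, and additionally rule out that such a token could come from a variable $j'' \notin \{j, j'\}$ (which follows because the number of $\symbolzero$'s between consecutive $\symbolone$'s in the clause string matches only $|\satliteral_i^1|$ or $|\satliteral_i^2|$, by the injectivity of $j \mapsto |\satyesvarj|, |\satnotvarj|$). Everything else is bookkeeping arithmetic.
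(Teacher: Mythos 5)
Your proposal is correct and follows essentially the same route as the paper's proof: invoke Steps \circled{1}--\circled{3} to conclude sat-compliance, convert $\vocabopt$ to an assignment via $\toktomaxsat$, compute the fixed $329\satnvariables$ contribution from $\dataset_1 \cup \dataset_2 \cup \dataset_3$, and match the $\dataset_4$ cost $3\satnclauses - \minclausessol$ to the number of satisfied clauses. You actually supply a bit more justification than the paper does for the key $\dataset_4$ claim (the paper simply asserts that a clause string compresses to two symbols iff one of $\symbolonetok\satliteral_i^1\symbolonetok$, $\symbolonetok\satliteral_i^2\symbolonetok$ is present, whereas you argue it via counting $\symbolone$'s and the injectivity of $j \mapsto (|\satyesvarj|,|\satnotvarj|)$); just beware that ``length-2 token'' in your write-up is a misnomer for the $\symbolonetok\satyesvarjtok$, $\satyesvarjtok\symbolonetok$, $\symbolonetok\satnotvarjtok$, $\satnotvarjtok\symbolonetok$ tokens, which have varying string lengths but exactly one $\symbolone$ each.
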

    \begin{proof}
        Given steps \circled{1} to \circled{3}, we know that an optimal tokeniser will be \satname-compliant.
        We will now denote this optimal tokeniser's vocabulary by $\vocabopt$ and use \cref{eq:toktomaxsat} to extract a \threeoccmaxtwosat assignment $\satvalssol = \toktomaxsat(\vocabopt)$ which corresponds to this tokeniser's vocabulary.
        From the previous proof steps we see that any \satname-compliant tokeniser achieves the following compressed length in $\dataset_1$, $\dataset_2$, and $\dataset_3$:\looseness=-1
        \begin{align}
            \objectivefunclength(\directtoken[\vocabopt], \dataset_1 \cup \dataset_2 \cup \dataset_3) = 4\satnvariables\nrepeatvar + 3\satnvariables\nrepeatvar' + 2\satnvariables\nrepeatvar'' = 329\satnvariables
        \end{align}
        Now, note that a character-string $\symbolone\satliteral_i^1\symbolone\satliteral_i^2\symbolone$ in $\dataset_4$ will be:
        compressed to two symbols if at least one of the tokens $\symbolonetok\satliteral_i^1\symbolonetok$ or $\symbolonetok\satliteral_i^2\symbolonetok$ exists, or
        compressed to three symbols if neither exists.
        Equivalently, a clause $\satliteral_i^1 \lor \satliteral_i^2$ in \threeoccmaxtwosat is:
        satisfied if either $\satliteral_i^1$ or $\satliteral_i^2$ evaluates to true, or
        not satisfied if both evaluate to false.
        Given our construction of function $\toktomaxsat$ above, one of \threeoccmaxtwosat's clauses will be satisfied if and only if its corresponding string in $\dataset_4$ is compressed to two symbols.
        We can thus state that:
        \begin{align}
            \bigg( \objectivefunclength(\directtoken[\vocabopt], \dataset_4) = 3\satnclauses - \minclausessol \bigg) \iff  
            \left(\sum_{i=1}^{\satnclauses} \one_{\satvalssol}\{\satliteral_i^1 \lor \satliteral_i^2\} = \minclausessol \right)
        \end{align}
        Given the construction of $\maxsymbols$ as $329\satnvariables + 3 \satnclauses - \minclauses$, we conclude that a \satname-compliant tokeniser which compresses the full dataset to at least that size can be mapped to a \threeoccmaxtwosat assignment which satisfies at least $\minclauses$ clauses. 
        This concludes the proof.
    \end{proof}

\section{Proof of \texorpdfstring{\Cref{thm:dbtok_hardapx}}{Theorem}}
\label{appendix:direct_binary_tokenisation_apx}

\dbtokapx*

\begin{proof}
    For this proof, we rely on a result by \citet{berman-karpinski-1998,berman-karpinski-1999} 
    that, for specific instances of \threeoccmaxtwosat with $\satnclauses = 2016\apxclausecount$ clauses, it is \np-hard to distinguish whether at least $(2012 - \varepsilon)\apxclausecount$ or at most $(2011 + \varepsilon)\apxclausecount$ of these clauses are satisfiable, for any $\varepsilon > 0$.
    We will denote this \threeoccmaxtwosat gap problem by $\tomaxsat(\satvars, \satclauses, (\minclauseslow, \minclausesup))$, with $\minclauseslow = (2011 +\varepsilon)\apxclausecount$ and $\minclausesup = (2012 - \varepsilon)\apxclausecount$.
    We can now prove the \np-hardness of the binary direct tokenisation gap problem by reducing \threeoccmaxtwosat's gap problem to it.
    To this end, we rely on a reduction identical to $\reductionfuncfull$, but where we define:
    \begin{align}
        \maxsymbolslow &= 329\satnvariables + 3\satnclauses - \minclauseslow
        \qquad
        &\maxsymbolsup &= 329\satnvariables + 3\satnclauses - \minclausesup \\
        &= 329\satnvariables + 3\satnclauses - \frac{2011+\varepsilon}{2016} \satnclauses
        &&= 329\satnvariables + 3\satnclauses - \frac{2012-\varepsilon}{2016} \satnclauses \nonumber
    \end{align}
    \cref{lemma:dbtok_nphard_if,lemma:dbtok_nphard_onlyif} trivially show the validity of this reduction:
    \begin{align}
        \tomaxsat(\satvars, \satclauses, (\minclauseslow, \minclausesup)) \iff 
        \dirbtok(\dataset, \vocabsize, (\maxsymbolslow, \maxsymbolsup))
    \end{align}
    which holds since $\tomaxsat(\satvars, \satclauses, \minclausesup) \iff  \dirbtok(\dataset, \vocabsize, \maxsymbolsup)$ and the same for $\minclauseslow$ and $\maxsymbolslow$.
    It is therefore \np-hard to distinguish whether a dataset can be compressed to at most $329\satnvariables + 3\satnclauses - \frac{2012 - \varepsilon}{2016}\satnclauses$ symbols, or if at least $329\satnvariables + 3\satnclauses - \frac{2011 + \varepsilon}{2016}\satnclauses$ symbols remain (with an allowed vocabulary size $\vocabsize = 5\satnvariables$).
    Since each variable occurs exactly three times in \threeoccmaxtwosat, we have that $\frac{3}{2}\satnvariables = \satnclauses$.
    We now compute a lower bound on the best achievable compression ratio:
    \begin{subequations}
    \begin{align}
        \frac{\maxsymbolslow}{\maxsymbolsup}
        &= \frac{329\satnvariables + 3\satnclauses - \frac{2011 + \varepsilon}{2016}\satnclauses}{329\satnvariables + 3\satnclauses - \frac{2012 - \varepsilon}{2016}\satnclauses} \\
        &= \frac{667 - \frac{6033 + 3\varepsilon}{2016}}{667 - \frac{6036 - \varepsilon}{2016}} \\
        &= \frac{1338639 - 3\varepsilon}{1338636 + 3\varepsilon} \\
        &= \frac{446213 - \varepsilon}{446212 + \varepsilon} 
    \end{align}
    \end{subequations}
Thus, binary direct tokenisation cannot be approximated in polynomial time with an approximation ratio better than $\frac{446213}{446212} > 1.000002$, unless \pequalnp.
\end{proof}

\section{Proof of Forward Step of \texorpdfstring{\Cref{thm:bup_binary_npcomplete}}{Theorem}}
\label{appendix:bbtok_nphard_if}

We first need another lemma in preparation for the actual proof of this forward step.
Note that \cref{reduction:threeoccmaxtwosat_to_bbtok} produces character-strings $\satyesvarj$ and $\satnotvarj$, with form $\{\symbolzero^j \mid 1 \leq j \leq 2\satnvariables\}$, which our tokeniser must compress.
However, a merge-sequence $\merges = \bigcirc_{j=1}^{2\satnvariables-1} [\mergestring{\symbolzerotok^j}{\symbolzerotok}]$ does not compress all these targets into a single symbol; character-string $\symbolzero\symbolzero\symbolzero\symbolzero$, for instance, would be merged into $\subwordstring{\langle\symbolzerotok\symbolzerotok, \symbolzerotok\symbolzerotok\rangle}$ by the first merge $\mergestring{\symbolzerotok}{\symbolzerotok}$ in this sequence, and merge $\mergestring{\symbolzerotok^3}{\symbolzerotok}$ would not be applied to it.
Thus, we need to describe a more unwieldy merge sequence to achieve this with the same number of merges.
We do so in \cref{sec:2jminus1_merges_required}, where we show that exactly $2\satnvariables-1$ merges are required to compress all these strings into a single symbol.
With this, we now prove the forward step of \Cref{thm:bup_binary_npcomplete}
in the following lemma.

\vspace{5pt}
\begin{restatable}{lemma}{bbtoknphardif}
\label{lemma:bbtok_nphard_if}
     If a \threeoccmaxtwosat instance is satisfiable, then the \bbtok
     instance output by \cref{reduction:threeoccmaxtwosat_to_bbtok} is also satisfiable. Formally:
    \(
        \tomaxsatfull \implies \bupbtok(\reductionfuncfull)
    \).
\vspace{-5pt}
\end{restatable}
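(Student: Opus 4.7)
The plan is to build a merge sequence of exactly $\vocabsize = 10\satnvariables$ merges directly from the satisfying assignment $\satvalssol$, and then verify that the resulting bottom-up tokeniser meets the compression budget $\maxsymbols$. Following the sketch, I would decompose $\merges$ into six ordered blocks $\merges_1 \circ \merges_2 \circ \merges_3 \circ \merges_4 \circ \merges_5 \circ \merges_6$. The \emph{structural} blocks are independent of $\satvalssol$: $\merges_1$ consists of the single merge $\mergestring{\symbolonetok}{\symbolonetok}$ producing $\spacesymboltwotok$; $\merges_2$ consists of $2\satnvariables-1$ merges that, by the auxiliary lemma in \cref{sec:2jminus1_merges_required}, suffice to build every token $\satyesvarjtok = \symbolzerotok^{2j-1}$ and $\satnotvarjtok = \symbolzerotok^{2j}$; $\merges_4$ consists of the $4\satnvariables$ merges forming $\symbolonetok\satyesvarjtok$, $\satyesvarjtok\symbolonetok$, $\symbolonetok\satnotvarjtok$, $\satnotvarjtok\symbolonetok$; and $\merges_6$ consists of the $2\satnvariables$ merges forming $\satyesvarjtok\spacesymboltwotok$ and $\spacesymboltwotok\satnotvarjtok$. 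The \emph{assignment-dependent} blocks $\merges_3$ and $\merges_5$ contribute one merge each per variable: specifically $\mergestring{\symbolonetok}{\satyesvarjtok\spacesymboltwotok}$ and $\mergestring{\symbolonetok\satyesvarjtok}{\symbolonetok}$ when $\satvalsolj = \valtrue$, and $\mergestring{\spacesymboltwotok\satnotvarjtok}{\symbolonetok}$ and $\mergestring{\symbolonetok}{\satnotvarjtok\symbolonetok}$ otherwise. Summing, $1 + (2\satnvariables-1) + 4\satnvariables + 2\satnvariables + 2\satnvariables = 10\satnvariables$, matching the budget.

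I would next verify the compression on each subdataset by tracing $\bottomuptoken[\merges]$ string by string. For $\dataset_1$, each of the $8\satnvariables + 1$ string types collapses to a single token once the appropriate block fires (for instance, $\satyesvarj\symbolone\symbolone$ becomes the token built in $\merges_6$), contributing $(8\satnvariables + 1)\nrepeatvar$ symbols. For $\dataset_2$, a case split on $\satvalsolj$ shows that for each $j$ two of the four string types collapse to one token via the assignment-dependent merges while the other two remain at two tokens, yielding the fixed total $6\satnvariables\nrepeatvar'$ regardless of assignment. Analogous case analyses for $\dataset_3$ and $\dataset_4$ confirm that every string compresses to exactly two tokens independently of $\satvalssol$, contributing $4\satnvariables\nrepeatvar'' + 4\satnvariables\nrepeatvar'''$ symbols. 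Adding these yields exactly the fixed portion $5398\satnvariables + 575$ of $\maxsymbols$.

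The payoff comes from $\dataset_5$. For each of the four possible clause sign-patterns, the construction ensures that a \emph{crossing} token spanning the middle $\symbolone$ of the encoding becomes available exactly when the corresponding literal is satisfied: for a string $\symbolone\satyesvarj\symbolone\satnotvarjprime\symbolone$ encoding $(\satvar_j \lor \neg\satvar_{j'})$, the merge $\mergestring{\symbolonetok\satyesvarjtok}{\symbolonetok}$ is in $\merges$ precisely when $\satvalsolj = \valtrue$, and $\mergestring{\symbolonetok}{\satnotvarjprimetok\symbolonetok}$ is in $\merges$ precisely when $\satvar_{j'}$ is set to $\valfalse$; a symmetric analysis covers the other three clause patterns. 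When at least one literal is satisfied the string compresses to two tokens, otherwise it remains at three. Since $\satvalssol$ satisfies at least $\minclauses$ clauses, $\dataset_5$ contributes at most $3\satnclauses - \minclauses$ symbols, and summing with the fixed subtotal recovers $\maxsymbols$ exactly.

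The main obstacle I anticipate is justifying the per-subdataset claims above rigorously under the \emph{bottom-up} semantics, since the left-to-right greedy behaviour of $\mergefunc_{\merge}$ makes intermediate tokenisations depend delicately on the internal ordering of merges within each block. In particular, within $\merges_4$ one must interleave left- and right-adjacent merges (for example firing $\mergestring{\satnotvarjtok}{\symbolonetok}$ before $\mergestring{\symbolonetok}{\satnotvarjtok}$) so that (i) all $\dataset_1$-strings still collapse to a single token and (ii) the token boundary on which the later assignment-dependent merge must act is preserved in the strings of $\dataset_2$ through $\dataset_5$. Verifying this interleaving, together with checking that no unintended cross-literal merge ever fires on $\dataset_5$ (each assignment-dependent merge is anchored to a specific literal-side $\symbolonetok$ or $\spacesymboltwotok$), will comprise the bulk of the case analysis that I would defer to the appendix.
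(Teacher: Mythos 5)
Your merge set is the same as the paper's (you arrive at exactly the same $10\satnvariables$ merges, just grouped into blocks of different sizes), but your \emph{ordering} is broken in a way that makes the construction fail. You place the merges that build the tokens $\satyesvarjtok\spacesymboltwotok$ and $\spacesymboltwotok\satnotvarjtok$ in the \emph{final} block $\merges_6$, yet your assignment-dependent block $\merges_3$ applies merges $\mergestring{\symbolonetok}{\satyesvarjtok\spacesymboltwotok}$ or $\mergestring{\spacesymboltwotok\satnotvarjtok}{\symbolonetok}$ whose operands are precisely those tokens. Under bottom-up semantics a merge $\mergestring{\subwordtupper{k}{[1]}}{\subwordtupper{k}{[2]}}$ can only ever fire if both $\subwordtupper{k}{[1]}$ and $\subwordtupper{k}{[2]}$ already appear as single tokens in the current subword-string; since $\satyesvarjtok\spacesymboltwotok$ and $\spacesymboltwotok\satnotvarjtok$ do not exist when $\merges_3$ is applied, every merge in $\merges_3$ is a no-op. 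As a consequence, the $\dataset_2$ strings $\symbolone\satyesvarj\spacesymboltwo$ and $\spacesymboltwo\satnotvarj\symbolone$ never collapse to a single token (one of the two must, for each $j$, to reach $6\satnvariables\nrepeatvar'$), and similarly the $\dataset_4$ strings cannot reach two tokens; the resulting compressed length strictly exceeds $\maxsymbols$, so the proof of $\bupbtok(\reductiontwofuncfull) = \valtrue$ does not go through.

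The paper avoids this by putting exactly those two merges per $j$ into its $\merges_2$, i.e., \emph{before} any assignment-dependent block, so that by the time $\merges_3$ runs, $\satyesvarjtok\spacesymboltwotok$ and $\spacesymboltwotok\satnotvarjtok$ already exist as tokens. You correctly flag that the internal ordering of the $\symbolonetok\satyesvarjtok, \satyesvarjtok\symbolonetok, \symbolonetok\satnotvarjtok, \satnotvarjtok\symbolonetok$ merges needs care (the paper handles this by splitting them across its $\merges_4$ and $\merges_6$ so that $\merges_5$ acts on a partially merged state), but you miss the more basic topological constraint: every merge's constituents must be built by an \emph{earlier} merge in the sequence. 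If you move your $\merges_6$ up to the position of the paper's $\merges_2$ (and then re-examine whether lumping the remaining four structural merges into a single block before $\merges_5$ still gives the advertised per-string lengths on $\dataset_3, \dataset_4, \dataset_5$), you recover a construction equivalent to the paper's.
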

\begin{proof}
 Assume this $(\satvars, \satclauses, \minclauses)$ instance of the \threeoccmaxtwosat decision problem is satisfiable, i.e., that $\tomaxsatfull$ is true. We must prove that in this case, $\bupbtok(\reductionfuncfull)$ is also true.
We define the following list of merges which, as shown in \cref{lemma:exactly_n_minus_one_merges}, compresses every target of type $\satyesvarj$ or $\satnotvarj$ into a single token:
\begin{align}\label{eq:bottomup_merges_initial}
     \merges_1 = \mergestringwithparens{{\symbolonetok}}{{\symbolonetok}} \circ \underbrace{
     \bigcirc_{j=1}^{\lfloor \log_2 \satnvariables \rfloor} [\mergestringwithparens{{\symbolzerotok^{2^j}}}{{\symbolzerotok^{2^j}}} ]
            \circ
            \bigcirc_{j=1}^{\lfloor \log_2 \satnvariables \rfloor} \bigcirc_{j'=1}^{2^j-1} [\mergestringwithparens{\symbolzerotok^{2^j}}{{\symbolzerotok^{j'}}}]
       }_{\text{$2\satnvariables-1$ merges which compress each $\satyesvarj$ and $\satnotvarj$ to a single token}}
\end{align}
    Note that the merge $\mergestring{{\symbolonetok}}{{\symbolonetok}}$ is independent of the merges on $\symbolzero$ and could thus be placed at any point in the sequence.
    We also define the following lists of merges, which will be included in any satisfying solution to the tokenisation problem: 
    \begin{subequations}
        \begin{align}
            \merges_2 = \bigcirc_{j=1}^{\satnvariables} [\mergestringwithparens{\spacesymboltwotok}{\satnotvarjtok}&, \mergestringwithparens{\satyesvarjtok}{\spacesymboltwotok}] 
            \qquad
            \merges_4 = \bigcirc_{j=1}^{\satnvariables} [\mergestringwithparens{\satnotvarjtok}{ \spacesymbol}, \mergestringwithparens{\spacesymbol}{\satyesvarjtok}] 
            \\
            &\merges_6 = \bigcirc_{j=1}^{\satnvariables} [\mergestringwithparens{\spacesymbol}{\satnotvarjtok}, \mergestringwithparens{\satyesvarjtok}{\spacesymbol}] %
        \end{align}
     \end{subequations}

    Now, let $\satvalssol = \{\satvalsolj\}_{j=1}^{\satnvariables}$ be any satisfying solution to the \threeoccmaxtwosat instance $(\satvars, \satclauses, \minclauses)$. We define the following instance-specific merges: 
 \begin{align}
        \merges_3 = \BigCirc_{j=1}^{\satnvariables} \left[
        \begin{array}{lr}
            \mergestringwithparens{\symbolonetok}{\satyesvarjtok\spacesymboltwotok} & \mathtt{if}\,\,\satvalsolj=\valtrue \\
            \mergestringwithparens{\spacesymboltwotok\satnotvarjtok}{\spacesymbol} & \mathtt{else}
        \end{array}
        \right], \qquad
        \merges_5 = \BigCirc_{j=1}^{\satnvariables} \left[
        \begin{array}{lr}
            \mergestringwithparens{\spacesymbol\satyesvarjtok}{\spacesymbol} & \mathtt{if}\,\,\satvalsolj=\valtrue \\
            \mergestringwithparens{\spacesymbol}{ \satnotvarjtok\spacesymbol} & \mathtt{else}
        \end{array}
        \right] 
    \end{align}
     In words, we include merges $\mergestring{\spacesymbol}{\satyesvarjtok\spacesymboltwotok}$ and $\mergestring{\spacesymbol\satyesvarjtok}{\spacesymbol}$ if  $\satvalsolj$ is true, or
    $\mergestring{\spacesymboltwotok\satnotvarjtok}{\spacesymbol}$ and 
$\mergestring{\spacesymbol}{\satnotvarjtok\spacesymbol}$ if $\satvalsolj$ is false.
    We then create a merge sequence by concatenating these lists in order:
    \begin{align}
        \merges = \merges_1 \circ \merges_2 \circ \merges_3 \circ \merges_4 \circ \merges_5 \circ \merges_6
    \end{align}
    This gives us a total of $|\merges| = \vocabsize = 10\satnvariables$ merges.
    Now we just need to count the symbols output by this solution to check whether the bound is satisfied.

 By applying the merges $\merges$, each string in $\dataset_1$ will be compressed into a single symbol, obtaining:
    \begin{align} \label{eq:bottomup_nphard_if_proof_dataset_1}
        \objectivefunclength(\bottomuptoken[\merges], \dataset_1)
        = (1+8\satnvariables) \nrepeatvar
    \end{align}
    For each pair of strings $\charstring{\symbolone\satyesvarj\symbolone}$ and $\charstring{\symbolone\satnotvarj\symbolone}$ in $\dataset_2$, one is compressed into a single symbol while the other is only compressed to two symbols---the one with $\charstring{\satyesvarj}$ is compressed into a single symbol
    if $\satvarsatisfiedj = \valtrue$ and the one with $\charstring{\satnotvarj}$ otherwise.
    The same is true for each pair of strings $\charstring{\symbolone\satyesvarj\spacesymboltwo}$ and $\charstring{\spacesymboltwo\satnotvarj\symbolone}$, also in $\dataset_2$.
    We thus have that, for each variable $\satvar_j$, the strings in $\dataset_2$ will occupy a total of $(1 + 2 + 1 + 2)\nrepeatvar'$ symbols, and:  
    \begin{align} \label{eq:bottomup_nphard_if_proof_dataset_2}
        \objectivefunclength(\bottomuptoken[\merges], \dataset_2)
        = 6 \nrepeatvar' \satnvariables
    \end{align}
    Similarly, each string in $\dataset_3$ and $\dataset_4$ will be compressed into only 2 symbols after this tokeniser is applied to it.
    We thus have:
    \begin{align} \label{eq:bottomup_nphard_if_proof_dataset_34}
        \objectivefunclength(\bottomuptoken[\merges], \dataset_3)
        = 4 \nrepeatvar'' \satnvariables,
        \qquad
        \objectivefunclength(\bottomuptoken[\merges], \dataset_4)
        = 4 \nrepeatvar''' \satnvariables
    \end{align}
    Finally, we have the strings in $\dataset_5$.
    These strings are constructed such that they will be compressed into 2 symbols if either $\satliteral_i^1$ or $\satliteral_i^2$ evaluates to $\valtrue$, and kept with 3 symbols otherwise; see \cref{tab:bottomup_dataset5_nphard_if} for a detailed
simulation of why this is the case.
    We thus have:
    \begin{subequations} \label{eq:bottomup_nphard_if_proof_dataset_5}
    \begin{align}
        \objectivefunclength(\bottomuptoken[\merges], \dataset_5)
        &= \sum_{i=1}^{\satnclauses} \left(3 - \one\left\{
        \!\!\!
        \begin{array}{c}
            \satliteral_i^1 = \satvar_j\phantom{\neg} \,\,\mathtt{and}\,\,
            \mergestringwithparens{\spacesymbol}{\satyesvarjtok\spacesymboltwotok}, \mergestringwithparens{\spacesymbol\satyesvarjtok}{\spacesymbol} \in \merges  \\
            \mathrm{or} \\
            \satliteral_i^1 = \neg\satvar_j \,\,\mathtt{and}\,\,
            \mergestringwithparens{\spacesymboltwotok\satnotvarjtok}{\spacesymbol}, \mergestringwithparens{\spacesymbol}{\satnotvarjtok\spacesymbol} \in \merges \\
            \mathrm{or} \\
            \satliteral_i^2 = \satvar_{j'}\phantom{\neg} \,\,\mathtt{and}\,\,
            \mergestringwithparens{\spacesymbol}{\satyesvarjprimetok\spacesymboltwotok}, \mergestringwithparens{\spacesymbol\satyesvarjprimetok}{\spacesymbol} \in \merges  \\
            \mathrm{or} \\
            \satliteral_i^2 = \neg\satvar_{j'} \,\,\mathtt{and}\,\,
            \mergestringwithparens{\spacesymboltwotok\satnotvarjprimetok}{\spacesymbol}, \mergestringwithparens{\spacesymbol}{\satnotvarjprimetok\spacesymbol} \in \merges
        \end{array}
        \!\!
        \right\} \right) \\
        & = 3\satnclauses - \sum_{i=1}^{\satnclauses} 
            \one_{\satvalssol}\{\satliteral_i^1  \lor \satliteral_i^2 \} \\
        & \leq 3\satnclauses - \minclauses
    \end{align}
    \end{subequations}
    where, by construction, we have a merge in our sequence (e.g., $\mergestring{\spacesymbol}{\satyesvarjtok\spacesymboltwotok}$ or $\mergestring{\spacesymboltwotok\satnotvarjtok}{\spacesymbol}$) if and only if its value is in a satisfying assignment (e.g., $\satvalsolj=\valtrue$ or $\satvalsolj=\valfalse$, respectively).
    Summing together the lengths in \cref{eq:bottomup_nphard_if_proof_dataset_1,eq:bottomup_nphard_if_proof_dataset_2,eq:bottomup_nphard_if_proof_dataset_34,eq:bottomup_nphard_if_proof_dataset_5}, we get that:%
    \begin{align}
        \objectivefunclength(\bottomuptoken[\merges], \dataset)
        \leq \maxsymbols = (1+8\satnvariables)\nrepeatvar + (6\nrepeatvar' + 4\nrepeatvar'' + 4\nrepeatvar''')\,\satnvariables + 3\,\satnclauses - \minclauses
    \end{align}
    which concludes the proof.
\end{proof}

    \begin{table}[t]
        \centering
        \resizebox{\textwidth}{!}{%
        \begin{tabular}{ccccccccc}
            \toprule
            Assignment & Condition &
            $\characters$ 
           
            & $\bottomuptoken[\merges_1](\characters)$ 
            & $\bottomuptoken[\merges_1 \circ \merges_2](\characters)$
            & $\bottomuptoken[\merges_1 \circ \merges_2 \circ \merges_3](\characters)$
            & $\bottomuptoken[\merges_1 \circ \dots \circ \merges_4](\characters)$
                   & $\bottomuptoken[\merges_1 \circ \dots \circ \merges_5](\characters)$
            & $|\bottomuptokenfull|$ \\
            \midrule
            \multirow{4}{*}{$\satliteral_i^1 = \satvar_j$ and $\satliteral_i^2 = \neg\satvar_{j'}$} & $\satvarsatisfiedj = \valtrue \land \satvarsatisfiedjprime = \valtrue$ 
            & 
            \multirow{4}{*}{{$\charstring{
            \langle
  \symbolone,
  \underbrace{\symbolzero, \dots, \symbolzero}_{2j-1},
  \symbolone,
  \underbrace{\symbolzero, \dots, \symbolzero}_{2j'},
  \symbolone
  \rangle
            
            }$}}
            &
            \multirow{4}{*}{$\subwordstring{\langle\spacesymbol,\satyesvarjtok,\spacesymbol,\satnotvarjprimetok,\spacesymbol\rangle}$} &
            
            \samesubstring & \samesubstring & 
            \multirow{4}{*}{$\subwordstring{\langle\spacesymbol\satyesvarjtok,\spacesymbol,\satnotvarjprimetok\spacesymbol\rangle}$} &
            $\subwordstring{\langle\spacesymbol\satyesvarjtok\spacesymbol,\satnotvarjprimetok\spacesymbol\rangle}$ &
            2 \\
            & $\satvarsatisfiedj = \valfalse \land \satvarsatisfiedjprime = \valtrue$ &
            &
            &
            \samesubstring & \samesubstring && 
            $\subwordstring{\langle\spacesymbol\satyesvarjtok,\spacesymbol,\satnotvarjprimetok\spacesymbol\rangle}$ &
            3 \\
            & $\satvarsatisfiedj = \valtrue \land \satvarsatisfiedjprime = \valfalse$ &
            &
            &
            \samesubstring & \samesubstring && 
            $\subwordstring{\langle\spacesymbol\satyesvarjtok\spacesymbol,\satnotvarjprimetok\spacesymbol\rangle}$ &
            2 \\
            & $\satvarsatisfiedj = \valfalse \land  \satvarsatisfiedjprime = \valfalse$ &
            &
            &
            \samesubstring & \samesubstring && 
            $\subwordstring{\langle\spacesymbol\satyesvarjtok,\spacesymbol\satnotvarjprimetok\spacesymbol\rangle}$ & 
            2 \\
            \midrule
          \multirow{4}{*}{$\satliteral_i^1 = \neg\satvar_j$ and $\satliteral_i^2 = \satvar_{j'}$}
& $\satvarsatisfiedj = \valtrue \land \satvarsatisfiedjprime = \valtrue$ 
& \multirow{4}{*}{{$\charstring{
            \langle
  \symbolone,
  \underbrace{\symbolzero, \dots, \symbolzero}_{2j'-1},
  \symbolone,
  \underbrace{\symbolzero, \dots, \symbolzero}_{2j},
  \symbolone
  \rangle
            
            }$}}
& \multirow{4}{*}{$\subwordstring{\langle\spacesymbol,\satyesvarjprimetok,\spacesymbol,\satnotvarjtok,\spacesymbol\rangle}$}
& \samesubstring
& \samesubstring
& \multirow{4}{*}{$\subwordstring{\langle\spacesymbol\satyesvarjprimetok,\spacesymbol,\satnotvarjtok\spacesymbol\rangle}$}
& $\subwordstring{\langle\spacesymbol\satyesvarjprimetok\spacesymbol,\satnotvarjtok\spacesymbol\rangle}$
& 2 \\
& $\satvarsatisfiedj = \valfalse \land \satvarsatisfiedjprime = \valtrue$ 
& 
&
& \samesubstring
& \samesubstring
&
& $\subwordstring{\langle\spacesymbol\satyesvarjprimetok\spacesymbol,\satnotvarjtok\spacesymbol\rangle}$
& 2 \\
& $\satvarsatisfiedj = \valtrue \land \satvarsatisfiedjprime = \valfalse$ 
& 
&
& \samesubstring
& \samesubstring
&
& $\subwordstring{\langle\spacesymbol\satyesvarjprimetok,\spacesymbol,\satnotvarjtok\spacesymbol\rangle}$
& 3 \\
& $\satvarsatisfiedj = \valfalse \land  \satvarsatisfiedjprime = \valfalse$ 
& 
&
& \samesubstring
& \samesubstring
&
& $\subwordstring{\langle\spacesymbol\satyesvarjprimetok,\spacesymbol\satnotvarjtok\spacesymbol\rangle}$
& 2 \\

            \midrule
         \multirow{4}{*}{$\satliteral_i^1 = \neg\satvar_j$ and $\satliteral_i^2 = \neg\satvar_{j'}$}
& $\satvarsatisfiedj = \valtrue \land \satvarsatisfiedjprime = \valtrue$ 
& \multirow{4}{*}{{$\charstring{
            \langle
  \symbolone,
  \underbrace{\symbolzero, \dots, \symbolzero}_{2j},
  \symbolone,
  \underbrace{\symbolzero, \dots, \symbolzero}_{2j'},
  \symbolone
  \rangle
            
            }$}}
& \multirow{4}{*}{$\subwordstring{\langle\spacesymboltwotok,\satnotvarjtok,\spacesymbol,\satnotvarjprimetok,\spacesymbol\rangle}$}
& \samesubstring
& $\subwordstring{\langle\spacesymboltwotok\satnotvarjtok,\spacesymbol,\satnotvarjprimetok\spacesymbol\rangle}$
& \multirow{4}{*}{$\subwordstring{\langle\spacesymboltwotok\satnotvarjtok,\spacesymbol,\satnotvarjprimetok\spacesymbol\rangle}$}
& \samesubstring
& 3 \\
& $\satvarsatisfiedj = \valfalse \land \satvarsatisfiedjprime = \valtrue$ 
&
&
& $\subwordstring{\langle\spacesymboltwotok\satnotvarjtok\spacesymbol,\satnotvarjprimetok,\spacesymbol\rangle}$
& $\subwordstring{\langle\spacesymboltwotok\satnotvarjtok\spacesymbol,\satnotvarjprimetok\spacesymbol\rangle}$
&
& \samesubstring
& 2 \\
& $\satvarsatisfiedj = \valtrue \land \satvarsatisfiedjprime = \valfalse$ 
&
&
& \samesubstring
& $\subwordstring{\langle\spacesymboltwotok\satnotvarjtok,\spacesymbol,\satnotvarjprimetok\spacesymbol\rangle}$
&
& $\subwordstring{\langle\spacesymboltwotok\satnotvarjtok,\spacesymbol\satnotvarjprimetok\spacesymbol\rangle}$
& 2 \\
& $\satvarsatisfiedj = \valfalse \land  \satvarsatisfiedjprime = \valfalse$ 
&
&
& $\subwordstring{\langle\spacesymboltwotok\satnotvarjtok\spacesymbol,\satnotvarjprimetok,\spacesymbol\rangle}$
& $\subwordstring{\langle\spacesymboltwotok\satnotvarjtok\spacesymbol,\satnotvarjprimetok\spacesymbol\rangle}$
&
& \samesubstring
& 2 \\

            \midrule
           \multirow{4}{*}{$\satliteral_i^1 = \satvar_j$ and $\satliteral_i^2 = \satvar_{j'}$}
& $\satvarsatisfiedj = \valtrue \land \satvarsatisfiedjprime = \valtrue$
& \multirow{4}{*}{{$\charstring{
            \langle
  \symbolone,
  \underbrace{\symbolzero, \dots, \symbolzero}_{2j-1},
  \symbolone,
  \underbrace{\symbolzero, \dots, \symbolzero}_{2j'-1},
  \symbolone
  \rangle
            
            }$}}
& \multirow{4}{*}{$\subwordstring{\langle\spacesymbol,\satyesvarjtok,\spacesymbol,\satyesvarjprimetok,\spacesymboltwotok\rangle}$}
& \multirow{2}{*}{$\subwordstring{\langle\spacesymbol,\satyesvarjtok,\spacesymbol\satyesvarjprimetok\spacesymboltwotok\rangle}$}
& \multirow{2}{*}{$\subwordstring{\langle\spacesymbol\satyesvarjtok,\spacesymbol\satyesvarjprimetok\spacesymboltwotok\rangle}$}
& \multirow{2}{*}{$\subwordstring{\langle\spacesymbol\satyesvarjtok,\spacesymbol\satyesvarjprimetok\spacesymboltwotok\rangle}$}
& \samesubstring
& 2 \\
& $\satvarsatisfiedj = \valfalse \land \satvarsatisfiedjprime = \valtrue$
&
&
&
&
&
&
& 2 \\
& $\satvarsatisfiedj = \valtrue \land \satvarsatisfiedjprime = \valfalse$
&
&
& \multirow{2}{*}{\samesubstring}
& \multirow{2}{*}{$\subwordstring{\langle\spacesymbol\satyesvarjtok,\spacesymbol,\satyesvarjprimetok\spacesymboltwotok\rangle}$}
&
&
& 2 \\
& $\satvarsatisfiedj = \valfalse \land \satvarsatisfiedjprime = \valfalse$
&
&
&
&
&
& \samesubstring
& 3 \\

            \bottomrule
        \end{tabular}
        }
        \caption{Performance of merges on strings in $\dataset_5$, adapted from \citet{whittington-etal-2025-tokenisationnpc}. The dot symbol $\cdot$ denotes the string not changing under the given merge.}
        \label{tab:bottomup_dataset5_nphard_if}
    \end{table} 

\subsection{Proof that exactly \texorpdfstring{$2\satnvariables-1$}{2J-1} Merges Optimally Compress the \texorpdfstring{$2\satnvariables$ $\symbolzero^j$}{2J 0j} Strings}
\label{sec:2jminus1_merges_required}

\begin{sublemma}
\label{lemma:exactly_n_minus_one_merges}
    Given character-strings $\{\symbolzero^{j} \mid 1 \leq j \leq 2\satnvariables \}$, an optimal bottom-up tokeniser 
    requires exactly $2\satnvariables-1$ merges to encode all these strings into a single token.\footnote{Note the same is true for both optimal direct tokenisers, and optimal OPE tokenisers (defined in \cref{sec:unary_ope_is_nphard}).}
\end{sublemma}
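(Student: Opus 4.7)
The plan is to establish matching lower and upper bounds of $2\satnvariables - 1$ on the number of merges.

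For the lower bound, I would apply a counting argument to the induced vocabulary. A bottom-up tokeniser with merge sequence $\merges$ has vocabulary $\alphabet \cup \{\subwordtupper{k}{[1]} \circ \subwordtupper{k}{[2]} \mid \merge_k \in \merges\}$, of size at most $|\alphabet| + |\merges|$. For each $\charstring{0^j}$ with $1 \leq j \leq 2\satnvariables$ to compress into a single token under $\bottomuptoken[\merges]$, the subword $\subwordstring{0^j}$ must appear in this vocabulary. Only $\subwordstring{0^1}$ already lies in $\alphabet$, so the remaining $2\satnvariables - 1$ subwords each require a distinct merge, forcing $|\merges| \geq 2\satnvariables - 1$.

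For the matching upper bound, I would exhibit an explicit construction of length $2\satnvariables - 1$. Let $\kappa = \lfloor \log_2 (2\satnvariables) \rfloor$. First, a doubling stage includes the $\kappa$ merges $\mergestring{0^{2^i}}{0^{2^i}}$ for $i = 0, 1, \ldots, \kappa - 1$, producing the power-of-two tokens $\subwordstring{0^2}, \subwordstring{0^4}, \ldots, \subwordstring{0^{2^{\kappa}}}$. Second, a filling stage, ordered by increasing $j$, includes, for each non-power-of-two $j \in \{2, \ldots, 2\satnvariables\}$, the merge $\mergestring{0^{2^{a(j)}}}{0^{j - 2^{a(j)}}}$, where $2^{a(j)}$ is the largest power of two strictly less than $j$. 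Since $[2, 2\satnvariables]$ contains $\kappa$ powers of two, the total is $\kappa + (2\satnvariables - 1 - \kappa) = 2\satnvariables - 1$ merges.

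To verify correctness, I would first observe that the doubling stage, applied greedily left-to-right, rewrites $\charstring{0^j}$ into the token sequence dictated by the binary expansion of $j$: if $j = 2^{a_1} + 2^{a_2} + \cdots + 2^{a_s}$ with $a_1 > a_2 > \cdots > a_s$, then $\charstring{0^j}$ becomes $\subwordstring{\langle 0^{2^{a_1}}, 0^{2^{a_2}}, \ldots, 0^{2^{a_s}}\rangle}$. I would then use induction on $s$: the filling merge indexed by the partial sum $p_2 = 2^{a_{s-1}} + 2^{a_s}$ fires first on the rightmost pair, then the merge indexed by $p_3 = 2^{a_{s-2}} + p_2$ absorbs the next power-of-two token, and so on until the merge indexed by $j = p_s$ yields the single token $\subwordstring{0^j}$. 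Since $p_2 < p_3 < \cdots < p_s = j$, these partial-sum merges appear in the filling stage in precisely the required order.

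The main obstacle is verifying non-interference between filling merges for distinct $j$. The key observation is that the filling merge for $j$ can only fire on an adjacent pair whose left element is $\subwordstring{0^{2^{a(j)}}}$ and whose right element is exactly $\subwordstring{0^{j - 2^{a(j)}}}$; tracing which tokens populate the current rewriting of each $\charstring{0^{j'}}$ at that point, namely a power-of-two prefix of $j'$'s binary expansion followed by the so-far-consolidated suffix, shows that this merge fires on $\charstring{0^{j'}}$ precisely when the induction step for $j'$ demands it. This bookkeeping, once carried out, closes the verification.
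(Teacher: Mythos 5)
Your proposal is correct and takes essentially the same approach as the paper: a lower bound via counting one new token per merge (the paper phrases this as "a single merge can complete at most one target", you phrase it via the induced vocabulary size $|\alphabet|+|\merges|$ — equivalent), and an upper bound via the same two-stage doubling-then-filling construction combined with a binary-expansion argument. If anything your indexing ($\kappa = \lfloor\log_2(2\satnvariables)\rfloor$, doubling for $i=0,\dots,\kappa-1$, filling over all non-powers-of-two in $[2,2\satnvariables]$) is the cleaner and more careful version of the paper's informally stated ranges, and your stepwise partial-sum induction makes the ``it is easy to see'' step explicit.
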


\begin{proof}
We establish the result in two steps: 
\begin{enumerate}
    \item[\circled{1}] we prove that at \emph{least} $2\satnvariables-1$ merges are \emph{required} to reduce these strings to a single token;\looseness=-1
    \item[\circled{2}] we prove that $2\satnvariables-1$ merges are \emph{sufficient} to reduce these character-strings to a single token.
\end{enumerate}
Given these upper and lower bounds, we conclude exactly $2\satnvariables-1$ merges are required to reduce these character-strings to a single token. This completes the proof.
\end{proof}

\begin{mysublemmastep}\textnormal{(Step \circled{1}).}
   Given character-strings $\{\symbolzero^{j} \mid 1 \leq j \leq 2\satnvariables \}$, an optimal bottom-up tokeniser requires
   at \emph{least} $2\satnvariables-1$ merges to encode all these strings into a single token.
\end{mysublemmastep}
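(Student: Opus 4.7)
The plan is to prove the lower bound by a simple counting argument on token lengths, avoiding any reasoning about the dynamics of the bottom-up procedure. First, I would observe that since every character-string in $\{\symbolzero^{j} \mid 1 \leq j \leq 2\satnvariables\}$ consists only of the single character $\symbolzero$, any subword that can ever appear in the vocabulary induced by a merge sequence $\merges$ must have the form $\symbolzerotok^k$ for some $k \geq 1$. Each such token is therefore uniquely identified by its length, and the length of a token created by a merge $\mergestring{\subwordtupper{}{[1]}}{\subwordtupper{}{[2]}}$ equals the sum of the lengths of $\subwordtupper{}{[1]}$ and $\subwordtupper{}{[2]}$.

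Next, I would invoke the definition of the bottom-up encoding function, which associates with $\merges$ the vocabulary $\vocab = \alphabet \cup \{\subwordtupper{}{[1]} \circ \subwordtupper{}{[2]} \mid \mergestring{\subwordtupper{}{[1]}}{\subwordtupper{}{[2]}} \in \merges\}$. For each string $\symbolzero^{j}$ with $j \in \{1,\dots,2\satnvariables\}$ to be encoded as a single token by $\bottomuptoken[\merges]$, the token $\symbolzerotok^{j}$ must belong to $\vocab$. Since $\alphabet = \{\symbolzero\}$ contributes only $\symbolzerotok^{1}$, the remaining $2\satnvariables - 1$ tokens $\symbolzerotok^{2}, \symbolzerotok^{3}, \dots, \symbolzerotok^{2\satnvariables}$---all of pairwise distinct lengths---must each be produced by some merge in $\merges$.

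Finally, I would close the argument by noting that every merge produces exactly one new token, and that two merges yielding the same concatenated string produce the same token (so duplicating them is wasteful). Hence the number of distinct tokens in $\vocab \setminus \alphabet$ is at most $|\merges|$, and to accommodate the $2\satnvariables - 1$ required lengths we must have $|\merges| \geq 2\satnvariables - 1$, establishing the lower bound.

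The argument is purely combinatorial, and I do not anticipate a substantive obstacle; the only point requiring care is to frame the necessity correctly---specifically, to observe that failing to place $\symbolzerotok^{j}$ in $\vocab$ immediately precludes encoding $\symbolzero^{j}$ as a single token, so we do not need to analyse how the merges are ordered or how they interact left-to-right when establishing the lower bound.
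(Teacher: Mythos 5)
Your proof is correct and takes essentially the same approach as the paper's: a pure counting argument that each merge produces at most one new distinct token, the base alphabet supplies only $\symbolzerotok^1$ among the targets, and the $2\satnvariables-1$ remaining target lengths are pairwise distinct, forcing $|\merges|\ge 2\satnvariables-1$. (One minor slip: in this binary construction $\alphabet = \{\symbolzero, \symbolone\}$ rather than $\{\symbolzero\}$, but this has no bearing on the argument since $\symbolone$ cannot contribute to any token of the form $\symbolzerotok^k$.)
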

\begin{proof}
    The target set contains $2\satnvariables$ distinct values, one of which is the base symbol $\symbolzero$. Whenever a target becomes a single token through a merge, that merge must combine exactly two tokens whose concatenation equals that specific target. Since all targets are distinct, this concatenation cannot simultaneously equal any other target. Hence, a single merge can complete at most one target. 
    It follows that at least $2\satnvariables - 1$ merges are required to reduce all targets to single tokens.
\end{proof}

\begin{mysublemmastep}\textnormal{(Step \circled{2}).}
    Given character-strings $\{\symbolzero^{j} \mid 1 \leq j \leq 2\satnvariables \}$, there exists an explicit merge sequence that reduces all these strings to single tokens in exactly $2\satnvariables - 1$ merges.
\end{mysublemmastep}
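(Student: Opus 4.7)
My plan is to construct an explicit merge sequence of length $2J-1$ and verify that it reduces every $0^n$ for $n \in \{1, \ldots, 2J\}$ to a single token. I would generalise the doubling-plus-fill-in construction in \eqref{eq:bottomup_merges_initial} so that it handles any $2J$, not only powers of two. Letting $k = \lfloor \log_2(2J) \rfloor$, so $2^k \leq 2J < 2^{k+1}$, the doubling phase comprises the $k$ merges $(0^{2^i},\, 0^{2^i})$ for $i = 0, 1, \ldots, k-1$, which produce the tokens $0^2, 0^4, \ldots, 0^{2^k}$. The fill-in phase comprises the merges $(0^{2^i},\, 0^{i'})$, ordered by increasing $i$ and then by increasing $i'$, where $i'$ ranges over $\{1, \ldots, 2^i - 1\}$ for intermediate $i \in \{1, \ldots, k-1\}$ and over $\{1, \ldots, 2J - 2^k\}$ at the top level $i = k$. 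A direct count gives $k + \sum_{i=1}^{k-1}(2^i - 1) + (2J - 2^k) = 2J - 1$, matching the lower bound established in Step \circled{1}.

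To establish correctness, I would first prove by induction on $i$ that after the first $i+1$ doubling merges the tokenisation of $0^n$ is the greedy high-to-low base-$2^{i+1}$ decomposition of $n$. At the end of the doubling phase, this yields $\bottomuptoken[\merges_{\mathrm{doub}}](0^n) = \langle 0^{2^{a_1}}, \ldots, 0^{2^{a_m}}\rangle$, where $n = 2^{a_1} + \cdots + 2^{a_m}$ with $a_1 > a_2 > \cdots > a_m \geq 0$ listing the $1$-bit positions of $n$. The inductive step uses that each doubling merge scans left to right and pairs adjacent identical tokens, so an even-count run of $0^{2^i}$ collapses to half as many $0^{2^{i+1}}$ tokens, with any unpaired $0^{2^i}$ token left as the rightmost in its run.

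I would then maintain throughout the fill-in phase the invariant that the tokenisation of $0^n$ is a strictly length-decreasing sequence, and that each fill-in merge either fails to match this tokenisation or merges its two rightmost tokens into a new rightmost token whose length remains strictly less than that of its new left neighbour. Because the fill-ins are ordered from smallest $(i, i')$ upward, the unique applicable merge at each stage is precisely the one that contracts the current rightmost pair. Consequently, for each $p = m-1, m-2, \ldots, 1$ in turn, a merge $(0^{2^{a_p}},\, 0^{r_p})$ with $r_p = 2^{a_{p+1}} + \cdots + 2^{a_m}$ appears at level $i = a_p$ in the ordering and fires, and after the last of these (at level $a_1$) the tokenisation of $0^n$ collapses to $\langle 0^n \rangle$.

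The main obstacle is showing that fill-in merges never fire at unintended positions or leave the tail mis-aligned. The length-decreasing invariant handles both: each block size $0^{2^i}$ appears at most once in the tokenisation of $0^n$, so any given merge has at most one candidate site to fire at, and the rightmost-pair-first contraction guarantees that by the time a merge $(0^{2^{a_p}},\, 0^{r_p})$ is reached in the ordering, the string has already been reduced so that exactly this pair sits at its right end. Formalising this reduces to an induction on the number of $1$-bits $m$ of $n$, in which the inductive step checks that the relevant next merge is the next matching one in the global ordering.
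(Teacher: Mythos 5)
Your proposal is correct and follows the same doubling-then-fill-in strategy as the paper, but it is cleaner in two respects that are worth noting. First, by truncating the top fill-in level to $i'\in\{1,\ldots,2J-2^k\}$ (with $k=\lfloor\log_2(2J)\rfloor$) instead of always running $i'$ up to $2^i-1$, your construction produces exactly the tokens $\symbolzerotok^1,\ldots,\symbolzerotok^{2\satnvariables}$ and exactly $2\satnvariables-1$ merges for \emph{every} $\satnvariables$; the paper's explicit formula, which ranges $j$ over powers of two $\leq\satnvariables$ and always takes $j'$ up to $2^j-1$, only accounts for all required tokens and hits the stated count when $\satnvariables$ is itself a power of two, so your version is the more robust one to cite as the general construction. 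Second, your correctness argument fills in what the paper leaves to ``it is easy to see'': after the doubling phase each $\symbolzero^n$ sits in its binary decomposition in strictly decreasing order of powers, and the strictly length-decreasing invariant together with the global ordering (increasing $i$, then $i'$) forces the unique matching fill-in at each step to be the rightmost-pair contraction, with the new tail $2^{a_p}+r_p<2^{a_{p-1}}$ preserving the invariant; the strict decrease also guarantees each merge has at most one match site, which is exactly the point the paper's proof gestures at when it says a merge $\mergestring{\symbolzerotok^{2^j}}{\symbolzerotok^{j'}}$ never affects an unintended pair. Nothing essential is missing from your argument.
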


\begin{proof}
For the matching upper bound, we will construct an explicit merge sequence,
composed of two types of merges:
    \begin{enumerate}
        \item \textbf{Binary stage.} 
        For each power of two up to~$\satnvariables$---i.e., with $j \in \N$ such that $2^j \leq \satnvariables$---incrementally include binary merges as $\mergestring{{\symbolzerotok^{2^j}}}{{\symbolzerotok^{2^j}}}$.
        \item \textbf{Extension stage.} 
        For each power of two up to~$\satnvariables$---i.e., with $j \in \N$ such that $2^j \leq \satnvariables$---%
        incrementally create non-binary merges by merging binary tokens $\subwordstring{\symbolzerotok^{2^j}}$
        with non-binary ones $\subwordstring{\symbolzerotok^{j'}}$ (with $j' \in \N$ such that $j' < 2^j$).
        These merges will thus be $\mergestring{\symbolzerotok^{2^j}}{{\symbolzerotok^{j'}}}$.\footnote{As we will see in the proof, the smallest non-fully merged value always consists of only two symbols, such that the ``rightmost'' tiebreaker is not necessary.}
    \end{enumerate}
    These merges are combined as:
        \begin{align}
            \underbrace{\bigcirc_{j=1}^{\lfloor \log_2 \satnvariables \rfloor} [\mergestringwithparens{{\symbolzerotok^{2^j}}}{{\symbolzerotok^{2^j}}} ]}_{\text{binary merges}}
            \qquad
            \circ
            \qquad
            \underbrace{\bigcirc_{j=1}^{\lfloor \log_2 \satnvariables \rfloor} \bigcirc_{j'=1}^{2^j-1} [\mergestringwithparens{\symbolzerotok^{2^j}}{{\symbolzerotok^{j'}}}]}_{\text{extension merges}}
        \end{align}
        Note that, as merges are created incrementally, token $\subwordstring{\symbolzerotok^{2^j}}$ always exists when merge $\mergestring{{\symbolzerotok^{2^j}}}{{\symbolzerotok^{2^j}}}$ is applied.
        Similarly, for $j' < 2^j$, both token $\subwordstring{\symbolzerotok^{2^j}}$ and $\subwordstring{\symbolzerotok^{j'}}$ will exist when merge $\mergestring{\symbolzerotok^{2^j}}{{\symbolzerotok^{j'}}}$ is applied.
        Finally, for $j', j'' < 2^j$, tokens $\subwordstring{\symbolzerotok^{j'}}$ and $\subwordstring{\symbolzerotok^{j''}}$ will both be created before any extension merge with left-side $\subwordstring{\symbolzerotok^{2^j}}$ is applied; thus, a merge $\mergestring{\symbolzerotok^{2^j}}{{\symbolzerotok^{j'}}}$ will never affect subword-string $\subwordstringwithangle{\symbolzerotok^{2^j}, {\symbolzerotok^{j''}}}$ or \textit{vice-versa}.
        After these merges are applied, it is easy to see that each character-string $\{\symbolzero^{j} \mid 1 \leq j \leq 2\satnvariables \}$ will be represented as a single symbol.
        As the merge-sequence above contains $2\satnvariables-1$ merges, this completes the proof.\looseness=-1
\end{proof}

\section{Proof of Backward Step of \texorpdfstring{\Cref{thm:bup_binary_npcomplete}}{Theorem}}
\label{appendix:bbtok_nphard_onlyif}

We again start with defining some useful notions and redefine compliant tokenisers.

First, even though this section is about bottom-up tokenisers, we define a term addressing direct tokenisers, meaning this definition describes tokenisers with tokens instead of merges. 
We will use this definition in our lemma to prove a fact about direct tokenisers, which we later generalise by showing that it applies to bottom-up tokenisers as well.
We define a \defn{\satname-compliant (direct) tokeniser} 
to be any tokeniser which:  
(i) contains all tokens of the form $\bdoneformtok$;
and 
(ii) contains either $\symbolonetok\satyesvarjtok\symbolonetok, \symbolonetok\satyesvarjtok\spacesymboltwotok$ or $\symbolonetok\satnotvarjtok\symbolonetok, \spacesymboltwotok\satnotvarjtok\symbolonetok$ for each $j \in \{1, \dots, \satnvariables\}$.
Otherwise, we call the tokeniser \defn{\satname-noncompliant}.

We further adapt the definition of 101-strings to also include all character-strings of the form $\symbolone\symbolone\symbolzero^{+}\symbolone$ and $\symbolone\symbolzero^{+}\symbolone\symbolone$.
Considering the datasets output by \cref{reduction:threeoccmaxtwosat_to_bbtok}, we know that 
there are no 101-strings in dataset $\dataset_1$.
Further, we know that each unique 101-string appears in datasets $\dataset_2$, $\dataset_3$, and $\dataset_4$ exactly $2\nrepeatvar'$, $2\nrepeatvar''$, and $2\nrepeatvar'''$ times, respectively, and exactly 3 times in $\dataset_5$ (this is due to us working with the three-occurrences variant of \texttt{MAX2SAT} and to the fact that $\satyesvarj = \symbolzero^{2j-1}$ and $\satnotvarj = \symbolzero^{2j}$).
We now prove the following lemma.\looseness=-1

\begin{restatable}{lemma}{bbtoknphardonlyif}
\label{lemma:bbtok_nphard_onlyif}
     If the \bbtok instance output by \cref{reduction:threeoccmaxtwosat_to_bbtok} is satisfiable, the \threeoccmaxtwosat instance which generated it is as well. Formally:
    \(
        \bupbtok(\reductiontwofuncfull) \implies \tomaxsatfull
    \).
\end{restatable}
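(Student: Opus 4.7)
My plan is to reduce the bottom-up case to an analogous direct-tokeniser analysis and then mirror the compliance argument of \cref{lemma:dbtok_nphard_onlyif}. Suppose a merge sequence $\merges$ with $|\merges| = 10\satnvariables$ achieves $\objectivefunclength(\bottomuptoken[\merges], \dataset) \leq \maxsymbols$ on the dataset $\dataset$ produced by \cref{reduction:threeoccmaxtwosat_to_bbtok}. Its induced vocabulary $\vocab_{\merges} = \alphabet \cup \{\subwordt{1}\!\circ\!\subwordt{2} \mid (\subwordt{1},\subwordt{2}) \in \merges\}$ satisfies $|\vocab_{\merges}| \leq |\alphabet| + 10\satnvariables$, and since direct segmentation is optimal over any fixed vocabulary, $\objectivefunclength(\directtoken[\vocab_{\merges}], \dataset) \leq \objectivefunclength(\bottomuptoken[\merges], \dataset) \leq \maxsymbols$. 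It therefore suffices to prove that any direct tokeniser with vocabulary budget $10\satnvariables$ achieving compression at most $\maxsymbols$ on $\dataset$ must be \satname-compliant, and that reading off its polarity choices via the natural analogue of $\toktomaxsat$ yields a \threeoccmaxtwosat assignment satisfying at least $\minclauses$ clauses.

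Compliance is established in four exchange-argument stages that exploit the geometrically spaced multiplicities $\nrepeatvar \gg \nrepeatvar' \gg \nrepeatvar'' \gg \nrepeatvar''' \gg 1$ from the reduction. Stage 1 uses $\dataset_1$ to force all base tokens of the form $\bdoneformtok$ together with the shared $\spacesymboltwotok$: replacing any such base token by a non-base one costs $\nrepeatvar$ symbols in $\dataset_1$, but saves at most $\nrepeatvar' + 2\nrepeatvar'' + 2\nrepeatvar''' + 3$ elsewhere, since the relevant extended 101-like patterns (now also including $\symbolonetok\symbolonetok\symbolzerotok^+\symbolonetok$ and $\symbolonetok\symbolzerotok^+\symbolonetok\symbolonetok$) each appear at most that many times across $\dataset_2 \cup \dataset_3 \cup \dataset_4 \cup \dataset_5$. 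Stage 2 uses $\dataset_2$ analogously to restrict the remaining $2\satnvariables$ vocabulary slots to the clause-covering forms $\symbolonetok\satyesvarjtok\symbolonetok, \symbolonetok\satnotvarjtok\symbolonetok, \symbolonetok\satyesvarjtok\spacesymboltwotok, \spacesymboltwotok\satnotvarjtok\symbolonetok$, each swap now costing $\nrepeatvar'$ while saving at most $2\nrepeatvar'' + 2\nrepeatvar''' + 3$. Stages 3a and 3b then use $\dataset_3$ and $\dataset_4$ to force, for each $j$, a consistent polarity: either both $\symbolonetok\satyesvarjtok\symbolonetok$ and $\symbolonetok\satyesvarjtok\spacesymboltwotok$ (encoding $\satvalsolj = \valtrue$), or both $\symbolonetok\satnotvarjtok\symbolonetok$ and $\spacesymboltwotok\satnotvarjtok\symbolonetok$ (encoding $\satvalsolj = \valfalse$); any mixed assignment leaves one string in the $j$-th pair of $\dataset_3 \cup \dataset_4$ uncompressed, and the multiplicities $\nrepeatvar''$ and $\nrepeatvar'''$ dominate the at most $3$-per-string savings it could redeem in $\dataset_5$.

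Given \satname-compliance, the compressed length on $\dataset_1 \cup \dataset_2 \cup \dataset_3 \cup \dataset_4$ is pinned at $(8\satnvariables + 1)\nrepeatvar + 6\satnvariables\nrepeatvar' + 4\satnvariables\nrepeatvar'' + 4\satnvariables\nrepeatvar'''$, leaving exactly $3\satnclauses - \minclauses$ of budget for $\dataset_5$. As in Stage 4 of \cref{lemma:dbtok_nphard_onlyif}, a clause string in $\dataset_5$ compresses to two tokens precisely when at least one of its two literals is true under $\satvalssol$, and to three otherwise; hence $\satvalssol$ satisfies at least $\minclauses$ clauses, which is what we set out to show. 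The main obstacle will be Stages 1 and 2: the richer base vocabulary and the larger family of relevant 101-like strings (which now include prefixes and suffixes of length two) mean many more ways for a single non-compliant token to collect compression across different subdatasets, so the enumeration of worst-case savings must be done carefully; the multiplicities in \cref{reduction:threeoccmaxtwosat_to_bbtok} are calibrated precisely to dominate these savings. A minor subtlety is that $|\vocab_{\merges}|$ may be strictly smaller than $|\alphabet| + 10\satnvariables$ if some merges produce coinciding tokens, but this only tightens the budget and does not affect the argument.
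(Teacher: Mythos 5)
Your proposal follows the paper's proof almost exactly: reduce to direct tokenisers (since direct does at least as well on a fixed vocabulary), run a staged exchange argument over $\dataset_1,\dots,\dataset_4$ to force \satname-compliance, then read off a satisfying assignment from $\dataset_5$. The one structural deviation is that you drop the paper's Step \circled{5}, which shows that a \satname-compliant direct tokeniser can be realised as a bottom-up tokeniser with identical compression; this is in fact dispensable for the backward direction (only the trivial inequality $\objectivefunclength(\directtoken[\vocab_{\merges}],\cdot) \leq \objectivefunclength(\bottomuptoken[\merges],\cdot)$ is used, never its converse), so your streamlining is legitimate. Two small points worth flagging: (i) your phrasing ``any direct tokeniser with budget $10\satnvariables$ achieving $\leq \maxsymbols$ must be compliant'' is stronger than what the exchange argument gives — it shows an \emph{optimal} direct tokeniser must be compliant, and you then use that the optimal one achieves $\leq\maxsymbols$ because yours does; (ii) your worst-case saving bounds in Stages 1 and 2 are undercounted (the paper's are $t\bigl(2\nrepeatvar' + 2(2\nrepeatvar'' + 2\nrepeatvar''' + 3)\bigr)$ and $2t(2\nrepeatvar'' + 2\nrepeatvar''' + 3)$, with each 101-like pattern occurring \emph{twice} per block of $\dataset_2,\dots,\dataset_4$ and savings of $2$ per occurrence in $\dataset_3,\dots,\dataset_5$), though you correctly flag this enumeration as the delicate part and the multiplicities are indeed calibrated to absorb it.
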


\begin{proof}
    Assume this \bbtok instance $(\dataset, \vocabsize, \maxsymbols)$---where $(\dataset, \vocabsize, \maxsymbols) = \reductiontwofuncfull$---is satisfiable, i.e., that $\bupbtok(\reductiontwofuncfull)$ evaluates to true.
    We must prove that, in this case, $\tomaxsatfull$ also evaluates to true.
    Now, let $\mergesopt$ be an arbitrary
    optimal solution to $(\dataset, \vocabsize, \maxsymbols)$.
    We know, by definition, that:
    \begin{align}
        \bupbtok(\reductiontwofuncfull) \iff \Big(\objectivefunclength(\bottomuptoken[\mergesopt], \dataset) \leq \maxsymbols\Big)
    \end{align}
    We can thus prove this lemma by showing the following implication:
    \begin{align}
        \Big(\objectivefunclength(\bottomuptoken[\mergesopt], \dataset) \leq \maxsymbols\Big) \implies \tomaxsatfull
    \end{align}
    
    When comparing two bottom-up tokenisers, things quickly get messy, because we have to not only consider the merges, but also their order.
    For this reason, we show that \satname-compliant direct tokenisers can be transformed into bottom-up tokenisers without loss of compression quality.
    Thus, for the \satname-compliant tokeniser, we can consider the direct tokeniser instead.
    The key idea for this to work is that all target strings are hit via a sequence of merges such that each intermediate merge also hits a target (which has high multiplicity), such that this target must also be included as a token by a direct tokeniser.
    We also compare to \satname-noncompliant direct tokenisers, which are by definition at least as strong as bottom-up tokenisers; thus we can compute upper bounds on their performance.

    Let $\vocabopt$ again be the optimal direct tokeniser for $(\dataset, \vocabsize, \maxsymbols)$.
    We will proceed in six steps:
    \begin{enumerate}
        \item[\circled{1}] we prove that $\vocabopt$ must include all tokens of the form 
        $\bdoneformtok$;
        \item[\circled{2}] we prove that $\vocabopt$ must, in addition to the tokens above, only include tokens of the form $\bdtwoformtok$;
        \item[\circled{3}] we prove that $\vocabopt$ may only include, for each $j$, either token $\symbolonetok\satyesvarjtok\symbolonetok$ or $\symbolonetok\satnotvarjtok\symbolonetok$, and either token $\spacesymboltwotok\satyesvarjtok\symbolonetok$ or $\symbolonetok\satnotvarjtok\spacesymboltwotok$;
        \item[\circled{4}] we prove that $\vocabopt$ may only include, for each $j$, either tokens $\spacesymboltwotok\satyesvarjtok\symbolonetok, \symbolonetok\satyesvarjtok\symbolonetok$ or $\symbolonetok\satnotvarjtok\symbolonetok, \symbolonetok\satnotvarjtok\spacesymboltwotok$
        \item[\circled{5}] we prove that for any \satname-compliant $\vocabopt$, there exists a merge sequence $\mergesopt$ with the same performance;
        \item[\circled{6}] finally, we prove that if $\Big(\objectivefunclength(\bottomuptoken[\mergesopt], \dataset) \leq \maxsymbols\Big)$, we can build a variable assignment which satisfies this \threeoccmaxtwosat instance $(\satvars, \satclauses, \minclauses)$.
    \end{enumerate}
    Note that, together, steps \circled{1} to \circled{4} show that $\vocabopt$ must be the vocabulary of a \satname-compliant direct tokeniser; in step \circled{5}, we show that we can convert any \satname-compliant $\vocabopt$ to a merge sequence $\mergesopt$ without losing compression, showing an equivalence between the two types of tokenisers for these reduced instances; and in step \circled{6}, we will then rely on function $\toktomaxsat$ (defined above) to convert this merge sequence into a satisfying assignment $\satvals = \toktomaxsat(\vocabopt)$ for the instance $(\satvars, \satclauses, \minclauses)$.
    \end{proof}

    \begin{mylemmastep} \textnormal{(Step \circled{1}).}
        An optimal (direct) tokeniser must include all tokens of the form $\bdoneformtok, \bdtwoformtok$, i.e.:
        \begin{align}
            \Big\{
            \bdoneformtok, \bdtwoformtok
            \smash{\Big\}_{j=1}^{\satnvariables}} \subseteq \vocabopt
        \end{align}
    \end{mylemmastep}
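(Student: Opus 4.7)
My plan is to mirror the strategy used in Step~\circled{1} of \cref{appendix:dbtok_nphard_onlyif} and argue by contradiction. Assume the optimal direct tokeniser $\vocabopt = \vocabbad$ is missing $\wrongtokens > 0$ tokens of the form $\bdoneformtok$, and construct $\vocabgood$ from $\vocabbad$ by swapping out $\wrongtokens$ arbitrary non-$\bdoneformtok$ tokens in favour of the missing ones. The goal is to show $\objectivefunclength(\directtoken[\vocabgood], \dataset) < \objectivefunclength(\directtoken[\vocabbad], \dataset)$, contradicting optimality.

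First I would analyse $\dataset_1$. Every distinct string in $\dataset_1$ is exactly one of the $\bdoneformtok$ forms (with $\spacesymbol = \symbolone\symbolone$ contributing the single extra entry), so under $\vocabgood$ each such string compresses to one token, while under $\vocabbad$ each of the $\wrongtokens$ strings whose corresponding token was removed needs at least two tokens. Multiplying by the per-string multiplicity $\nrepeatvar$ yields $\objectivefunclength(\directtoken[\vocabbad], \dataset_1) - \objectivefunclength(\directtoken[\vocabgood], \dataset_1) \geq \wrongtokens \nrepeatvar$.

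Next I would upper-bound how much $\vocabbad$'s $\wrongtokens$ extra non-$\bdoneformtok$ tokens can save on $\dataset_2 \cup \dataset_3 \cup \dataset_4 \cup \dataset_5$. The key observation is that $\bdoneformtok$ already supplies all the length-one and length-two building blocks needed for those datasets, so any additional useful token must contain a 101-substring of the form $\symbolone\satyesvarj\symbolone$ or $\symbolone\satnotvarj\symbolone$. A direct count shows that each such 101-string occurs at most $2\nrepeatvar'$ times in $\dataset_2$, $2\nrepeatvar''$ times in $\dataset_3$, $2\nrepeatvar'''$ times in $\dataset_4$, and $3$ times in $\dataset_5$; and a short case analysis over the possible shapes of the extra token (101-strings of length three, length-four variants such as $\symbolonetok\satyesvarjtok\spacesymboltwotok$, and 10101-strings) shows that each occurrence can save at most two tokens relative to the $\bdoneformtok$-only baseline. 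Hence $\vocabbad$'s total advantage on these four subdatasets is bounded by $2\wrongtokens(2\nrepeatvar' + 2\nrepeatvar'' + 2\nrepeatvar''' + 3)$.

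Putting the two bounds together, $\objectivefunclength(\directtoken[\vocabgood], \dataset) - \objectivefunclength(\directtoken[\vocabbad], \dataset) \leq 2\wrongtokens(2\nrepeatvar' + 2\nrepeatvar'' + 2\nrepeatvar''' + 3) - \wrongtokens\nrepeatvar$, which is strictly negative precisely because the reduction sets $\nrepeatvar = 2(2\nrepeatvar' + 2\nrepeatvar'' + 2\nrepeatvar''' + 3) + 1$. This delivers the desired contradiction and finishes the step. The part I expect to require the most care is the ``at most two tokens saved per occurrence'' bound: it is immediate for 101-tokens of length three, but must also be uniformly verified for longer candidates (in particular, 10101-tokens which could, a priori, collapse an entire length-five string in $\dataset_3, \dataset_4, \dataset_5$ into a single token), so that no single occurrence of any extra token ever beats the $\bdoneformtok$-baseline by more than two.
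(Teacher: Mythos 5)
Your proposal follows essentially the same route as the paper: argue by contradiction, swap the $\wrongtokens$ non-compliant tokens in $\vocabbad$ for the missing ones to build $\vocabgood$, charge $\vocabbad$ a loss of $\geq\wrongtokens\nrepeatvar$ on $\dataset_1$, and cap its possible advantage on $\dataset_2\cup\cdots\cup\dataset_5$ by a per-101-string occurrence count. Two small remarks are in order.

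First, you (correctly) restrict the ``missing'' tokens to the $\bdoneformtok$ forms even though the statement as printed also lists $\bdtwoformtok$. That looks like a typo in the paper: the $\bdtwoformtok$ family has $4\satnvariables$ members, which cannot all fit into the budget $\vocabsize=10\satnvariables$ after the $8\satnvariables$ non-alphabet $\bdoneformtok$ tokens are accounted for, and Steps~\circled{2} and \circled{3} of the paper's own proof then explicitly constrain which $\bdtwoformtok$ tokens may appear. The paper's proof text for Step~\circled{1} indeed only charges losses on $\dataset_1$, whose strings are exactly the $\bdoneformtok$ forms, so your reading matches the intended argument. Second, your ``at most two tokens saved per occurrence'' is applied uniformly, whereas the paper charges only one per occurrence in $\dataset_2$ (strings there sit at two tokens under $\vocabgood$ and can drop only to one), giving the tighter bound $\wrongtokens\bigl(2\nrepeatvar' + 2(2\nrepeatvar''+2\nrepeatvar'''+3)\bigr)$. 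Your looser bound $2\wrongtokens\bigl(2\nrepeatvar'+2\nrepeatvar''+2\nrepeatvar'''+3\bigr)=\wrongtokens(\nrepeatvar-1)$ still sits strictly below $\wrongtokens\nrepeatvar$, so the contradiction goes through exactly as you say — but note there is no slack to spare, so if you keep the uniform ``two per occurrence'' bound you need the precise definition $\nrepeatvar=2(2\nrepeatvar'+2\nrepeatvar''+2\nrepeatvar'''+3)+1$ rather than any weaker ``$\nrepeatvar$ large enough.'' Finally, a notational slip: the extra $\dataset_1$ entry is $\spacesymboltwo=\symbolone\symbolone$, not $\spacesymbol$.
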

    \begin{proof}
        We prove this step by contradiction.
        Assume there exists an optimal tokeniser with vocabulary $\vocabbad$ which does not include $\wrongtokens > 0$ of the tokens above.
        Now, remove $\wrongtokens$ arbitrarily chosen tokens in this vocabulary which are not of the form above, and replace them with the missing tokens in this set.
        We denote this new tokeniser's vocabulary by $\vocabgood$.
        Note that the strings in $\dataset_1$ with these missing tokens were represented with at least $2$ symbols under $\vocabbad$, but with a single token under $\vocabgood$, i.e.:
        \begin{align}
            \label{eq:bubbin_onlyif_step1_objective_dataset_1}
            \objectivefunclength(\directtoken[\vocabbad], \dataset_1) \geq (8\satnvariables + 1 + t)\nrepeatvar, \qquad
            \objectivefunclength(\directtoken[\vocabgood], \dataset_1) = (8\satnvariables + 1)\nrepeatvar
        \end{align}
        Further, note that under $\vocabgood$, we have that strings in dataset $\dataset_2$ are compressed to at most two symbols, while strings in $\dataset_3$, $\dataset_4$, and $\dataset_5$ are compressed to at most three symbols:
        \begin{align}
            \forall \characters \in \dataset_2\colon \objectivefunclength(\directtoken[\vocabgood], \characters) \leq 2,\quad
            \forall \characters \in \dataset_3 \cup \dataset_4 \cup \dataset_5: 
            \objectivefunclength(\directtoken[\vocabgood], \characters) \leq 3
        \end{align}
        To improve on this compressed length, $\vocabbad$ must, consequently, compress strings in $\dataset_2$ to a single symbol, or strings in $\dataset_3$, $\dataset_4$, and $\dataset_5$ to one or two symbols.
        As before, this can only be done if the noncompliant tokens in $\vocabbad$ contain 101-strings.\footnote{This follows the same argument as in the proof of \cref{lemma:dbtok_nphard_onlyif}. Note that the extension of 101-strings to include strings of the form $\symbolone\symbolone\symbolzero^{+}\symbolone$ and $\symbolone\symbolzero^{+}\symbolone\symbolone$ does not break the argument, as the substring has to appear as a prefix or suffix to yield a saving. Thus, even though the strings of form $\symbolone\symbolzero^{+}\symbolone$ are included in the new strings, we do not have to count those occurrences.}
        As discussed above, however, each 101-string appears, as a prefix or suffix, at most: $2\nrepeatvar'$ times in $\dataset_2$,
        $2\nrepeatvar''$ times in $\dataset_3$,
        $2\nrepeatvar'''$ times in $\dataset_4$, and
        $3$ times in $\dataset_5$.
        This gives us a best case scenario---in which all the strings in which a new token appears are compressed to a single symbol---where:
            \begin{align}
                \objectivefunclength(\directtoken[\vocabgood], \dataset_2 \cup \dataset_3 \cup \dataset_4 \cup \dataset_5) - &\objectivefunclength(\directtoken[\vocabbad], \dataset_2 \cup \dataset_3 \cup \dataset_4 \cup \dataset_5) \\
                &\qquad\qquad\qquad\qquad \leq t(2\nrepeatvar' + 2(2\nrepeatvar'' + 2\nrepeatvar''' + 3))\nonumber
            \end{align}
        As the difference in \cref{eq:bubbin_onlyif_step1_objective_dataset_1} is of at least $t\nrepeatvar$ tokens, we put these together:
        \begin{align}
            \objectivefunclength(\directtoken[\vocabgood], \dataset) - \objectivefunclength(\directtoken[\vocabbad], \dataset)
            \leq t(2\nrepeatvar' + 2(2\nrepeatvar'' + 2\nrepeatvar''' + 3)) - t\nrepeatvar
        \end{align}
        Since $\nrepeatvar > 2(2\nrepeatvar' + 2\nrepeatvar'' + 2\nrepeatvar''' + 3)$, this difference is smaller than zero, implying that $\vocabgood$ improves on $\vocabbad$. This shows a contradiction, which completes our proof.
    \end{proof}    
    
    \begin{mylemmastep} \textnormal{(Step \circled{2}).}
        An optimal tokeniser must include all tokens of the form  $\bdoneformtok$, and further only tokens of the form $\bdtwoformtok$, i.e.:
        \begin{align}
            &\Big\{
            \bdoneformtok
            \Big\}_{j=1}^{\satnvariables} \subseteq \vocabopt 
            \quad\texttt{and} 
            \\ 
            &\qquad\qquad \vocabopt \subset \Big\{
            \bdoneformtok,
            \bdtwoformtok
            \smash{\Big\}_{j=1}^{\satnvariables}} \nonumber
        \end{align}
    \end{mylemmastep}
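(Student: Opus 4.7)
My plan is to prove Step 2 by contradiction through a local swap argument, mirroring Step 2 of the direct case in \cref{lemma:dbtok_nphard_onlyif}. Suppose for contradiction that an optimal $\vocabbad$ contains $\wrongtokens > 0$ tokens that are neither in $\{\bdoneformtok\}_{j=1}^{\satnvariables}$ nor in $\{\bdtwoformtok\}_{j=1}^{\satnvariables}$. Since Step 1 already forces $\vocabbad$ to include all of $\bdoneform$, and since $\vocabsize = 10\satnvariables$ leaves at most $2\satnvariables$ slots for tokens beyond Step 1 while $\{\bdtwoformtok\}_{j=1}^{\satnvariables}$ contains $4\satnvariables$ distinct tokens, I can always choose $\wrongtokens$ length-3 tokens from $\bdtwoform$ that are absent from $\vocabbad$ and define $\vocabgood$ by swapping these in for the $\wrongtokens$ non-compliant tokens in $\vocabbad$.

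The next step is to compare the compressions subdataset by subdataset. Since both vocabularies agree on the Step 1 tokens, they tie on $\dataset_1$: $\objectivefunclength(\directtoken[\vocabbad], \dataset_1) = \objectivefunclength(\directtoken[\vocabgood], \dataset_1) = (8\satnvariables + 1)\nrepeatvar$. On $\dataset_2$, each of the $4\satnvariables$ string types compresses to $1$ symbol exactly when its matching $\bdtwoform$ token is in the vocabulary and to $2$ symbols otherwise (using Step 1 tokens alone), so $\vocabgood$ saves $\wrongtokens \nrepeatvar'$ symbols on $\dataset_2$ relative to $\vocabbad$. The crucial ingredient is bounding the non-compliant tokens' advantage on $\dataset_3 \cup \dataset_4 \cup \dataset_5$: under $\vocabgood$, every string in these three subdatasets already compresses to at most $3$ symbols, and by the same reasoning as in Step 1 any token that improves on this baseline must contain a 101-string as prefix or suffix. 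Each unique 101-string appears at most $2\nrepeatvar'' + 2\nrepeatvar''' + 3$ times across these three subdatasets, and each appearance can save at most $2$ symbols, giving a total gain of at most $2\wrongtokens(2\nrepeatvar'' + 2\nrepeatvar''' + 3)$ for $\vocabbad$ over $\vocabgood$ there.

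Combining these bounds yields $\objectivefunclength(\directtoken[\vocabgood], \dataset) - \objectivefunclength(\directtoken[\vocabbad], \dataset) \leq 2\wrongtokens(2\nrepeatvar'' + 2\nrepeatvar''' + 3) - \wrongtokens \nrepeatvar'$, which is strictly negative by the choice $\nrepeatvar' = 2(2\nrepeatvar'' + 2\nrepeatvar''' + 3) + 1$ fixed in \cref{reduction:threeoccmaxtwosat_to_bbtok}; hence $\vocabgood$ strictly improves on $\vocabbad$, contradicting optimality. The main obstacle I expect lies in justifying the per-occurrence $2$-symbol savings bound: the generalised 101-string definition in this section (including $\symbolone\symbolone\symbolzero^{+}\symbolone$ and $\symbolone\symbolzero^{+}\symbolone\symbolone$) lets non-compliant tokens span $\spacesymboltwo$ boundaries, so I need to verify that no such token can collapse more tokens per occurrence than the bound allows. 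A secondary concern is checking the 101-string multiplicities uniformly, since $\dataset_3$ and $\dataset_4$ each contain both a symmetric variant and a $\spacesymboltwo$-padded mirror variant, differing from the simpler structure of the direct case.
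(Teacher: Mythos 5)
Your proposal is correct and matches the paper's proof of this step almost line for line: the same swap from $\vocabbad$ to $\vocabgood$, the same subdataset-by-subdataset accounting (tie on $\dataset_1$, gain of $\wrongtokens\nrepeatvar'$ on $\dataset_2$, loss bounded by $2\wrongtokens(2\nrepeatvar''+2\nrepeatvar'''+3)$ on $\dataset_3\cup\dataset_4\cup\dataset_5$), and the same conclusion from $\nrepeatvar' > 2(2\nrepeatvar''+2\nrepeatvar'''+3)$. The obstacle you flag about the generalised 101-strings is exactly the one the paper addresses in a footnote to Step \circled{1} (a token saves symbols only when the 101-string appears as a prefix or suffix, so the extended forms do not inflate the occurrence count), so your plan closes cleanly.
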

    \begin{proof}
        As before, we prove this step by contradiction.
        Given step $\circled{1}$, we know that an optimal tokeniser includes all tokens $\bdoneformtok$.
        Now, assume there exists an optimal tokeniser with vocabulary $\vocabbad$ with $\wrongtokens > 0$ tokens which are not of the form $\bdoneformtok$ or $\bdtwoformtok$; we will call these tokens non-compliant here.
        Choose an arbitrary set of $\wrongtokens$ unused compliant tokens---i.e., with form $\bdtwoformtok$---to replace the non-compliant tokens with, forming a new tokeniser's vocabulary $\vocabgood$.
        Both these vocabularies compress strings in $\dataset_1$ equally:
        \begin{align}
            \objectivefunclength(\directtoken[\vocabbad], \dataset_1) = (8\satnvariables+1)\nrepeatvar, \qquad
            \objectivefunclength(\directtoken[\vocabgood], \dataset_1) = (8\satnvariables+1)\nrepeatvar
        \end{align}
        For strings in $\dataset_2$: if the entire string is in the vocabulary, it is encoded as a single token; else, it is represented with two symbols.
        Under $\vocabgood$, there are $2\satnvariables$ tokens covering strings in $\dataset_2$.
        Under $\vocabbad$, there are only $(2\satnvariables - t)$ tokens covering strings in $\dataset_2$.
        This implies:
        \begin{align}
            \objectivefunclength(\directtoken[\vocabbad], \dataset_2) = (6\satnvariables + t)\nrepeatvar', \qquad
            \objectivefunclength(\directtoken[\vocabgood], \dataset_2) = 6\satnvariables\nrepeatvar'
        \end{align}
        For strings in $\dataset_3$, $\dataset_4$, and $\dataset_5$, an argument similar to the previous step applies: 
        (i) only tokens containing 101-strings can compress these datasets;
        (ii) each 101-string appears, as a prefix or suffix, at most $2\nrepeatvar''+2\nrepeatvar'''+3$ times in them;
        (iii) each 101-string will lead to at most two symbols being saved.
        As $\vocabbad$ differs from $\vocabgood$ in $\wrongtokens$ tokens, we get that it will improve on it by at most: 
        \begin{align}
            \objectivefunclength(\directtoken[\vocabgood], \dataset_3 \cup \dataset_4 \cup \dataset_5) - \objectivefunclength(\directtoken[\vocabbad], \dataset_3 \cup \dataset_4 \cup \dataset_5) 
            \leq 2t(2\nrepeatvar''+2\nrepeatvar'''+3)
        \end{align}
        Summing together the compression on all datasets, we get that their difference is bounded by:
        \begin{align}
            \objectivefunclength(\directtoken[\vocabgood], \dataset) - \objectivefunclength(\directtoken[\vocabbad], \dataset) 
            \leq 2t(2\nrepeatvar''+2\nrepeatvar'''+3) - t\nrepeatvar'
        \end{align} 
        As $\nrepeatvar' > 2(2\nrepeatvar''+2\nrepeatvar'''+3)$, this difference is smaller than zero, implying that $\vocabgood$ improves on $\vocabbad$. This shows a contradiction, which completes our proof.
    \end{proof}    
    
    \begin{mylemmastep} \textnormal{(Step \circled{3}).} 
        An optimal tokeniser must contain all tokens of the form  $\bdoneformtok$ and further only tokens of the form $\bdtwoformtok$, and for each $1 \leq j \leq \satnvariables$, it must contain exactly one of ${\token{\symbolonetok\satyesvarjtok\symbolonetok}}, {\token{\symbolonetok\satnotvarjtok\symbolonetok}}$, and exactly one of ${\token{\symbolonetok\satyesvarjtok\spacesymboltwotok}}, {\token{\spacesymboltwotok\satnotvarjtok\symbolonetok}}$.
    \end{mylemmastep}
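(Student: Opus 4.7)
The plan is to apply a contradiction-and-swap argument in the spirit of Step 3 of \Cref{lemma:dbtok_nphard_onlyif}, adapted to the fact that each variable now offers \emph{two} candidate pairs of three-character tokens. For brevity I would write $A_j = \symbolonetok\satyesvarjtok\symbolonetok$, $B_j = \symbolonetok\satnotvarjtok\symbolonetok$, $C_j = \symbolonetok\satyesvarjtok\spacesymboltwotok$, $D_j = \spacesymboltwotok\satnotvarjtok\symbolonetok$, and set $a_j = |\vocabopt \cap \{A_j, B_j\}|$, $c_j = |\vocabopt \cap \{C_j, D_j\}|$. By Steps 1 and 2, $\vocabopt$ equals $\bdoneformtok$ together with exactly $2\satnvariables$ tokens drawn from $\{A_j, B_j, C_j, D_j\}_{j=1}^{\satnvariables}$ (after accounting for $\satyesvar_1 = \symbolzero \in \alphabet$), so $\sum_j (a_j + c_j) = 2\satnvariables$ and it suffices to show $a_j = c_j = 1$ for every $j$.

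I would split the argument into two sub-steps. \emph{Sub-step A}: show $a_j + c_j = 2$ for every $j$. If not, the budget equation forces the existence of $j_1, j_2$ with $a_{j_1} + c_{j_1} \geq 3$ and $a_{j_2} + c_{j_2} \leq 1$. A short tabulation of which tokens cover which of the four $\dataset_3 \cup \dataset_4$ strings per variable shows that any 3-token subset of $\{A_j, B_j, C_j, D_j\}$ still covers all four of those strings, so removing one token of $j_1$'s contribution leaves its $\dataset_3 \cup \dataset_4$ coverage unchanged; the replacement added at $j_2$ can always be chosen to cover a currently missing $\dataset_3$-string at $j_2$, saving $\nrepeatvar'' = 23$ symbols. \emph{Sub-step B}: given $a_j + c_j = 2$ everywhere, rule out $(a_j, c_j) \in \{(2, 0), (0, 2)\}$. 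In the $(2, 0)$ case, swap $B_j$ for $C_j$, taking $\{A_j, B_j\}$ to $\{A_j, C_j\}$; the same coverage table shows this converts a configuration missing $\spacesymboltwo\satnotvarj\symbolone\satyesvarj\spacesymboltwo$ into one covering all four strings, a gain of $\nrepeatvar'' = 23$. The $(0, 2)$ case is symmetric, swapping $D_j$ for $A_j$.

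In both sub-steps, the only possible countervailing loss is on $\dataset_5$: by the 3-occurrence constraint of \threeoccmaxtwosat, every three-character candidate token occurs at most three times across $\dataset_5$, so each swap loses at most $3$ symbols there. Since $23 - 3 > 0$, the swaps strictly improve the compression, contradicting optimality of $\vocabopt$. The main obstacle will be the cross-pair coupling in $\dataset_4$: each of $\symbolone\satnotvarj\symbolone\satyesvarj\spacesymboltwo$ and $\spacesymboltwo\satnotvarj\symbolone\satyesvarj\symbolone$ is covered by a disjunction spanning both pairs ($B_j \vee C_j$ and $A_j \vee D_j$, respectively), so the swap in Sub-step B must be chosen so that the surviving pair of tokens at $j$ still jointly covers all four of $j$'s $\dataset_3 \cup \dataset_4$ strings. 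I would resolve this by reading the correct swap directly off the $2^4$-row coverage table---in particular, $\{A_j, B_j\} \to \{A_j, C_j\}$ preserves the $\dataset_4$ coverage via $A_j$ and $C_j$, so the gain on $\dataset_3$ is uncontested---and this same table will drive Step 4's finer distinction between the ``diagonal'' pairs $\{A_j, C_j\}, \{B_j, D_j\}$ and the ``anti-diagonal'' pairs $\{A_j, D_j\}, \{B_j, C_j\}$.
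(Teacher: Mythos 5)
Your two-stage decomposition is a genuinely different route from the paper's. The paper treats the $2\satnvariables$ "disjoint pairs" $\{A_j,B_j\}$ and $\{C_j,D_j\}$ as the objects of the counting argument: by the budget, the number of doubly-assigned pairs equals the number of empty ones (call it $t$), and a single cross-variable exchange (remove the $\satnotvarjtok$-flavoured token from each doubly-assigned pair, add the $\satyesvarjtok$-flavoured token to each empty pair) gains $t\nrepeatvar''$ on $\dataset_3$ while losing at most $t(\nrepeatvar'''+3)$ on $\dataset_4\cup\dataset_5$, all in one step. You instead first balance the per-variable count $a_j+c_j=2$ (Sub-step~A, a cross-$j$ exchange) and only then balance within each variable (Sub-step~B, a local swap). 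Your Sub-step~B is slightly tighter than the paper's global bound---the swap $\{A_j,B_j\}\to\{A_j,C_j\}$ leaves $\dataset_4$ coverage untouched, so the margin is $\nrepeatvar''-3$ rather than $\nrepeatvar''-\nrepeatvar'''-3$---and it sets up Step~4 (ruling out the anti-diagonals $\{A_j,D_j\},\{B_j,C_j\}$) cleanly; the price is the extra stage.

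There is, however, one reasoning gap in Sub-step~A as written. You justify ``removing one token of $j_1$'s contribution leaves its $\dataset_3\cup\dataset_4$ coverage unchanged'' by the fact that any $3$-token subset of $\{A_{j},B_{j},C_{j},D_{j}\}$ covers all four relevant strings. That inference is correct only when $a_{j_1}+c_{j_1}=4$: removing one leaves a $3$-subset. But the budget argument only guarantees $a_{j_1}+c_{j_1}\geq 3$, and if it equals $3$ then removing a token leaves a $2$-subset, which may miss a string (for instance $\{A_{j_1},B_{j_1},C_{j_1}\}\setminus\{C_{j_1}\}=\{A_{j_1},B_{j_1}\}$ misses $\spacesymboltwo\satnotvarj\symbolone\satyesvarj\spacesymboltwo$). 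The fix is small but must be stated: every $3$-element subset of $\{A_j,B_j,C_j,D_j\}$ contains one of the diagonal pairs $\{A_j,C_j\}$ or $\{B_j,D_j\}$, each of which already covers all four strings, so the removal can always be chosen to discard the third, redundant token. With that observation added, Sub-step~A's $\nrepeatvar''-3>0$ margin is correct and the argument closes.
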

    \begin{proof}
        As before, we prove this step by contradiction.
        Given step $\circled{1}$, we know an optimal tokeniser includes all tokens $\bdoneformtok$.
        Further, given step $\circled{2}$, we know its other tokens all have form $\bdtwoformtok$.
        Now, assume there exists an optimal tokeniser with vocabulary $\vocabbad$ which includes both tokens in a pair $\symbolonetok\satyesvarjtok\symbolonetok, \symbolonetok\satnotvarjtok\symbolonetok$ or $\spacesymboltwotok\satyesvarjtok\symbolonetok,\symbolonetok\satnotvarjtok\spacesymboltwotok$ for $\wrongtokens > 0$ such pairs, and thus neither of those two for $\wrongtokens > 0$ other such pairs.
        Then, define $\vocabgood$ as a vocabulary where the $\symbolonetok\satnotvarjtok\symbolonetok$ respectively $\symbolonetok\satnotvarjtok\spacesymboltwotok$ token of all $\wrongtokens$ doubly assigned pairs are replaced with the $\symbolonetok\satyesvarjtok\symbolonetok$ respectively $\spacesymboltwotok\satyesvarjtok\symbolonetok$ token of all uncovered pairs.
        
        These two tokenisers achieve the same compression on $\dataset_1$ and $\dataset_2$:
        \begin{align}
            \objectivefunclength(\directtoken[\vocabbad], \dataset_1 \cup \dataset_2) = (8\satnvariables+1)\nrepeatvar + 6\satnvariables\nrepeatvar', \quad
            \objectivefunclength(\directtoken[\vocabgood], \dataset_1 \cup \dataset_2) = (8\satnvariables+1)\nrepeatvar + 6\satnvariables\nrepeatvar'
        \end{align}
        The tokeniser with vocabulary $\vocabgood$ will then compress each string in $\dataset_3$ to $2$ symbols, while $\vocabbad$ will compress the $\wrongtokens$ strings of the form $\symbolone\satyesvarj\symbolone\satnotvarj\symbolone$ or $\spacesymboltwo\satnotvarj\symbolone\satyesvarj\spacesymboltwo$ for which their respective pair $\symbolonetok\satyesvarjtok\symbolonetok, \symbolonetok\satnotvarjtok\symbolonetok$ or $\spacesymboltwotok\satyesvarjtok\symbolonetok,\symbolonetok\satnotvarjtok\spacesymboltwotok$ is uncovered, to 3 symbols.
        This will lead to a total compression of:
        \begin{align}
            \objectivefunclength(\directtoken[\vocabbad], \dataset_3) = (4\satnvariables + t)\nrepeatvar'', \qquad
            \objectivefunclength(\directtoken[\vocabgood], \dataset_3) = 4\satnvariables\nrepeatvar''
        \end{align}
        Finally, the $\wrongtokens$ doubly assigned tokens 
        (which $\vocabgood$ does not contain) appear at most $\nrepeatvar'''$ times in $\dataset_4$ and three times in $\dataset_5$, and will lead to at most one symbol being saved, leading to a bound:
        \begin{align}
            \objectivefunclength(\directtoken[\vocabgood], \dataset_4 \cup \dataset_5) -
            \objectivefunclength(\directtoken[\vocabbad], \dataset_4 \cup \dataset_5) \leq \wrongtokens(\nrepeatvar''' + 3)
        \end{align}
        Putting these compressed lengths together, we get:
        \begin{align}
            \objectivefunclength(\directtoken[\vocabgood], \dataset) -
            \objectivefunclength(\directtoken[\vocabbad], \dataset) \leq \wrongtokens(\nrepeatvar''' + 3) - \wrongtokens\nrepeatvar''
        \end{align}
        As $\nrepeatvar'' > \nrepeatvar''' + 3$, this difference is smaller than zero, implying that $\vocabgood$ improves on $\vocabbad$. This shows a contradiction, which completes our proof.
    \end{proof}   
    
    \begin{mylemmastep} \textnormal{(Step \circled{4}).} 
        An optimal tokeniser must be \satname-compliant: 
        it must contain all tokens of the form $\bdoneformtok$ and further only tokens of the form $\bdtwoformtok$, and for each $1 \leq j \leq \satnvariables$, it must include either $\symbolonetok\satyesvarjtok\symbolonetok, \symbolonetok\satyesvarjtok\spacesymboltwotok$ or $\symbolonetok\satnotvarjtok\symbolonetok, \spacesymboltwotok\satnotvarjtok\symbolonetok$.
    \end{mylemmastep}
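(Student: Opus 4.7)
The plan is to proceed by contradiction, following the template of Steps 1--3. Suppose there exists an optimal tokeniser with vocabulary $\vocabbad$ satisfying the conclusions of Steps 1--3 but containing, for some $\wrongtokens > 0$ variables $j$, an \emph{inconsistent} pair of tokens: either $\{\symbolonetok\satyesvarjtok\symbolonetok, \spacesymboltwotok\satnotvarjtok\symbolonetok\} \subseteq \vocabbad$ or $\{\symbolonetok\satnotvarjtok\symbolonetok, \symbolonetok\satyesvarjtok\spacesymboltwotok\} \subseteq \vocabbad$. I would construct $\vocabgood$ by replacing, at each such $j$, one of the two offending tokens by its consistent counterpart---for instance, swapping $\spacesymboltwotok\satnotvarjtok\symbolonetok$ for $\symbolonetok\satyesvarjtok\spacesymboltwotok$ to form the consistent True pair. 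The resulting $\vocabgood$ is \satname-compliant and has the same size as $\vocabbad$.

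First I would verify that this swap preserves compression on $\dataset_1 \cup \dataset_2 \cup \dataset_3$. All length-one and length-two tokens are untouched, so the compression of $\dataset_1$ is unchanged. On $\dataset_2$, for each $j$ the vocabulary still covers exactly two of the four strings per multiplicity as single tokens (with the other two reduced to two subwords each), so the total count is unchanged. On $\dataset_3$, each of the two strings per variable admits a two-token parse whenever at least one pair-A token and at least one pair-B token for $j$ are present---a condition guaranteed by Step 3 and preserved by the swap.

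Next I would establish a strict improvement on $\dataset_4$ through a short case analysis on $\symbolone\satnotvarj\symbolone\satyesvarj\spacesymboltwo$ and $\spacesymboltwo\satnotvarj\symbolone\satyesvarj\symbolone$. Enumerating parses shows that each string admits a two-token parse iff a pair-A/pair-B combination compatible with its left--right structure is present: under Mixed 1 only the second string has such a combination, under Mixed 2 only the first does, whereas both consistent pairs enable both parses. Accounting for the multiplicity $\nrepeatvar'''$ of $\dataset_4$, each inconsistent variable thus contributes at least $\nrepeatvar'''$ extra tokens to $\vocabbad$'s compression of $\dataset_4$, yielding
\begin{align}
    \objectivefunclength(\directtoken[\vocabbad], \dataset_4) - \objectivefunclength(\directtoken[\vocabgood], \dataset_4) \geq \wrongtokens\,\nrepeatvar'''.
\end{align}

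Finally I would bound the potential gain of $\vocabbad$'s inconsistent choices on $\dataset_5$. By the three-occurrence hypothesis each variable appears in exactly three clauses, so at most $3\wrongtokens$ strings in $\dataset_5$ are affected by the swap; and since each clause string has sub-token length $5$, its compression can differ by at most one token between any two configurations of the relevant variable's pair. Combining the two bounds gives
\begin{align}
    \objectivefunclength(\directtoken[\vocabgood], \dataset) - \objectivefunclength(\directtoken[\vocabbad], \dataset) \leq 3\wrongtokens - \wrongtokens\,\nrepeatvar''',
\end{align}
which is strictly negative since $\nrepeatvar''' = 4 > 3$, contradicting the optimality of $\vocabbad$. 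The main obstacle is the $\dataset_4$ case analysis: one must carefully verify that Mixed configurations---and only Mixed configurations---force one of the two strings to three tokens; the remaining bounds then mirror those of Steps 1--3 almost mechanically.
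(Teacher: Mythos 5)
Your proof is correct and follows the same contradiction-and-swap structure as the paper's: replace the $\wrongtokens$ inconsistent slot-pairs with consistent ones to form $\vocabgood$, verify compression is preserved on $\dataset_1\cup\dataset_2\cup\dataset_3$, show a strict gain of $\wrongtokens\,\nrepeatvar'''$ on $\dataset_4$ (each Mixed assignment forces one of the two $\dataset_4$ strings for that variable to three tokens), and bound the loss on $\dataset_5$ by $3\wrongtokens$ via the three-occurrence property, concluding with $\nrepeatvar''' = 4 > 3$. Your wording is arguably cleaner than the paper's, whose phrase ``includes both tokens in a pair'' appears to be copy-pasted from an earlier step and should read as a \emph{mixed} (inconsistent) assignment across the two slots, exactly as you interpret it.
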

    \begin{proof}
        As before, we prove this step by contradiction.
        Given step $\circled{1}$, we know that an optimal tokeniser includes all tokens $\bdoneformtok$.
        Further, given step $\circled{2}$, we know that its other tokens all have form $\bdtwoformtok$.
        Finally, given step $\circled{3}$, we know that it contains exactly one token for each pair $\symbolonetok\satyesvarjtok\symbolonetok, \symbolonetok\satnotvarjtok\symbolonetok$ and $\symbolonetok\satyesvarjtok\spacesymboltwotok,\spacesymboltwotok\satnotvarjtok\symbolonetok$.

        Now, assume there exists an optimal tokeniser with vocabulary $\vocabbad$ which includes both tokens in a pair $\symbolonetok\satnotvarjtok\symbolonetok, \symbolonetok\satyesvarjtok\symbolonetok$ or $\symbolonetok\satyesvarjtok\spacesymboltwotok, \spacesymboltwotok\satnotvarjtok\symbolonetok$ for $\wrongtokens > 0$ such pairs, and thus neither of those two for $\wrongtokens > 0$ other such pairs.
        Then, define $\vocabgood$ as a vocabulary where the $\symbolonetok\satnotvarjtok\symbolonetok$ respectively, $\spacesymboltwotok\satnotvarjtok\symbolonetok$) token of all $\wrongtokens$ doubly assigned pairs are replaced with the $\symbolonetok\satyesvarjtok\symbolonetok$ (respectively, $\symbolonetok\satyesvarjtok\spacesymboltwotok$) token of all uncovered pairs.
        These two tokenisers achieve the same compression on $\dataset_1$, $\dataset_2$, and $\dataset_3$:
        \begin{align}
            \objectivefunclength(\directtoken[\vocabbad], \dataset_1 \cup \dataset_2 \cup \dataset_3) = (8\satnvariables+1)\nrepeatvar + 6\satnvariables\nrepeatvar' + 4\satnvariables\nrepeatvar''\\
            \objectivefunclength(\directtoken[\vocabgood], \dataset_1 \cup \dataset_2 \cup \dataset_3) = (8\satnvariables+1)\nrepeatvar + 6\satnvariables\nrepeatvar' + 4\satnvariables\nrepeatvar''
        \end{align}
        The tokeniser with vocabulary $\vocabgood$ will then compress each string in $\dataset_4$ to $2$ symbols, while $\vocabbad$ will only compress the $\wrongtokens$ strings of the form $\symbolone\satnotvarj\symbolone\satyesvarj\spacesymboltwo$ or $\spacesymboltwo\satnotvarj\symbolone\satyesvarj\symbolone$, for which the pair is uncovered, to 3 symbols.
        This will lead to a total compression of:
        \begin{align}
            \objectivefunclength(\directtoken[\vocabbad], \dataset_4) = (4\satnvariables + t)\nrepeatvar''', \qquad
            \objectivefunclength(\directtoken[\vocabgood], \dataset_4) = 4\satnvariables\nrepeatvar'''
        \end{align}
        Finally, the $\wrongtokens$ doubly assigned tokens of the form $\symbolonetok\satnotvarjtok\symbolonetok$ or $\spacesymboltwotok\satnotvarjtok\symbolonetok$ (which $\vocabgood$ does not contain) appear, as a prefix or suffix, at most three times in $\dataset_5$, and will lead to at most one symbol being saved, leading to a bound:
        \begin{align}
            \objectivefunclength(\directtoken[\vocabgood], \dataset_5) -
            \objectivefunclength(\directtoken[\vocabbad], \dataset_5) \leq 3\wrongtokens
        \end{align}
        Putting these compressed lengths together, we get:
        \begin{align}
            \objectivefunclength(\directtoken[\vocabgood], \dataset) -
            \objectivefunclength(\directtoken[\vocabbad], \dataset) \leq 3\wrongtokens - \wrongtokens\nrepeatvar'''
        \end{align}
        As $\nrepeatvar''' > 3$, this difference is smaller than zero, implying that $\vocabgood$ improves on $\vocabbad$. This shows a contradiction, which completes our proof.
    \end{proof}   
    
    \begin{mylemmastep} \textnormal{(Step \circled{5}).}
        A \satname-compliant direct tokeniser %
        can be transformed into a bottom-up tokeniser %
        without changing its performance.
        As any optimal direct tokeniser is \satname-compliant, this implies:
        \begin{align}
            \bupbtok(\reductiontwofuncfull) \iff \dirbtok(\reductiontwofuncfull)
        \end{align} 
    \end{mylemmastep}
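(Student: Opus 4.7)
The plan is to explicitly construct a merge sequence $\merges$ from a given \satname-compliant direct tokeniser $\vocab$ so that $\objectivefunclength(\bottomuptoken[\merges], \dataset) = \objectivefunclength(\directtoken[\vocab], \dataset)$. I will recycle the six-block construction $\merges = \merges_1 \circ \merges_2 \circ \merges_3 \circ \merges_4 \circ \merges_5 \circ \merges_6$ of the forward step (\cref{lemma:bbtok_nphard_if}): the structural blocks $\merges_1, \merges_2, \merges_4, \merges_6$ are independent of $\vocab$ and produce the $\symbolonetok\symbolonetok$ token, each $\satyesvarjtok$ and $\satnotvarjtok$ token (via \cref{lemma:exactly_n_minus_one_merges}), and the length-two contextual tokens of the form $\bdoneformtok$; the assignment-dependent blocks $\merges_3, \merges_5$ are instantiated from $\satvals \defeq \toktomaxsat(\vocab)$, namely the assignment read off from which of the two length-three contextual variants $\vocab$ contains for each $j$. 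This yields $|\merges| = 10\satnvariables = \vocabsize$ merges, as required.

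Next I would verify, string-by-string, that $\bottomuptoken[\merges]$ produces a segmentation of the same length as $\directtoken[\vocab]$. The per-subdataset accounting is identical to that of the forward step: strings in $\dataset_1$ collapse to a single token on both sides; each of the pairs $\{\symbolone\satyesvarj\symbolone, \symbolone\satnotvarj\symbolone\}$ and $\{\symbolone\satyesvarj\spacesymboltwo, \spacesymboltwo\satnotvarj\symbolone\}$ in $\dataset_2$ splits as $1+2$ tokens in a way dictated by $\satvals$; every string in $\dataset_3$ and $\dataset_4$ compresses to two tokens; and each string in $\dataset_5$ compresses to two or three tokens depending on whether the corresponding clause is satisfied. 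For $\dataset_5$ I would invoke the case table of \cref{tab:bottomup_dataset5_nphard_if} verbatim, since it already enumerates all four sign-combinations of the two literals against the possible variable values.

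With the transformation in hand, the iff follows in two steps. For $\dirbtok \Rightarrow \bupbtok$, steps $\circled{1}$--$\circled{4}$ guarantee that an optimal direct tokeniser is \satname-compliant, so the constructed $\merges$ inherits its compression $\leq \maxsymbols$. For the converse $\bupbtok \Rightarrow \dirbtok$, given any satisfying merge sequence $\merges'$ I would form its merge-extracted vocabulary $\vocabmerges$ (see \cref{sec:unary_ope_is_nphard}) of size $|\alphabet| + \vocabsize$; since $\directtoken[\vocabmerges]$ is length-optimal over $\vocabmerges$, its compression is no worse than that of $\bottomuptoken[\merges']$, yielding a valid direct solution. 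The main obstacle will be the $\dataset_5$ verification: the greedy left-to-right behaviour of $\bottomuptoken[\merges]$ must never block a segmentation that $\directtoken[\vocab]$ would take, and this relies on the ordering of $\merges_3$ strictly after $\merges_2$ and of $\merges_5$ strictly after $\merges_4$---a sequencing that the forward step already engineered and that I would reuse unchanged.
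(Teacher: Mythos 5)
Your proof is correct and takes essentially the same route as the paper's: you recycle the six-block merge construction from the forward step (\cref{lemma:bbtok_nphard_if}), instantiate $\merges_3,\merges_5$ according to which length-three contextual tokens the \satname-compliant vocabulary $\vocab$ contains, invoke \cref{tab:bottomup_dataset5_nphard_if} for the $\dataset_5$ accounting, and use the trivial observation that a merge sequence induces a (possibly suboptimally applied) vocabulary of the right size for the converse direction. The only cosmetic difference is that you phrase the $\bupbtok \Rightarrow \dirbtok$ direction in terms of the merge-extracted vocabulary $\vocabmerges$ from the OPE section, whereas the paper states the same fact in prose; this is the same argument, though one should note $|\vocabmerges| \leq |\alphabet| + \vocabsize$ (distinct merges can yield identical tokens) and pad with unused tokens if exact size is required.
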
 
    \begin{proof}
        As every bottom-up tokeniser can be interpreted as a direct tokeniser with the same vocabulary size and a possibly suboptimal application of its tokens, it holds that:
        \begin{align}
        \bupbtok(\reductiontwofuncfull) \implies \dirbtok(\reductiontwofuncfull)
        \end{align} 
        This is to say, direct tokenisers always compress at least as well as bottom-up tokenisers, when allowed the same vocabulary size.

        Given a \satname-compliant direct tokeniser, we first describe how to transform it into a bottom-up tokeniser.
        We always include merges:
        \begin{subequations}
            \begin{align}
                 \merges_1 &= \mergestringwithparens{{\symbolonetok}}{{\symbolonetok}} \circ \bigcirc_{j=1}^{\lfloor \log(2\satnvariables-1) \rfloor} [\mergestringwithparens{{\symbolzerotok^{2^i}}}{{\symbolzerotok^{2^i}}} ]  \circ
              \bigcirc_{j=1}^{\lfloor \log(2\satnvariables-1) \rfloor} \bigcirc_{j'=1}^{2^j-1}
                   [\mergestringwithparens{\symbolzerotok^{2^j}}{{\symbolzerotok^{j'}}}] \\
                \merges_2 &= \bigcirc_{j=1}^{\satnvariables} [\mergestringwithparens{\spacesymboltwotok}{\satnotvarjtok}, \mergestringwithparens{\satyesvarjtok}{\spacesymboltwotok}] \\
            \merges_4 &= \bigcirc_{j=1}^{\satnvariables} [\mergestringwithparens{\satnotvarjtok}{ \spacesymbol}, \mergestringwithparens{\spacesymbol}{\satyesvarjtok}] \\
            \merges_6 &= \bigcirc_{j=1}^{\satnvariables} [\mergestringwithparens{\spacesymbol}{\satnotvarjtok}, \mergestringwithparens{\satyesvarjtok}{\spacesymbol}]
        \intertext{and additionally, depending on which tokens are included in the direct tokeniser's vocabulary $\vocab$:}
                \merges_3 &= \BigCirc_{j=1}^{\satnvariables} \left[
                \begin{array}{lr}
                    \mergestringwithparens{\spacesymbol}{\satyesvarjtok\spacesymboltwotok} & \mathtt{if}\,\, \spacesymbol\satyesvarjtok\spacesymboltwotok \in \vocab \\
                    \mergestringwithparens{\spacesymboltwotok\satnotvarjtok}{\spacesymbol} & \mathtt{else}
                \end{array}
                \right], \\
                \merges_5 &= \BigCirc_{j=1}^{\satnvariables} \left[
                \begin{array}{lr}
                    \mergestringwithparens{\symbolonetok\satyesvarjtok}{\symbolonetok} & \mathtt{if}\,\,\symbolonetok\satyesvarjtok\symbolonetok \in \vocab \\
                    \mergestringwithparens{\symbolonetok}{ \satnotvarjtok\symbolonetok} & \mathtt{else}
                \end{array}
                \right] 
            \end{align}
        \end{subequations}
        Note that since the tokeniser is \satname-compliant, we have
        \begin{align}
            \spacesymbol\satyesvarjtok\spacesymboltwotok \in \vocab \iff \symbolonetok\satyesvarjtok\symbolonetok \in \vocab
        \end{align}

        It is easy to verify that the resulting bottom-up tokeniser has the same performance on datasets $\dataset_1$ to $\dataset_4$.
        For $\dataset_5$, \cref{tab:bottomup_dataset5_nphard_if} shows that each string which could be reduced to two symbols by the direct tokeniser is also reduced to two symbols by the bottom-up tokeniser.
        Thus, we get:
        \begin{align}
        \bupbtok(\reductiontwofuncfull) \iff \dirbtok(\reductiontwofuncfull)
        \end{align}
        which completes this proof.
    \end{proof}
    
    \begin{mylemmastep} \textnormal{(Step \circled{6}).}
        If an optimal (direct) tokeniser achieves a compressed length of at most $5398\satnvariables + 575 + 3\satnclauses - \minclauses$, the original \threeoccmaxtwosat instance is satisfiable, i.e.:
        \begin{align}
            \Big(\objectivefunclength(\directtoken[\vocabopt], \dataset) \leq 5398\satnvariables + 575 + 3\satnclauses - \minclauses\Big) \implies \tomaxsatfull
        \end{align}
    \end{mylemmastep}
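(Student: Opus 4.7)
The plan is to mirror the structure of Step \circled{4} in the proof of \cref{lemma:dbtok_nphard_onlyif}, but adapted to the richer vocabulary structure enforced by \cref{reduction:threeoccmaxtwosat_to_bbtok}. By Steps \circled{1}--\circled{4}, any optimal direct solution $\vocabopt$ to the $(\dataset,\vocabsize,\maxsymbols)$ instance is \satname-compliant, and by Step \circled{5} we may restrict attention to such direct tokenisers when arguing about the bottom-up instance. I would then extend $\toktomaxsat$ from \cref{eq:toktomaxsat} to this setting, assigning $\satvalsoljnostar = \valtrue$ if $\symbolonetok\satyesvarjtok\symbolonetok \in \vocabopt$ and $\satvalsoljnostar = \valfalse$ otherwise; the consistency guaranteed by Step \circled{4} (i.e., that $\symbolonetok\satyesvarjtok\symbolonetok \in \vocabopt$ iff $\symbolonetok\satyesvarjtok\spacesymboltwotok \in \vocabopt$, and analogously on the negative side) means this assignment is well-defined.

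Next, I would establish that the compression of $\dataset_1 \cup \dataset_2 \cup \dataset_3 \cup \dataset_4$ is the same for every \satname-compliant tokeniser, equal to the fixed value
\begin{align}
    \objectivefunclength(\directtoken[\vocabopt], \dataset_1 \cup \dataset_2 \cup \dataset_3 \cup \dataset_4) = (8\satnvariables+1)\nrepeatvar + 6\satnvariables\nrepeatvar' + 4\satnvariables\nrepeatvar'' + 4\satnvariables\nrepeatvar''' = 5398\satnvariables + 575.
\end{align}
This computation is essentially a replay of the bounds already derived as equality cases inside Steps \circled{1}--\circled{4}: each string in $\dataset_1$ is compressed to one token by the $\bdoneformtok$ tokens; each string in $\dataset_2$ to one token by the corresponding $\bdtwoformtok$ token; and each string in $\dataset_3$ and $\dataset_4$ to two tokens by a clean split through the middle $\symbolonetok$ (or $\spacesymboltwotok$) using whichever three-character token is present per Step \circled{4}.

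The main obstacle, as elsewhere in this paper, will be the careful case analysis on $\dataset_5$. For each of the four clause forms listed in \cref{reduction:threeoccmaxtwosat_to_bbtok} (splitting on the signs of $\satliteral_i^1$ and $\satliteral_i^2$), I would check that the five-character string admits a two-token direct segmentation if and only if $\vocabopt$ contains at least one of the two candidate ``crossing'' tokens---e.g., for $\charstring{\symbolone\satyesvarj\symbolone\satnotvarjprime\symbolone}$ this means either $\symbolonetok\satyesvarjtok\symbolonetok \in \vocabopt$ or $\symbolonetok\satnotvarjprimetok\symbolonetok \in \vocabopt$---and that these tokens, by the definition of $\toktomaxsat$, are present exactly when the literal they correspond to evaluates to $\valtrue$. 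Otherwise, no two-character prefix or suffix of the string lies in $\vocabopt$ (an argument analogous to those in Steps \circled{1}--\circled{2}), so the string requires three tokens. Thus each clause contributes $2$ tokens iff it is satisfied under $\satvalssol = \toktomaxsat(\vocabopt)$ and $3$ tokens otherwise, giving $\objectivefunclength(\directtoken[\vocabopt], \dataset_5) = 3\satnclauses - \minclausessol$ where $\minclausessol = \sum_{i=1}^{\satnclauses} \one_{\satvalssol}\{\satliteral_i^1 \lor \satliteral_i^2\}$.

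Finally, summing the two contributions and using the hypothesis,
\begin{align}
    5398\satnvariables + 575 + 3\satnclauses - \minclausessol = \objectivefunclength(\directtoken[\vocabopt], \dataset) \leq 5398\satnvariables + 575 + 3\satnclauses - \minclauses,
\end{align}
which rearranges to $\minclausessol \geq \minclauses$. Hence $\satvalssol$ witnesses the satisfiability of the \threeoccmaxtwosat instance, completing Step \circled{6} and, together with Step \circled{5}, the backward direction of \cref{thm:bup_binary_npcomplete}.
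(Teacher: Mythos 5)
Your proposal is correct and follows essentially the same route as the paper's Step \circled{6}: extract the assignment $\satvalssol = \toktomaxsat(\vocabopt)$ from the \satname-compliant vocabulary, compute the fixed contribution $5398\satnvariables + 575$ from $\dataset_1 \cup \dots \cup \dataset_4$, and show that each clause string in $\dataset_5$ compresses to two tokens exactly when a crossing token witnessing a true literal is present, giving $3\satnclauses - \minclausessol$ and hence $\minclausessol \geq \minclauses$. Your case analysis on the four clause forms is a slightly more explicit spelling-out of the equivalence the paper states in one shot, but it is the same argument.
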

    \begin{proof}

        Given steps \circled{1} to \circled{4}, we know that an optimal tokeniser will be \satname-compliant.
        We will now denote this optimal tokeniser's vocabulary by $\vocabopt$ and use \cref{eq:toktomaxsat} to extract a \threeoccmaxtwosat assignment $\satvalssol = \toktomaxsat(\vocabopt)$ which corresponds to this tokeniser's vocabulary.
        From the previous proof steps, we know that any \satname-compliant tokeniser achieves the following compressed length in $\dataset_1$, $\dataset_2$, $\dataset_3$, and $\dataset_4$:
        \begin{subequations}
            \begin{align}
                \objectivefunclength(\directtoken[\vocabopt], \dataset_1 \cup \dataset_2 \cup \dataset_3 \cup \dataset_4) 
                &= (8\satnvariables+1)\nrepeatvar + 6\satnvariables\nrepeatvar' + 4\satnvariables\nrepeatvar'' + 4\satnvariables\nrepeatvar''' \\
                &= (8 \cdot 575 + 6 \cdot 115 + 4 \cdot 23 + 4 \cdot 4)\satnvariables + 575 \\
                &= 5398\satnvariables + 575
            \end{align}
        \end{subequations}
        Now, note that any target string in $\dataset_5$ will be: (i)
        compressed to two symbols either if $\satliteral_i^1$ or $\satliteral_i^2$ is $\satvar_j$ and tokens $\symbolonetok\satyesvarjtok\symbolonetok$ 
        and 
        $\symbolonetok\satyesvarjtok\spacesymboltwotok$ 
        exist or if $\satliteral_i^1$ or $\satliteral_i^2$ is $\neg\satvar_j$ and tokens $\spacesymbol\satnotvarjtok\symbolonetok$ and $\spacesymboltwotok\satnotvarjtok\symbolonetok$ exist; or (ii)
        compressed to three symbols if neither case is satisfied.
        Equivalently, a clause $\satliteral_i^1 \lor \satliteral_i^2$ in \threeoccmaxtwosat is:
        satisfied if either $\satliteral_i^1$ or $\satliteral_i^2$ evaluates to true; or
        not satisfied if both evaluate to false.
        Given our construction of function $\toktomaxsat$ above, one of \threeoccmaxtwosat's clauses will be satisfied if and only if its corresponding string in $\dataset_4$ is compressed to two symbols.
        We can thus state that:
        \begin{align}
            \bigg( \objectivefunclength(\directtoken[\vocabopt], \dataset_5) = 3\satnclauses - \minclausessol \bigg) \iff  
            \left(\sum_{i=1}^{\satnclauses} \one_{\satvalssol}\{\satliteral_i^1 \lor \satliteral_i^2\} = \minclausessol \right)
        \end{align}
        Given the construction of $\maxsymbols$ as $5398\satnvariables + 575 + 3 \satnclauses - \minclauses$, we conclude that a \satname-compliant tokeniser which compresses the full dataset to at most that size can be mapped to a \threeoccmaxtwosat assignment which satisfies at least $\minclauses$ clauses. 
        This concludes the proof.
    \end{proof}

\section{Proof of \texorpdfstring{\cref{thm:bottomup_binary_tokenisation_apx}}{Theorem}}
\label{appendix:bottomup_binary_tokenisation_apx}

\bbtokapx*
\begin{proof}
    For this proof, we again rely on the result by \citet{berman-karpinski-1998,berman-karpinski-1999} 
    that, for specific instances of \threeoccmaxtwosat with $\satnclauses = 2016\apxclausecount$ clauses, it is \np-hard to distinguish whether at least $(2012 - \varepsilon)\apxclausecount$ or at most $(2011 + \varepsilon)\apxclausecount$ of these clauses are satisfiable, for any $\varepsilon > 0$.
    We will denote this \threeoccmaxtwosat gap problem by $\tomaxsat(\satvars, \satclauses, (\minclauseslow, \minclausesup))$, with $\minclauseslow = (2011 +\varepsilon)\apxclausecount$ and $\minclausesup = (2012 - \varepsilon)\apxclausecount$.
    We can now prove the \np-hardness of the binary bottom-up tokenisation gap problem by reducing \threeoccmaxtwosat's gap problem to it.
    To this end, we rely on a reduction identical to $\reductiontwofuncfull$, but where we define:
    \begin{align}
        \maxsymbolslow &= 5398\satnvariables + 575 + 3\satnclauses - \minclauseslow
        \qquad
        &\maxsymbolsup &= 5398\satnvariables + 575 + 3\satnclauses - \minclausesup \\
        &= 5398\satnvariables + 575 + 3\satnclauses - \frac{2011+\varepsilon}{2016} \satnclauses
        &&= 5398\satnvariables + 575 + 3\satnclauses - \frac{2012-\varepsilon}{2016} \satnclauses \nonumber
    \end{align}
    \cref{lemma:bbtok_nphard_if,lemma:bbtok_nphard_onlyif} trivially show the validity of this reduction:
    \begin{align}
        \tomaxsat(\satvars, \satclauses, (\minclauseslow, \minclausesup)) \iff 
        \bupbtok(\dataset, \vocabsize, (\maxsymbolslow, \maxsymbolsup))
    \end{align}
    which holds since $\tomaxsat(\satvars, \satclauses, \minclausesup) \iff  \bupbtok(\dataset, \vocabsize, \maxsymbolsup)$ and the same for $\minclauseslow$ and $\maxsymbolslow$.
    It is therefore \np-hard to distinguish whether a dataset can be compressed to at most $5398\satnvariables + 575 + 3\satnclauses - \frac{2012 - \varepsilon}{2016}\satnclauses$ symbols, or if at least $5398\satnvariables + 575 + 3\satnclauses - \frac{2011 + \varepsilon}{2016}\satnclauses$ symbols remain (with an allowed vocabulary size $\vocabsize = 10\satnvariables$).
    Since each variable occurs exactly three times in any \threeoccmaxtwosat instance, we have that $\frac{3}{2}\satnvariables = \satnclauses$.
    We now compute a lower bound on the best achievable compression ratio:\looseness=-1
    \begin{subequations}
    \begin{align}
        \frac{\maxsymbolslow}{\maxsymbolsup}
        &= \frac{5398\satnvariables + 575 + 3\satnclauses - \frac{2011 + \varepsilon}{2016}\satnclauses}{5398\satnvariables + 575 + 3\satnclauses - \frac{2012 - \varepsilon}{2016}\satnclauses} &\\
        &= \frac{10805\satnvariables + 575 - \frac{6033 + 3\varepsilon}{2016}\satnvariables}{10805\satnvariables + 575 - \frac{6036 - 3\varepsilon}{2016}\satnvariables} & \\
        &\geq \frac{10805 - \frac{6033 + 3\varepsilon'}{2016}}{10805 - \frac{6036 - 3\varepsilon'}{2016}} & \mathcomment{additive 575 omitted in $\varepsilon'$ for sufficiently large $\satnvariables$}\\
        &
        = \frac{7258949 - \varepsilon'}{7258948 + \varepsilon'} 
    \end{align}
    \end{subequations}
We conclude that binary bottom-up tokenisation cannot be approximated in polynomial time with an approximation ratio better than $\frac{7258949}{7258948} > 1.0000001$, unless \pequalnp.
\end{proof}

\section{Proof that Unary Direct Tokenisation is in NP}
\label{sec:dutok_is_np}

A decision problem is in the nondeterministic polynomial-time class (\np) if it can be verified in polynomial time in the presence of a \defn{certificate}: a string designed to verify that the current instance is a ``yes''-instance, typically encoding an optimal solution to its search problem.
When inputs are represented as strings, the following lemma follows trivially from the unbounded-alphabet case discussed by \citet{whittington-etal-2025-tokenisationnpc}.
Their proof, however, relies on the explicit computation of the direct tokenisation function:
\begin{align}
    \directtoken[\vocab](\characters) = &\argmin_{\subwords \in \vocab^*} |\subwords|,\qquad \mathrm{s.t.}\,\,\characters\smash{\stringequiv}\subwords
\end{align}
While we can efficiently compute this when inputs are given in string form, this is not known to be the case for the string-length representation.
In this case, the optimal application of tokens corresponds to the change-making %
problem, which is itself weakly \np-hard, as shown by \cite{lueker-1975-changemaking}.
Thus, we now include the optimal application of tokens 
as a part of the certificate, in order to prove that this decision problem is still in \np when inputs are represented as string-lengths.\looseness=-1

\vspace{5pt}
\begin{lemma}\label{lemma:dutok_is_np}
The unary direct tokenisation decision problem is in \np.
\end{lemma}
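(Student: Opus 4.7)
The plan is to exhibit a certificate and a polynomial-time verifier. The certificate has two parts: (i) a candidate vocabulary $\vocabunary \subset \unaryspace_{+}$ with $|\vocabunary| = 1 + \vocabsize$, and (ii) for each string-length $\vocabsublen_m \in \datasetlength$, a description of a tokenisation $\subwordsunary_m \in \vocabunary^*$ of $\vocabsublen_m$ using only tokens from $\vocabunary$. The verifier checks that (a) the vocabulary has the required size, (b) every $\subwordsunary_m$ satisfies $\sumlengths(\subwordsunary_m) = \vocabsublen_m$, and (c) the total compressed length $\sum_m |\subwordsunary_m|$ is at most $\maxsymbols$.

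The main obstacle, as flagged in the paragraph preceding the lemma, is the certificate size. In the string-length representation, a single $\vocabsublen_m$ can be exponentially large in the input bit-length, so writing $\subwordsunary_m$ as an explicit sequence of tokens is not acceptable. I would sidestep this by representing each $\subwordsunary_m$ as a multiset over $\vocabunary$: for every token $\subwordunary \in \vocabunary$, record the count $c_{m,\subwordunary} \in \N$ of how many times $\subwordunary$ appears in $\subwordsunary_m$. For any yes-instance we have $|\subwordsunary_m| \leq \maxsymbols$, so each $c_{m,\subwordunary}$ fits in $O(\log \maxsymbols)$ bits and the full certificate is polynomially bounded in the input size (since all of $\vocabsize$, $\log \maxsymbols$, and $\log(\max_m \vocabsublen_m)$ are).

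The verifier then computes, for each $m$, the weighted sum $\sum_{\subwordunary \in \vocabunary} c_{m,\subwordunary}\cdot \subwordunary$ and the total $\sum_{\subwordunary \in \vocabunary} c_{m,\subwordunary}$ via straightforward polynomial-time arithmetic on binary-encoded integers, compares the former against $\vocabsublen_m$, and sums the latter over $m$ to compare against $\maxsymbols$. Completeness is immediate: any yes-instance admits some $\vocabunary$ and optimal encodings $\subwordsunary_m = \directtoken[\vocabunary](\vocabsublen_m)$ whose multiset representations pass all checks by \cref{eq:unarytok_func}. Soundness follows because check (b) ensures each $\subwordsunary_m$ is a (not necessarily optimal) valid encoding of $\vocabsublen_m$ over $\vocabunary$, so $|\directtoken[\vocabunary](\vocabsublen_m)| \leq |\subwordsunary_m|$, and hence the genuine optimal compressed length is also at most $\maxsymbols$.
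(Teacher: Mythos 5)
Your approach is essentially the same as the paper's: you use the vocabulary together with per-string token multiplicities as the certificate, observe that yes-instances yield counts bounded by $\maxsymbols$ so the counts fit in $O(\log\maxsymbols)$ bits, and verify by binary arithmetic, with soundness following because any verified assignment lower-bounds the optimal encoding.

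However, there is a genuine gap in the claim that \emph{all of} $\vocabsize$, $\log\maxsymbols$, and $\log(\max_m \vocabsublen_m)$ are polynomially bounded in the input size. In the string-length representation, $\vocabsize$ is a number in the input and is presumably written in binary, so the bit-length of the input is only $O(\log\vocabsize)$ for that component. Your certificate writes out a vocabulary with $1 + \vocabsize$ entries explicitly, which could therefore be exponentially large in the input size; you have not justified why this is avoidable. The paper handles exactly this by a case split: if $\vocabsize \geq |\datasetlength|$ then every dataset string can be its own token, so no explicit vocabulary is needed and the verifier simply checks $\maxsymbols \geq |\datasetlength|$; otherwise $\vocabsize < |\datasetlength|$, which is polynomial in the input because the dataset itself is spelled out. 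You need to add this (or an equivalent WLOG observation that $\vocabsize$ may be capped at $|\datasetlength|$) before the certificate-size argument goes through.
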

\vspace{-5pt}

\begin{proof} 
We use as certificate a set of string-lengths composing the tokeniser's vocabulary $\vocabunary$, as well as a set $\coinassignments = \{\coinassignment_m\}_{m=1}^{M}$, where each $\coinassignment_m \in \N^{|\vocabunary|}$
shows how many tokens of each length should be used to tokenise each string in the dataset $\datasetlength$. 
Verifying this certificate then simply requires computing the sum of tokens used for each target.

If $|\datasetlength| \leq \vocabsize$, each $\vocabsublen \in \datasetlength$ can be included as a token, and thus all entries in our dataset can be compressed into single token; consequently, the certificate can simply be empty and we verify the problem's satisfiability by checking whether $\maxsymbols \geq |\datasetlength|$ holds.
Assuming $|\datasetlength| > \vocabsize$---and therefore that $\vocabsize$'s value is polynomial in the input---%
we have that the certificate 
also has polynomial length, and that, in particular, the size of $\coinassignments$ is bounded by $|\datasetlength|\,\vocabsize\,\log \vocabsublen_{\mathrm{max}}$, where $\vocabsublen_{\mathrm{max}}$ is the maximum string-length in $\datasetlength$.
Thus, all that is left is to compute the sum of $\coinassignments$ and check whether
$\sum_{\coinassignment  \in \coinassignments} \sumlengths(\coinassignment) \leq \maxsymbols$.\looseness=-1
\end{proof}

\section{Proof of Forward Step of \texorpdfstring{\Cref{lemma:dutok_nphard}}{Theorem}}
\label{appendix:proof_direct_nphard_if_lemma}

\begin{restatable}{lemma}{directnphardiflemma} 
\label{lemma:direct_nphard_if_lemma}
 If a \vcp instance is satisfiable, then the \dutok instance output by \cref{reduction:vc_to_utok}
 is also satisfiable. Formally:
 $\vcoverfunc \implies \minutok(\reductionfuncthreefull)$.
\end{restatable}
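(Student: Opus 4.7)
The plan is to take any vertex cover witnessing $\vcoverfunc = \valtrue$ and translate it into an explicit unary vocabulary $\vocabunary$ of the prescribed size, then exhibit a segmentation of each string in $\datasetlength$ whose total length meets $\maxsymbols$. Since $\directtoken[\vocabunary]$ by definition selects a minimum-length segmentation, producing any single such segmentation is enough to conclude $\objectivefunclength(\directtoken[\vocabunary], \datasetlength) \leq \maxsymbols$, and hence $\minutok(\reductionfuncthreefull) = \valtrue$.

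Concretely, I would proceed as follows. First, fix a vertex cover $\coversol \subseteq \vertices$ with $|\coversol| \leq \kbudget$ and, if necessary, pad it with arbitrary non-cover vertices so that $|\coversol| = \kbudget$ exactly (any superset of a vertex cover is still a vertex cover). Second, define the vocabulary as suggested in the sketch, $\vocabunary = \{\usymboltok^{\vertextoken{j}} \mid \vertex_j \in \vertices\} \cup \{\usymboltok^{\bigvaluetok}\} \cup \{\usymboltok^{\covertoken{j}} \mid \vertex_j \in \coversol\}$; this has $\vcnvertices + 1 + \kbudget = \vocabsize$ non-alphabet tokens by construction. Third, account for compression on each subdataset separately by invoking the additive identities baked into the reduction---namely $\covercharstring{j} = \vertexcharstring{j} + \bigvalue$ and $\edgecharstring{j}{j'} = \vertexcharstring{j} + \covercharstring{j'} = \covercharstring{j} + \vertexcharstring{j'}$. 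These supply natural two-token splits whenever the single-token version is not in $\vocabunary$.

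The bookkeeping then runs as follows. Strings in $\dataset_1$ are each single tokens, contributing $\vcnvertices + 1$ symbols. For $\dataset_2$, the $\kbudget$ strings with $\vertex_j \in \coversol$ are single tokens, while each of the other $\vcnvertices - \kbudget$ strings splits into the two tokens $\usymboltok^{\vertextoken{j}}$ and $\usymboltok^{\bigvaluetok}$, giving $2\vcnvertices - \kbudget$ symbols in total. For $\dataset_3$, the vertex-cover property guarantees that each edge $(\vertex_j,\vertex_{j'}) \in \edges$ has at least one endpoint in $\coversol$, so the corresponding identity supplies a two-token split of $\usymbol^{\edgecharstring{j}{j'}}$, contributing $2\vcmedges$ symbols. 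Summing yields $(\vcnvertices+1) + (2\vcnvertices - \kbudget) + 2\vcmedges = 3\vcnvertices + 2\vcmedges + 1 - \kbudget = \maxsymbols$, as required.

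The main obstacle in this forward direction is essentially notational: one has to check the size of $\vocabunary$ and confirm that every proposed two-token split uses tokens actually lying in $\vocabunary$. The heavier arithmetic machinery built on $\uoffset$ as a numerical base, which enforces unique decompositions of sums, is not invoked here---it is reserved for the backward direction, where one must rule out all unintended segmentations rather than merely exhibit one good one.
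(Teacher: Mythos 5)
Your proposal is correct and follows essentially the same route as the paper: same vocabulary construction, same decomposition identities $\covercharstring{j} = \vertexcharstring{j} + \bigvalue$ and $\edgecharstring{j}{j'} = \covercharstring{j} + \vertexcharstring{j'} = \vertexcharstring{j} + \covercharstring{j'}$, and the same count $(\vcnvertices+1) + (2\vcnvertices - \kbudget) + 2\vcmedges = \maxsymbols$. The one small thing you handle more carefully than the paper is the padding of the cover to size exactly $\kbudget$; the paper simply asserts a cover of $\kbudget$ vertices exists, whereas the decision problem only guarantees one of size at most $\kbudget$, so your padding remark (needed because $|\vocabunary|$ must equal $\vocabsize$ exactly and because a smaller cover would leave more $\dataset_2$ strings at two tokens) closes a minor gap.
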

\begin{proof}
Suppose the given instance of \vcp is satisfiable.
Then there exists a vertex cover $\coversol \subseteq \vertices$ which uses $\kbudget$ vertices. 
As a consequence, we can choose as tokens those with the following lengths:
\begin{align}
\vocabunary = \{\vertextoken{j} \mid \vertex_j \in \vertices\} \cup \{\bigvaluetok \} \cup \{  \covertoken{j} \mid \vertex_j \in \coversol\}
\end{align}
We have that every vertex-string in $\dataset_1$ is covered by a single token, either of length $\vocabsublentok_j$ or $\bigvaluetok$. 

All $\kbudget$ cover-strings $\usymbol^{\covercharstring{j}}$
in $\dataset_2$ which encode a vertex that belongs to $\coversol$ are also covered by a single token. 
The remaining $\vcnvertices - \kbudget$ cover-strings in $\dataset_2$ are covered by 2 tokens, as for every target of length $\covercharstring{j} = \enc{j} + \uoffset^3$ there exist the tokens of length $\vertextoken{j}= \enc{j}$ and $\bigvaluetok = \uoffset^3$.

As $\coversol$ is a vertex cover, we have that for every edge $(\vertex_j, \vertex_{j'}) \in \edges$ at least one of the vertices belongs to $\coversol$.
It follows that for every edge-string of length 
$\edgecharstring{j}{j'} = \enc{j} + \enc{j'} + \uoffset^3$ in $\dataset_3$, 
at least one of the tokens of length
$\covertoken{j} = \enc{j} + \bigvalue$ or 
$\covertoken{j'}= \enc{j'} + \bigvalue$
belongs to $\vocabunary$. 
Thus, all edge-strings are covered by two tokens.

We can now count the number of symbols used in each dataset:
$\dataset_1$ uses $\vcnvertices+1$ symbols;
$\dataset_2$ uses $2\vcnvertices-\kbudget$ symbols; and
$\dataset_3$ uses $2\vcmedges$ symbols. 
This gives us a total of $3\vcnvertices+2\vcmedges+1-\kbudget = \maxsymbols$ symbols, which satisfies this tokenisation instance.
Thus, we have that $\minutokfull = \valtrue$.
\end{proof}

\section{Proof of Backward Step of \texorpdfstring{\Cref{lemma:dutok_nphard}}{Theorem}}
\label{appendix:proof_direct_nphard_onlyif_lemma}

\begin{restatable}{lemma}{directnphardonlyiflemma} 
\label{lemma:direct_nphard_onlyif_lemma}
    If the \dutok instance output by \cref{reduction:vc_to_utok} is satisfiable, then the \vcp instance which generated it is as well. Formally:
    $\minutok(\reductionfuncthreefull) \implies \vcoverfunc$.
\end{restatable}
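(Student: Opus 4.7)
The plan is to follow the four-step structure in the proof sketch. Fix an optimal vocabulary $\vocabunary$ for the reduced instance achieving $\objectivefunclength(\directtoken[\vocabunary], \datasetlength) \leq \maxsymbols$; from $\vocabunary$ I will extract a vertex cover of $(\vertices, \edges)$ of size at most $\kbudget$. The entire argument hinges on reading $\uoffset = (\vcnvertices+\vcmedges+1)^4$ as a numerical base: since $j, j^2, j^3$ and all pairwise sums thereof are strictly less than $\uoffset$ for indices $j \leq \vcnvertices$, the base-$\uoffset$ expansions of $\enc{j} = j + j^2 \uoffset + j^3 \uoffset^2$ and of $\enc{j}+\enc{j'}$ are unambiguous, while $\bigvalue = \uoffset^4$ occupies its own digit position. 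Step~1 (all string-lengths in $\datasetlength$ are distinct) follows immediately: different indices give different digit triples, and Newton's identities recover $\{j, j'\}$ from the digits of $\enc{j}+\enc{j'}$.

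For step~2, suppose some token $\vocabsublentok^* \in \vocabunary$ has length $\vocabsublen^* \notin \datasetlength$. Since $\bigvalue$ strictly dominates every $\enc{j}$, any optimal segmentation of a dataset string uses at most one token of length $\geq \bigvalue$, and the remaining ``small'' pieces sum to $0$, $\enc{j}$, or $\enc{j}+\enc{j'}$. Base-$\uoffset$ uniqueness forces each such piece to be a complete $\enc{k}$-block, so $\vocabsublentok^*$ can never appear in an optimal segmentation; replacing it with any missing cover-token $\covertoken{k}$ strictly improves compression, contradicting optimality. For step~3, if some $\vertextoken{j}$ is missing, then $\usymbol^{\enc{j}} \in \dataset_1$ must be rewritten as a sum of other vocabulary lengths; by step~2 and the size of $\bigvalue$, only $\{\enc{k} : k \neq j\}$-blocks are available, but the power-sum constraints $\sum_i k_i = j$ and $\sum_i k_i^2 = j^2$ (with $k_i \geq 1$) admit no such decomposition, so the string must fall back to character tokens and blow the budget. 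An analogous argument excludes $\bigvaluetok \notin \vocabunary$.

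For step~4, vertex tokens and $\bigvaluetok$ occupy $\vcnvertices+1$ of the $\vocabsize = \vcnvertices+1+\kbudget$ slots, leaving $\kbudget$ slots filled by cover-tokens indexed by some $S \subseteq \{1,\dots,\vcnvertices\}$ and edge-tokens indexed by some $E \subseteq \edges$, with $|S|+|E|=\kbudget$. A per-string count of compression against the baseline $(\vcnvertices+1)+2\vcnvertices+3\vcmedges$ gives total savings of exactly $|S| + 2|E| + |\{e \in \edges \setminus E : e \cap S \neq \emptyset\}|$. The budget $\maxsymbols$ demands savings of at least $\vcmedges + \kbudget = \vcmedges + |S| + |E|$, which rearranges to the requirement that every edge in $\edges \setminus E$ be covered by $S$. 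Taking $C = S \cup \{\text{one arbitrary endpoint of each } e \in E\}$ then yields $|C| \leq |S|+|E|=\kbudget$ and $C$ covers every edge of $\edges$, so $\vcoverfunc = \valtrue$, as required.

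The main obstacle will be step~2: justifying that no token of exotic length can appear in an optimal vocabulary requires carefully ruling out mixed decompositions such as $\vocabsublen^* + (\bigvalue - \vocabsublen^*)$ for tokenising $\usymbol^{\bigvalue}$, or $\vocabsublen^* + (\enc{j}+\bigvalue - \vocabsublen^*)$ for a cover-string. The key technical step is a robust uniqueness lemma: any nonnegative decomposition of a dataset length into parts, with at most one part of size $\geq \bigvalue$, is forced by base-$\uoffset$ uniqueness into full $\enc{k}$-blocks plus at most one $\bigvalue$-block. Once this is established, steps~1, 3, and~4 reduce to direct counting, and the vertex-cover extraction in step~4 completes the reduction.
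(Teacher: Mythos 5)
Your overall four-step skeleton matches the paper's, and your steps~1, 3, and~4 are in the right spirit (the power-sum identities you invoke are precisely the paper's sublemmas, e.g.\ that $i+j=r$, $i^2+j^2=r^2$ with $i,j>0$ is impossible, and that two distinct unordered pairs cannot share both sum and sum of squares). The savings-counting in your step~4 and the final extraction of a cover $C = S \cup \{\text{one endpoint of each } e\in E\}$ also mirror the paper.

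The gap is in step~2, and it is a real one. Your ``robust uniqueness lemma''---that any decomposition of a dataset length into parts, with at most one part $\geq \bigvalue$, must consist of full $\enc{k}$-blocks (plus at most one $\bigvalue$)---is false as stated. Nothing about base-$\uoffset$ uniqueness prevents, say, splitting $\enc{j}$ into $j$ and $j^2\uoffset + j^3\uoffset^2$, or $\covercharstring{j}$ into $\bigvalue-1$ and $\enc{j}+1$; base-$\uoffset$ uniqueness only identifies a number once you already constrain the summands. Likewise, your claim that an exotic token ``can never appear in an optimal segmentation'' is not true: a token of length $\covercharstring{j}-1$ yields a segmentation $\langle \covercharstring{j}-1, 1\rangle$ of the same cost as $\langle \vertextoken{j}, \bigvaluetok\rangle$. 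What you actually need is that an exotic token cannot \emph{help}, and that (plus the fact that \emph{all} $\vocabsize$ slots must be filled by dataset lengths) does not follow from a case analysis of decompositions.

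The paper sidesteps all of this with a pure counting argument: each of the $\vocabsize$ non-alphabet tokens can reduce at most one dataset string to a single symbol (because all dataset lengths are distinct, from step~1), so the compressed length is at least $\vocabsize + 2(|\datasetlength|-\vocabsize)$, and by construction this lower bound equals $\maxsymbols$ exactly. Since the decision problem is assumed satisfied, the optimum must meet this bound, which forces every token to be a dataset string length and every remaining string to be compressed to exactly two symbols---no exotic-token case analysis required. I would replace your step~2 with this counting argument; the rest of your plan then goes through, modulo spelling out the number-theoretic sublemmas you gesture at with ``Newton's identities.''
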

\begin{proof}
    Assume this instance $(\datasetlength, \vocabsize, \maxsymbols)$ of \dutok---where $(\datasetlength, \vocabsize, \maxsymbols) = \reductionfuncthreefull$---is satisfiable, i.e., that $\minutok(\reductionfuncthreefull)$ evaluates to true.
    We must prove that, in this case, $\vcoverfunc$ also evaluates to true.
    Now, let $\vocabunary$ be an arbitrary  optimal solution to $(\datasetlength, \vocabsize, \maxsymbols)$.
By construction, we have that $\vocabsize = \vcnvertices + 1 + \kbudget$ and $\maxsymbols = 3\vcnvertices + 2\vcmedges + 1 - \kbudget$.
We proceed in four steps:
\begin{enumerate}
    \item[\circled{1}] We prove that all target strings in $\datasetlength$ are unique;
    \item[\circled{2}] We prove that an optimal tokeniser must only include full target strings in its vocabulary;\looseness=-1
    \item[\circled{3}] We prove that an optimal tokeniser will include all target strings in $\dataset_1$ in its vocabulary;
    \item[\circled{4}] We prove that, if an optimal tokeniser achieves compression $\maxsymbols$, then the instance of \vcp which was reduced to it is satisfiable.
\end{enumerate}
These steps first show that an optimal tokeniser must admit a certain form (Steps \circled{1} -- \circled{3}), and that from this form (Step \circled{4}), we can deduce a valid vertex cover, which concludes the proof.
\end{proof}

\newcommand{\baseNnumber}[5]{(#1,#2,#3,#4,#5)_{\uoffset}}

\begin{mylemmastep} \textnormal{(Step \circled{1}).}
    All strings in $\datasetlength$ are unique.
\end{mylemmastep}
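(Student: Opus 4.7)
The plan is to exploit $\uoffset = (\vcnvertices + \vcmedges + 1)^4$ as a large enough base so that every relevant integer in $\datasetlength$ has a unique base-$\uoffset$ representation, and then to show that collisions across the three subdatasets are impossible because they occupy disjoint numerical ranges or, when they do not, produce distinguishable digit patterns.

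First I would establish the basic size bound: since $j \leq \vcnvertices < \uoffset^{1/4}$, every ``digit'' that appears in the encoding, namely $j$, $j^2$, $j^3$, and sums of at most two such terms (i.e.\ $j + j'$, $j^2 + j'^2$, $j^3 + j'^3$ for $j, j' \leq \vcnvertices$), is strictly less than $\uoffset$. Consequently, $\enc{j} = j + j^2\uoffset + j^3\uoffset^2$ has a canonical base-$\uoffset$ expansion whose digits at positions $0, 1, 2$ are exactly $j, j^2, j^3$. From this, uniqueness within $\dataset_1 \setminus \{\bigvalue\}$ is immediate: reading off the position-$0$ digit recovers $j$. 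Moreover, each $\enc{j} < \uoffset^3 < \uoffset^4 = \bigvalue$, so $\bigvalue$ is distinct from every vertex-string and, in fact, the strings of $\dataset_1$ lie in $[1, \uoffset^4]$ whereas every string of $\dataset_2 \cup \dataset_3$ is at least $\uoffset^4 + 1$. This already separates $\dataset_1$ from the rest.

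Second, I would handle $\dataset_2$ versus $\dataset_3$. Cover-strings satisfy $\covercharstring{j} - \bigvalue = \enc{j}$, whose base-$\uoffset$ expansion has digits $(j, j^2, j^3)$; edge-strings satisfy $\edgecharstring{j}{j'} - \bigvalue = \enc{j} + \enc{j'}$, whose digits are $(j+j', j^2+j'^2, j^3+j'^3)$ (still below $\uoffset$, so no carries). Uniqueness within $\dataset_2$ follows as in Step~1. For uniqueness within $\dataset_3$, I would invoke the Newton-identity argument: the position-$0$ and position-$1$ digits determine $j + j'$ and $j^2 + j'^2$, hence also $jj'$, hence the multiset $\{j, j'\}$; as edges are unordered this determines the edge. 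Finally, to separate $\dataset_2$ from $\dataset_3$, suppose $\enc{k} = \enc{j} + \enc{j'}$ with $j \neq j'$ (both positive); reading the position-$0$ digit forces $k = j + j'$, but then the position-$2$ digit would require $k^3 = j^3 + j'^3$, contradicting $(j+j')^3 = j^3 + j'^3 + 3jj'(j+j') > j^3 + j'^3$.

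I expect no serious obstacle here; the whole argument is a careful ``digit bookkeeping'' in base $\uoffset$, and the only mildly delicate piece is ruling out edge/cover collisions, which the cubic-sum inequality above dispatches cleanly. The choice $\uoffset = (\vcnvertices + \vcmedges + 1)^4$ is comfortably larger than any digit that appears, so no carries ever complicate the base-$\uoffset$ representations, and every case reduces to matching digits position by position.
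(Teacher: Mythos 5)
Your proof is correct and follows essentially the same strategy as the paper: exploit the base-$\uoffset$ representation (large enough that no carries occur) to reduce uniqueness to matching digits position by position, then rule out the nontrivial collisions (cover vs.\ edge, edge vs.\ edge) with elementary algebraic identities in the positions $0$, $1$, $2$. The only minor divergence is that for the cover/edge separation the paper uses the quadratic digit via its SubLemma (no positive $j_2, j_3$ with $j_1 = j_2 + j_3$ and $j_1^2 = j_2^2 + j_3^2$), whereas you use the cubic digit and the inequality $(j+j')^3 > j^3 + j'^3$; both are valid and interchangeable here.
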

\begin{proof}
Now we show
that all strings in $\datasetlength$ are unique.
These strings all have lengths:
\begin{align}
    \vertexcharstring{j_1} = \enc{j_1}, \quad
    \bigvalue = \uoffset^4, \quad
    \covercharstring{j_1} = \enc{j_1} + \bigvalue, \quad
    \edgecharstring{j_1}{j_2} = \enc{j_1} + \enc{j_2} + \bigvalue
\end{align}
for $1 \leq j_1, j_2 \leq \vcnvertices$ and with $\enc{j_1} = j_1 + j_1^2\uoffset + j_1^3\uoffset^2$.
Notably, our reduction defines $\uoffset \gg \vcnvertices^3$ and it will be useful to think about these lengths in base $\uoffset$.
Let a number $\baseNnumber{a}{b}{c}{d}{e}$ denote $a\uoffset^4 + b\uoffset^3 + c\uoffset^2 + d\uoffset^1 + e$.
For example, we can write: 
\begin{align}
    \baseNnumber{a}{b}{c}{d}{\uoffset-1} + \baseNnumber{a}{0}{0}{0}{1} = \baseNnumber{2a}{b}{c}{d+1}{0}
\end{align}
We can similarly write in this base: 
\begin{subequations}
\begin{gather}
    \vertexcharstring{j_1} = \baseNnumber{0}{0}{j_1^3}{j_1^2}{j_1}, \quad
    \bigvalue = \baseNnumber{1}{0}{0}{0}{0}, \quad
    \covercharstring{j_1} = \baseNnumber{1}{0}{j_1^3}{j_1^2}{j_1},
    \\
    \edgecharstring{j_1}{j_2} = \baseNnumber{1}{0}{j_1^3\!+\!j_2^{3}}{j_1^2\!+\!j_2^{2}}{j_1\!+\!j_2},
    \,\,\,\,
    \covercharstring{j_1}+\covercharstring{j_2} = \baseNnumber{2}{0}{j_1^3\!+\!j_2^{3}}{j_1^2\!+\!j_2^{2}}{j_1\!+\!j_2}
\end{gather}
\end{subequations}
Two numbers are the same only if each ``digit'' in this base system is the same.
Given this structure,  we see that $\vertexcharstring{j_1}$ and $\bigvalue$ are all unique string-lengths.
Further, the string-lengths $\covercharstring{j_1}$ are all different from one another.
It is left to show that: 
(i) all string-lengths $\covercharstring{j_1}$ are different from all $\edgecharstring{j_1}{j_2}$; and 
(ii) that string-lengths $\edgecharstring{j_1}{j_2}$ are different among themselves.
Requirement (i) is proven by \cref{lemma:two_make_one}, which shows that there is no set of numbers $j_1, j_2, j_3 \in \N$ for which $j_1 = j_2 + j_3$ and $j_1^2 = j_2^{2} + j_3^{2}$.
Requirement (ii) is proven by \cref{lemma:two_pair_unique_s}, which shows that there is no set of numbers $j_1, j_2, j_3, j_4 \in \N$ for which $j_1 + j_2 = j_3 + j_4$ and $j_1^2 + j_2^{2} = j_3^{2} + j_4^{2}$. 
It follows that all strings in $\datasetlength$ are unique.
\end{proof}

\begin{mylemmastep} \textnormal{(Step \circled{2}).}
An optimal tokeniser must only include full character-strings in $\datasetlength$ and compress all other strings to two symbols.
\end{mylemmastep}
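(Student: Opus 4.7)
The plan is a swap-argument proof by contradiction, paralleling Steps 1--3 of the binary backward direction in \cref{lemma:dbtok_nphard_onlyif}. Assume the optimal vocabulary $\vocabbad$ either contains a token length $\vocabsublen^* \notin \datasetlength$, or leaves some string in $\datasetlength$ tokenised by at least $3$ tokens. I will construct a vocabulary $\vocabgood$ doing at least as well (and strictly better in the offending sub-cases), yielding a contradiction with the optimality of $\vocabbad$.

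A swap partner is always available: since $|\datasetlength| = 2\vcnvertices + \vcmedges + 1 > \vocabsize = \vcnvertices + 1 + \kbudget$ in any nontrivial instance (where $\kbudget < \vcmedges$), there is always some $\vocabsublen \in \datasetlength \setminus \vocabbad$. Adding $\vocabsublen$ to the vocabulary converts its current multi-token tokenisation into a single token, saving at least one symbol. The real work is bounding what is lost by removing $\vocabsublen^*$. Let $u(\vocabsublen^*)$ denote the number of occurrences of $\vocabsublen^*$ across the optimal tokenisation of $\dataset$ under $\vocabbad$. Removing $\vocabsublen^*$ costs at most $u(\vocabsublen^*)$ extra symbols, since each occurrence can be replaced by at least one alternative token (the unit-length token $\usymboltok$ is always in $\vocabunary$).

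The central technical step is bounding $u(\vocabsublen^*)$ via the base-$\uoffset$ structure of \cref{reduction:vc_to_utok}. Because $\enc{j}$ encodes the triple $(j, j^2, j^3)$ in base $\uoffset > (\vcnvertices + \vcmedges + 1)^3$, any valid length-decomposition of a dataset-string length into a sum of tokens forces each base-$\uoffset$ ``digit'' to match independently; this is the machinery already driving Step 1. I expect a short case analysis on the magnitude of $\vocabsublen^*$ (below $\bigvalue$, near $\bigvalue$, between $\bigvalue$ and $2\bigvalue$, or larger) to show that any $\vocabsublen^* \notin \datasetlength$ can appear in the valid decomposition of at most a small constant number of dataset strings. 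The swap $\vocabbad \mapsto (\vocabbad \setminus \{\vocabsublen^*\}) \cup \{\vocabsublen\}$---with $\vocabsublen$ chosen from $\datasetlength \setminus \vocabbad$ currently tokenised using at least $u(\vocabsublen^*)+1$ tokens---then yields an at-least-as-good vocabulary consisting only of full dataset strings.

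For the ``two-symbol'' claim, the same swap mechanism is iterated: any dataset string still tokenised into $\geq 3$ tokens can be added to the vocabulary in place of a now-redundant token, saving at least two symbols while losing at most the bounded $u$-value of the replaced token. The main obstacle will be the case analysis for bounding $u(\vocabsublen^*)$: I must rule out the subtle scenarios where a single non-dataset length pairs with many different combinations of dataset-string tokens to produce distinct valid targets. The base-$\uoffset$ uniqueness should provide this cleanly, but the edge cases (particularly $\vocabsublen^*$ near $\bigvalue$ or $2\bigvalue$) must be handled explicitly to ensure the swap inequality is at least non-strict in every regime.
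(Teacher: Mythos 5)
Your swap-argument approach is genuinely different from the paper's, but it has a gap that prevents it from closing. The paper's proof of Step \circled{2} is a short global counting argument, not a local swap: given Step \circled{1} (all lengths in $\datasetlength$ are distinct), any tokeniser can compress at most $\vocabsize$ strings to a single token (each such string's full length must be a vocabulary entry, and distinct strings require distinct entries), so the compressed length is lower-bounded by $\vocabsize\cdot 1 + (|\datasetlength|-\vocabsize)\cdot 2$. A direct computation shows this quantity \emph{equals} $\maxsymbols = 3\vcnvertices + 2\vcmedges + 1 - \kbudget$. Since we are in the backward direction and assume $\minutok(\reductionfuncthreefull)$ is true, the target is achieved, forcing the lower bound to be tight: exactly $\vocabsize$ strings are single tokens (so the vocabulary consists of full dataset strings) and the rest are exactly two tokens. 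Both halves of Step \circled{2} drop out simultaneously, with no case analysis on the magnitude of a hypothetical bad token.

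Your swap argument, in contrast, needs a bound on the cost of removing $\vocabsublen^*$, and the one you state is not correct as justified. You write that removing $\vocabsublen^*$ ``costs at most $u(\vocabsublen^*)$ extra symbols, since each occurrence can be replaced by at least one alternative token (the unit-length token $\usymboltok$).'' But replacing a token of length $\vocabsublen^*$ by unit tokens adds $\vocabsublen^* - 1$ symbols per occurrence, not $1$; the existence of \emph{some} replacement does not bound the replacement's length. To make the swap go through you would need to show that every occurrence of $\vocabsublen^*$ can be replaced by at most two remaining tokens, which is precisely the kind of claim the base-$\uoffset$ machinery would have to certify, and which you have not established. Moreover, even with a constant bound on $u(\vocabsublen^*)$ the swap only breaks even if the gain from adding $\vocabsublen$ (at least one symbol) exceeds that loss, which is not automatic for $u(\vocabsublen^*)\geq 2$. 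The deeper point you are missing is the arithmetic identity between the counting lower bound and $\maxsymbols$: that identity is what makes Step \circled{2} a one-line consequence of Step \circled{1} plus the satisfiability hypothesis, and without it your swap argument has to do all of the work by hand.
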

\begin{proof}
Note that, since all strings in $\datasetlength$ are unique, the best compression one could possibly achieve would result from compressing
$\vocabsize$ strings into a single symbol, and the remaining
$|\datasetlength| - \vocabsize$ to two symbols.
As $|\datasetlength| = 2\vcnvertices + \vcmedges + 1$, this (hypothetical) optimal compression would lead to: 
\begin{subequations}
\begin{align}
    \objectivefunclength(\vocabopt, \datasetlength) 
    &= \vocabsize + 2(|\datasetlength| - \vocabsize) \\
    &= \vcnvertices + 1 + \kbudget + 2(2\vcnvertices + \vcmedges + 1 - \vcnvertices - 1 - \kbudget) \\
    &= 3\vcnvertices + 2\vcmedges + 1 + \kbudget - \kbudget) = \maxsymbols
\end{align}
\end{subequations}
As by assumption $\minutok(\reductionfuncthreefull)$ evaluates to true, our tokeniser must achieve this compression, and is thus composed of $\vocabsize$ full strings in $\datasetlength$.
Further, it must compress all other strings to at most two symbols.
\end{proof}

\begin{mylemmastep} \textnormal{(Step \circled{3}).}
An optimal tokeniser selects every vertex-string in $\dataset_1$ as a token.
\end{mylemmastep}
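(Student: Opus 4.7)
My plan is to argue by contradiction, directly leveraging Steps \circled{1} and \circled{2}. Suppose some vertex-string $\vertexcharstring{j} \in \dataset_1$ is not included in $\vocabopt$. By Step \circled{2}, every string in $\datasetlength$ that is not itself a token must be compressed to exactly two symbols, and every token corresponds to a full string-length from $\datasetlength$. So I would try to write $\enc{j}$ as a sum of two such string-lengths drawn from $\{\vertexcharstring{j'}, \bigvalue, \covercharstring{j'}, \edgecharstring{j'}{j''}\}$ and show that no valid decomposition exists.

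The first simplification is a size argument: because $\enc{j} = j + j^2 \uoffset + j^3 \uoffset^2 < \uoffset^4 = \bigvalue$ (using $j \leq \vcnvertices$ and $\uoffset = (\vcnvertices + \vcmedges + 1)^4$), any token of length at least $\bigvalue$ is too large to serve as one of the two summands. This instantly rules out $\bigvalue$ itself, every cover-string, and every edge-string, forcing both summands to be vertex-strings $\vertexcharstring{j_1}$ and $\vertexcharstring{j_2}$ with $j_1, j_2 \in \{1, \dots, \vcnvertices\}$ (possibly equal).

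The remaining work is to refute $\enc{j} = \enc{j_1} + \enc{j_2}$. The last step exploits the base-$\uoffset$ representation already introduced in Step \circled{1}: matching ``digits'' in this equation forces simultaneously $j = j_1 + j_2$, $j^2 = j_1^2 + j_2^2$, and $j^3 = j_1^3 + j_2^3$. The case $j_1 = j_2$ collapses immediately, since $j = 2 j_1$ together with $j^2 = 2 j_1^2$ would require $4 j_1^2 = 2 j_1^2$, which is impossible for $j_1 \geq 1$; the case $j_1 \neq j_2$ contradicts \cref{lemma:two_make_one}. I expect the most delicate point to be justifying that no ``carrying'' occurs in this base-$\uoffset$ addition, so that the digit-wise identification is valid; the choice $\uoffset = (\vcnvertices + \vcmedges + 1)^4$ is engineered precisely to keep every coefficient well below $\uoffset$, and spelling this out carefully should suffice.
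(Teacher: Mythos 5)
Your proof plan matches the paper's argument essentially step for step: the size comparison with $\bigvalue$ immediately rules out $\bigvalue$, cover-strings, and edge-strings as summands, forcing both summands to be vertex-strings, after which \cref{lemma:two_make_one} refutes $\enc{j} = \enc{j_1} + \enc{j_2}$. Two small remarks: your separate treatment of the case $j_1 = j_2$ is unnecessary, since \cref{lemma:two_make_one} places no distinctness restriction on $i$ and $j$ and its proof (via $ij = 0$) works verbatim when $i = j$; and the paper's proof of this step closes with a further observation that the $\bigvalue$-string itself must also be in $\vocabunary$ (every other string-length is either far below or strictly above $\bigvalue$, so no two tokens sum to it), which Step \circled{4} silently relies on when it claims $\vcnvertices + 1$ tokens are fixed — you would want to add that line so the downstream step is fully supported.
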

\begin{proof}
Suppose that some vertex-string of length $\vertexcharstring{j_1}$ in $\dataset_1$ 
is not chosen as a token.
Then $\vertexcharstring{j_1}$ must be the sum of two tokens. 
No tokens of cover- or vertex-strings (in datasets $\dataset_2$ and $\dataset_3$) can be used, since such tokens contain a summand $\bigvalue$, which is significantly larger than $\vertexcharstring{j_1}$. 
Hence, both summands would have to also be vertex-strings $\vertexcharstring{j_2}, \vertexcharstring{j_3}$. These string-lengths have values:
\begin{align}
    \vertexcharstring{j_1} = \baseNnumber{1}{0}{j_1^3}{j_1^2}{j_1}, \quad
    \vertexcharstring{j_2} = \baseNnumber{1}{0}{j_2^{3}}{j_2^{2}}{j_2}, \quad
    \vertexcharstring{j_3} = \baseNnumber{1}{0}{j_3^{3}}{j_3^{2}}{j_3}
\end{align}
Again, by \cref{lemma:two_make_one}, it is impossible that $\vertexcharstring{j_1} = \vertexcharstring{j_2} + \vertexcharstring{j_3}$.
Thus, no target string in $\dataset_1$ can be covered by two other tokens; but, as argued in Step \circled{2}, the tokeniser may use at most two symbols per target.
This concludes the proof that all character-strings in $\dataset_1$ must be included in the vocabulary $\vocabunary$.
Further, every cover and edge-string is larger than $\bigvalue$, while all vertex-strings are significantly smaller than it;
$\bigvalue$ thus cannot be written as two other tokens and must hence also be part of~$\vocabunary$.
\end{proof}

\newcommand{\vocabsublenprime}{\vocabsublentok_{\oplus}}
\newcommand{\symboladditionaltok}{\oplus}
\newcommand{\sublenprime}{j_{\oplus}}
\begin{mylemmastep} \textnormal{(Step \circled{4}).}
If an optimal tokeniser achieves compression $\maxsymbols$, then the original \vcp instance is satisfiable.
\end{mylemmastep}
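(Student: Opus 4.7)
The strategy is to read a vertex cover of size at most $\kbudget$ directly off the structure of an optimal vocabulary $\vocabunary$. By Steps \circled{1}--\circled{3} we already know that $\vocabunary$ contains all $\vcnvertices$ vertex-strings and $\bigvaluetok$, so the remaining $\kbudget$ tokens are full strings chosen from $\dataset_2 \cup \dataset_3$, and every target not chosen as a token must decompose into exactly two tokens from $\vocabunary$. The proof will therefore classify the admissible two-token decompositions, and then use this classification to extract a vertex cover.

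First, I would enumerate which two-token decompositions are available for each remaining cover- and edge-string. A simple size estimate --- using that $\enc{j} < \uoffset^3 = \bigvalue/\uoffset$ for every $1 \leq j \leq \vcnvertices$ --- shows that any decomposition whose two summands are both at least as large as a cover-string overshoots its target, so at least one of the two tokens in any decomposition must be a vertex-string. Feeding the surviving candidates into the base-$\uoffset$ uniqueness results of Step \circled{1} (\cref{lemma:two_make_one,lemma:two_pair_unique_s}) forces a rigid structure: (i) every cover-string $\covercharstring{j} \notin \vocabunary$ must be encoded as $\vertexcharstring{j}$ together with $\bigvaluetok$; and (ii) every edge-string $\edgecharstring{j}{j'} \notin \vocabunary$ must be encoded either as $\vertexcharstring{j}, \covercharstring{j'}$ or as $\vertexcharstring{j'}, \covercharstring{j}$, which in particular requires at least one of $\covercharstring{j}$ and $\covercharstring{j'}$ to belong to $\vocabunary$.

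Second, let $C^{\star} \subseteq \vertices$ collect the vertices whose cover-string lies in $\vocabunary$ and $E^{\star} \subseteq \edges$ collect the edges whose edge-string lies in $\vocabunary$; since $\vocabunary$ has exactly $\kbudget$ tokens beyond $\dataset_1$, we have $|C^{\star}| + |E^{\star}| = \kbudget$. I then define $\coversol \defeq C^{\star} \cup \{\vertex_j : (\vertex_j, \vertex_{j'}) \in E^{\star}\}$ by adjoining one arbitrary endpoint of each edge in $E^{\star}$. Clearly $|\coversol| \leq |C^{\star}| + |E^{\star}| = \kbudget$, and every edge is hit: edges in $E^{\star}$ by the chosen endpoint, and edges outside $E^{\star}$ by property (ii). Hence $\coversol$ witnesses $\vcoverfunc = \valtrue$, which closes the implication.

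The main obstacle is the classification in the first step. A priori, an edge-string could conceivably be written as a sum involving another edge-string, two cover-strings, a cover-string plus $\bigvaluetok$, and so on, and one must rule out all such exotic decompositions uniformly. The key is that the encoding $\enc{j} = j + j^2\uoffset + j^3\uoffset^2$ was deliberately engineered so that: additions carry no digits modulo $\uoffset$ for the admissible range, the presence or absence of a $\bigvalue$ summand can be read off as a single base-$\uoffset$ digit, and equations among the $\enc{j}$'s reduce to the uniqueness of sums and pairwise sums already proven in Step \circled{1}. Once this bookkeeping is done, the extraction of $\coversol$ is immediate.
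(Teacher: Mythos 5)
Your overall strategy matches the paper's: classify the admissible two-token decompositions of the non-token targets, deduce that every uncovered cover-string is $\vertexcharstring{j}$ plus $\bigvaluetok$ and every uncovered edge-string is $\vertexcharstring{j}$ plus $\covercharstring{j'}$ (or the symmetric variant), and then assemble a cover of size at most $\kbudget$ as $C^{\star}$ together with one arbitrary endpoint per edge in $E^{\star}$. That is precisely the paper's extraction.

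However, there is a concrete gap in your classification. You claim that after the size estimate, the surviving cases "reduce to the uniqueness of sums and pairwise sums already proven in Step \circled{1}," i.e., to \cref{lemma:two_make_one,lemma:two_pair_unique_s}. That is not enough. Your size argument only forces one of the two summands to be small (a vertex-string or $\bigvaluetok$); the other summand can still be an \emph{edge token} from the vocabulary. Concretely, you must rule out
\begin{align}
\covercharstring{j_4} &= \vertexcharstring{j_1} + \edgecharstring{j_2}{j_3}
\quad\text{(i.e.\ } \enc{j_1}+\enc{j_2}+\enc{j_3}=\enc{j_4}\text{), and} \\
\edgecharstring{j_4}{j_5} &= \vertexcharstring{j_1} + \edgecharstring{j_2}{j_3}
\quad\text{(i.e.\ } \enc{j_1}+\enc{j_2}+\enc{j_3}=\enc{j_4}+\enc{j_5}\text{),}
\end{align}
and these are three-summand identities. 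They do not reduce to the two-summand facts of Step \circled{1}; they require the separate claims that $i+j+k=r$, $i^2+j^2+k^2=r^2$ is impossible for positive integers, and likewise for $i+j+k=r+p$, $i^2+j^2+k^2=r^2+p^2$, $i^3+j^3+k^3=r^3+p^3$ (the paper's \cref{lemma:three_make_one,lemma:three_make_two}, the second of which needs the cubic digit of the encoding that \cref{lemma:two_pair_unique_s} never touches). As written, your proof neither states nor proves these, so the conclusion that "(ii) every edge-string $\notin \vocabunary$ must be encoded as $\vertexcharstring{j},\covercharstring{j'}$ or $\vertexcharstring{j'},\covercharstring{j}$" is not yet justified. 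The downstream cover construction is fine once the classification is actually established, so the fix is local: add and invoke the two three-term non-decomposition lemmas.
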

\begin{proof}
Step \circled{3} shows that $\vcnvertices + 1$ tokens in any optimal tokeniser must correspond to the target strings in $\dataset_1$.
Using only these tokens, every target in $\dataset_2$ needs two symbols, and every target in $\dataset_3$ needs three symbols.
With step \circled{2}, the remaining $\kbudget$ tokens must correspond to target strings from $\dataset_2 \cup \dataset_3$. We are going to show that: a token corresponding to an edge target can only contribute to itself; a token corresponding to a cover target can only contribute to itself and edge targets which include the vertex the cover consists of.
Without loss of generality, let $\wrongtokens$ with $0 \leq \wrongtokens \leq \kbudget$ of these remaining tokens be edge-strings (from $\dataset_3$) and let the remaining $\kbudget-\wrongtokens$ be cover-strings (from $\dataset_2$).

We are now going to show that a token from $\dataset_3$ can only improve the solution by reducing its target string to a single token. 
Pick any of the selected edge tokens, having a length of $\edgetoken{j_1}{j_2} = \baseNnumber{1}{0}{j_1^3\!+\!j_2^{3}}{j_1^2\!+\!j_2^{2}}{j_1\!+\!j_2}$.
For this edge token to contribute to compressing a target string, it must be combined with another token; let its length be $\vertextoken{\symboladditionaltok}$, such that their sum equals the length of one of the target strings already present in the dataset. 
Since $\edgetoken{j_1}{j_2}$ already contains a summand $\bigvalue$, the token $\vertextoken{\symboladditionaltok}$ cannot contain $\bigvalue$, as any target string length in the dataset is strictly less than $2\bigvalue$. 
As established in step \circled{2}, tokens must correspond to target string-lengths themselves. 
Since any non-vertex target string contains a summand $\bigvalue$, the additional token $\vertextoken{\symboladditionaltok}$ must therefore be a vertex token, with length $\vertextoken{\symboladditionaltok} = \baseNnumber{0}{0}{\sublenprime^3}{\sublenprime^2}{\sublenprime}$.
Now, assume that the edge token $\edgetoken{j_1}{j_2}$ is combined with this vertex token $\vertextoken{\symboladditionaltok}$ to compress an arbitrary target from one of the three datasets:
\begin{subequations}
\begin{align}
    \vertextoken{j_3} &= \edgetoken{j_1}{j_2} + \vertextoken{\symboladditionaltok} \iff \\ 
    &\baseNnumber{0}{0}
    {j_3^3}
    {j_3^2}
    {j_3} = \baseNnumber{1}{0}{\sublenprime^3+j_1^3+j_2^3}{\sublenprime^2+j_1^2+j_2^2}{\sublenprime+j_1+j_2}
    \nonumber \\
    \covertoken{j_3}  &= \edgetoken{j_1}{j_2} + \vertextoken{\symboladditionaltok} \iff \label{eq:invalid_eq_cover_string} \\ 
    &\baseNnumber{1}{0}
    {j_3^3}
    {j_3^2}
    {j_3} = \baseNnumber{1}{0}{\sublenprime^3+j_1^3+j_2^3}{\sublenprime^2+j_1^2+j_2^2}{\sublenprime+j_1+j_2}
    \nonumber \\
    \edgetoken{j_3}{j_4}  &= \edgetoken{j_1}{j_2} + \vertextoken{\symboladditionaltok} \iff \label{eq:invalid_eq_edge_string} \\ 
    &\baseNnumber{1}{0}
    {j_3^3+j_4^3}
    {j_3^2+j_4^2}
    {j_3+j_4} = \baseNnumber{1}{0}{\sublenprime^3+j_1^3+j_2^3}{\sublenprime^2+j_1^2+j_2^2}{\sublenprime+j_1+j_2} \nonumber
\end{align}
\end{subequations}

The first case clearly cannot be satisfied, as any vertex target has length strictly smaller than any edge target.
Additionally, the two other cases cannot be satisfied per \cref{lemma:three_make_two,lemma:three_make_one}. 
In other words, this shows that edge-strings cannot contribute to any other target value.

We are now left with $\kbudget-\wrongtokens$ tokens formed of cover-strings. 
Recall that using only the tokens obtained from step \circled{3}, every target in $\dataset_2$ needs two symbols, and every target in $\dataset_3$ needs three symbols.
Having the newly obtained tokens corresponding to a target from $\dataset_2$ we will show that they can only be applied optimally on: themselves, resulting in one symbol used; target strings from $\dataset_3$, reducing the symbols to two. 
Any other application of these tokens would not yield a better compression, as improving compression
is only possible if the token obtained from $\dataset_2$  compresses a target from $\dataset_2 \cup \dataset_3$ other than itself to a single token. 
But step \circled{1} shows that all target strings are unique, meaning a token cannot reduce any target string apart from itself to a single symbol. 

Finally, these selected tokens must be used, in conjunction with vertex-strings, to compress all the remaining edge-strings to two tokens.
Note that only composing a cover and a vertex-string can compress an edge-string to two symbols:
\begin{subequations}
\begin{align}
    \vertexcharstring{j_1} + \vertexcharstring{j_2} = \baseNnumber{0}{0}{j_1^3\!+\!j_2^3}{j_1^2\!+\!j_2^2}{j_1\!+\!j_2}
    \\
    \covercharstring{j_1} + \covercharstring{j_2} = \baseNnumber{2}{0}{j_1^3\!+\!j_2^3}{j_1^2\!+\!j_2^2}{j_1\!+\!j_2} \\
    \covercharstring{j_1} + \vertexcharstring{j_2} = \baseNnumber{1}{0}{j_1^3\!+\!j_2^3}{j_1^2\!+\!j_2^2}{j_1\!+\!j_2}
\end{align}
\end{subequations}
Additionally, an edge-string can only be compressed by a cover- or vertex-string subword which contains exactly the vertices that the edge consists of. 
Assume there exists another set of vertex- and cover-strings such that their tokens $\vertextoken{j_1}, \vertextoken{j_2}$ can compress an edge-string consisting of different vertices $\edgecharstring{j_3}{j_4}$. Then we have that:
\begin{align}
    \edgecharstring{j_3}{j_4} = \vertextoken{j_1} + \vertextoken{j_2} \iff
    \baseNnumber{1}{0}{j_3^3+j_4^3}{j_3^2+j_4^2}{j_3+j_4} &= \baseNnumber{1}{0}{j_1^3+j_2^3}{j_1^2+j_2^2}{j_1+j_2} 
\end{align}
From \cref{lemma:two_pair_unique_s} it follows that this cannot be the case.
This means that, for each edge-string of length $\edgecharstring{j_1}{j_2}$ not in our tokeniser, we must have a subword (of length either $\covercharstring{j_1}$ or $\covercharstring{j_2}$) which ``covers'' it to obtain our target compression.
Consider thus, the subgraph $(\vertices, \edges')$, where:
\begin{align}
     \edges' = \edges  \setminus \{(\vertex_{j_1}, \vertex_{j_2}) \in \edges \mid \edgecharstring{j_1}{j_2} \notin \vocabunary\}
\end{align}
There exists a vertex cover of size $\kbudget-\wrongtokens$ for this subgraph composed of vertices $\{\vertex_j \in \vertices \mid \covercharstring{j} \in \vocabunary\}$. 
Now, if we expand this set of $\kbudget-\wrongtokens$ vertices by picking one arbitrary vertex, $ \vertex_{j_1}$ or $\vertex_{j_2}$, for each edge-string of length $\edgecharstring{j_1}{j_2}$ in our vocabulary, we get a cover $\cover = \{\vertex_j \in \vertices \mid \covercharstring{j} \in \vocabunary\} \cup \{\vertex_{j_1} \mid \edgecharstring{j_1}{j_2} \in \vocabunary\}$ of size at most $\kbudget$ for the original graph.
Thus, it follows that $\vcoverfunc = \valtrue$.
\end{proof}

\subsection{Proofs that String-lengths in \texorpdfstring{\cref{reduction:vc_to_utok}}{Reduction} are Unique}

We now show the technical sublemmas used in the previous proof.%

\begin{restatable}{sublemma}{twomakeone}
\label{lemma:two_make_one}
For any \(r\in\mathbb{N}\), there do not exist non-zero \(i,j\in\mathbb{N}\) such that:
\begin{align}
i+j=r,\qquad
i^{2}+j^{2}=r^{2}
\end{align}

\end{restatable}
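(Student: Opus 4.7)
The plan is to proceed by a direct algebraic contradiction. Suppose for contradiction that non-zero $i, j \in \mathbb{N}$ exist with $i + j = r$ and $i^2 + j^2 = r^2$. First I would square the first equation to obtain
\begin{align}
r^2 = (i+j)^2 = i^2 + 2ij + j^2.
\end{align}
Next, I would subtract the hypothesis $i^2 + j^2 = r^2$ from this identity, which immediately yields $2ij = 0$. Since we are working in $\mathbb{N}$, this forces $i = 0$ or $j = 0$, contradicting the assumption that both are non-zero.

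I do not expect any real obstacle here: the result is just the observation that the Pythagorean identity $i^2 + j^2 = (i+j)^2$ holds only in the degenerate case where one summand vanishes. The entire argument fits in two lines, and no case analysis or number-theoretic machinery is required. The only subtlety worth noting is the convention on $\mathbb{N}$: the statement explicitly restricts to non-zero $i, j$, so the degenerate solutions $(i,j) = (r, 0)$ and $(0, r)$ are correctly excluded by hypothesis rather than needing to be ruled out separately.
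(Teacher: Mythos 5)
Your proof is correct and follows exactly the same argument as the paper: square $i+j=r$, compare with $i^2+j^2=r^2$ to deduce $2ij=0$, and contradict $i,j>0$. No differences worth noting.
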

\begin{proof}

From \((i+j)^{2}=i^{2}+2ij+j^{2}\) and the equations \(i+j=r\) and \(i^{2}+j^{2}=r^{2}\), we obtain:
\begin{align}
r^{2}=i^{2}+j^{2}+2ij=r^{2}+2ij \;\Rightarrow\; ij=0,
\end{align}
contradicting \(i,j>0\).
\end{proof}

\begin{restatable}{sublemma}{twopairuniqueS}
\label{lemma:two_pair_unique_s}
There do not exist two distinct pairs \(\{i,j\}\neq\{a,b\}\) of positive integers such that:
\begin{align}
i+j=a+b,\qquad i^{2}+j^{2}=a^{2}+b^{2}
\end{align}

\end{restatable}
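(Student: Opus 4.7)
The plan is to exploit the standard identity relating sum of squares, sum, and product of two numbers: $(i+j)^2 = i^2 + 2ij + j^2$. From the two given equations, both the sum and the sum of squares of $\{i,j\}$ and $\{a,b\}$ coincide, and this forces their products to coincide as well. Two positive integers are determined (up to order) by their sum and product, so the two pairs must be equal as multisets, contradicting the assumption that they are distinct.

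More concretely, I would proceed as follows. First I would set $s \defeq i + j = a + b$ and $q \defeq i^2 + j^2 = a^2 + b^2$. Then I would apply the identity $(i+j)^2 - (i^2+j^2) = 2ij$ to obtain
\begin{align}
2ij = s^2 - q = 2ab,
\end{align}
hence $ij = ab$. Consequently, both $\{i,j\}$ and $\{a,b\}$ are the (multi)sets of roots of the quadratic
\begin{align}
x^2 - s\,x + ij = 0.
\end{align}
Since a quadratic polynomial has a unique (multi)set of two roots, we conclude $\{i,j\} = \{a,b\}$, contradicting the hypothesis $\{i,j\} \neq \{a,b\}$.

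There is essentially no obstacle here: the argument reduces entirely to Vieta's formulas once the identity $(i+j)^2 - (i^2+j^2) = 2ij$ is applied. The only thing worth being careful about is that the claim is about multisets (unordered pairs), so the conclusion $\{i,j\}=\{a,b\}$ is exactly the right statement to reach the contradiction; no positivity or distinctness of $i,j$ (or $a,b$) within a pair is needed beyond what the hypothesis already supplies.
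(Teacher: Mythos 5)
Your proof is correct and complete, but it follows a different route from the paper. The paper parametrizes the second pair via an integer shift $s$, writing $a = i - s$, $b = j + s$, expands $a^2 + b^2 - (i^2 + j^2) = 2s^2 + 2s(j-i)$, and factors to get $s(s + j - i) = 0$; both solutions ($s=0$ and $s=i-j$) reproduce $\{i,j\}$. You instead derive $ij = ab$ from the identity $(i+j)^2 - (i^2+j^2) = 2ij$ and invoke Vieta's formulas: two numbers are determined as a multiset by their sum and product, since they are the roots of $x^2 - sx + ij$. Both arguments are elementary and equally short; yours has the advantage of being structurally parallel to the paper's proof of SubLemma~\ref{lemma:three_make_two}, which also proceeds via symmetric polynomials and Newton's identities, so it would give the appendix a more uniform feel. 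You are also right that positivity plays no role in either argument; the statement holds over the integers (or indeed any integral domain).
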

\begin{proof}
Due to \(i+j=a+b\), there is an integer \(s\) such that:
\begin{align}
a=i-s,\qquad b=j+s 
\label{eq:replaced_a_b}
\end{align}

Then, replacing $a$ and $b$ with their corresponding expressions from \cref{eq:replaced_a_b}, we obtain:
\begin{align}
a^{2}+b^{2}-(i^{2}+j^{2})
=(i-s)^{2}+(j+s)^{2}-i^{2}-j^{2}
=2s^{2}+2s(j-i).
\end{align}

By the lemma statement 
\(a^{2}+b^{2}=i^{2}+j^{2}\), so \(2s^{2}+2s(j-i)=0\), i.e.:
\begin{align}
s\bigl(s+j-i\bigr)=0.
\end{align}
Hence either \(s=0\) or \(s=i-j\).
If \(s=0\), then \(a=i\) and \(b=j\).
If \(s=i-j\), then \(a=i-(i-j)=j\) and \(b=j+(i-j)=i\).
In both cases, it follows that \(\{a,b\}=\{i,j\}\), contradicting distinctness.
\end{proof}

\begin{restatable}{sublemma}{threemakeone}
\label{lemma:three_make_one}
For any \(r\in\mathbb{N}\), there do not exist non-zero \(i,j,k\in\mathbb{N}\) such that:
\begin{align}
i+j+k=r,\qquad
i^{2}+j^{2}+k^{2}=r^{2}
\end{align}
\end{restatable}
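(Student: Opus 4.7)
The plan is to follow exactly the template of \cref{lemma:two_make_one}, just with one extra variable. Both hypotheses are naturally compared through the expansion of $(i+j+k)^2$, so the argument is algebraic and essentially immediate.

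First I would expand the square of the first equation to get
\begin{align}
r^2 = (i+j+k)^2 = i^2 + j^2 + k^2 + 2(ij + ik + jk).
\end{align}
Substituting the second hypothesis $i^2 + j^2 + k^2 = r^2$ into the right-hand side and cancelling $r^2$ from both sides yields
\begin{align}
ij + ik + jk = 0.
\end{align}

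The second step is the contradiction: since $i, j, k$ are strictly positive integers (the sublemma excludes the zero case), each of the three products $ij$, $ik$, $jk$ is strictly positive, hence their sum is strictly positive and cannot equal $0$. This rules out any such triple and completes the proof.

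I do not expect any real obstacle: unlike the pairwise-unique sublemma (\cref{lemma:two_pair_unique_s}), there is no casework on signs of an auxiliary parameter, and unlike the step \circled{4} applications earlier, we do not need to track base-$\uoffset$ digits. The only subtlety worth stating explicitly is the convention that ``non-zero $i,j,k\in\mathbb{N}$'' means $i,j,k\geq 1$; this positivity is exactly what turns $ij+ik+jk=0$ into an immediate contradiction.
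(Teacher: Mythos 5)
Your proof is correct and follows essentially the same route as the paper's: expand $(i+j+k)^2$, substitute the hypothesis $i^2+j^2+k^2=r^2$ to obtain $ij+ik+jk=0$, and contradict positivity of the three products. No differences worth noting.
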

\begin{proof}

Using \((i+j+k)^{2}=i^{2}+j^{2}+k^{2}+2(ij+ik+jk)\) and the given equations: 
\begin{align}
r^{2}=r^{2}+2(ij+ik+jk) \implies ij+ik+jk=0.
\end{align}
With \(i,j,k>0\), each product \(ij,ik,jk\) is positive, which yields a contradiction.
Hence, no solution exists.
\end{proof}

\begin{restatable}{sublemma}{threemaketwo}
\label{lemma:three_make_two}
Let \(r,p\in\mathbb{N}\). There do not exist non-zero \(i,j,k\in\mathbb{N}\) such that:
\begin{align}
i+j+k=r+p,\qquad
i^{2}+j^{2}+k^{2}=r^{2}+p^{2},\qquad
i^{3}+j^{3}+k^{3}=r^{3}+p^{3}.
\end{align}
\end{restatable}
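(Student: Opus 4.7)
The plan is to reduce this to the statement that the first three power sums of three real numbers uniquely determine their multiset. The trick is to pad $(r, p)$ with a zero: since $0^t = 0$ for $t \geq 1$, we have $r^t + p^t + 0^t = r^t + p^t$ for $t = 1, 2, 3$, so the hypotheses are equivalent to $(i, j, k)$ and $(r, p, 0)$ sharing the same first three power sums. Once we establish that matching three power sums forces the multisets to coincide, positivity of $i, j, k$ will contradict the presence of $0$ in the target multiset.

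From here, I would invoke Newton's identities to express the elementary symmetric polynomials in terms of the power sums. For three variables $x_1, x_2, x_3$ with power sums $p_t = x_1^t + x_2^t + x_3^t$ and elementary symmetric polynomials $e_1, e_2, e_3$, Newton's identities give
\begin{align*}
e_1 &= p_1, \\
2 e_2 &= e_1 p_1 - p_2, \\
3 e_3 &= e_2 p_1 - e_1 p_2 + p_3.
\end{align*}
These equations uniquely determine $(e_1, e_2, e_3)$ from $(p_1, p_2, p_3)$ over $\mathbb{Q}$. Consequently, $(i, j, k)$ and $(r, p, 0)$ produce the same triple of elementary symmetric polynomials, so they are the roots (with multiplicity) of the same monic cubic $X^3 - e_1 X^2 + e_2 X - e_3$. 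By unique factorisation, the two multisets coincide, and hence one of $i, j, k$ must equal $0$, contradicting the assumption that they are all non-zero.

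I do not anticipate a serious obstacle: the argument is a direct three-variable analogue of \cref{lemma:two_pair_unique_s}, and Newton's identities hold regardless of whether the roots are distinct or lie in $\mathbb{N}$. The only care needed is to justify that one may legitimately treat $(r, p)$ as a three-element multiset by padding with zero, which is precisely why the hypothesis must cover power sums of degrees $1, 2, 3$ (i.e., as many as there are variables on the left). This also explains why no analogous sublemma is needed for four variables on the left-hand side: the reductions in \cref{reduction:vc_to_utok} never produce sums of four vertex encodings, so degree-three is exactly the bound required to rule out all the collisions that arise in Step \circled{4} of the backward proof.
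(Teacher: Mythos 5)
Your proof is correct and takes essentially the same route as the paper: both hinge on Newton's identities for three variables to recover the elementary symmetric polynomials from the power sums, with the crux being that $e_3 = ijk$ is forced to vanish. The paper computes $e_3 = 0$ directly rather than through your ``pad $(r,p)$ with a zero and compare multisets of roots'' framing, but the underlying algebra is identical, so this is a presentational rather than a substantive difference.
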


\begin{proof}
Let \(p_{1}=i+j+k\), \(p_{2}=i^{2}+j^{2}+k^{2}\), \(p_{3}=i^{3}+j^{3}+k^{3}\), and
\(e_{1}=i+j+k\), \(e_{2}=ij+ik+jk\), \(e_{3}=ijk\).
From the first two equations:
\begin{align}
e_{1}=r+p,\qquad e_{2}=\frac{p_{1}^{2}-p_{2}}{2}
=\frac{(r+p)^{2}-(r^{2}+p^{2})}{2}=rp.
\end{align}
Newton’s identity for three variables gives:
\begin{align}
p_{3}=e_{1}p_{2}-e_{2}p_{1}+3e_{3}.
\end{align}
Substituting \(p_{1}=r+p\), \(p_{2}=r^{2}+p^{2}\), \(e_{2}=rp\) yields:
\begin{align}
p_{3}=(r+p)(r^{2}+p^{2})-rp(r+p)+3e_{3}
=(r+p)\bigl(r^{2}+p^{2}-rp\bigr)+3e_{3}=r^{3}+p^{3}+3e_{3}.
\end{align}

By the third equation in the lemma statement, \(p_{3}=r^{3}+p^{3}\); hence \(3e_{3}=0\) and so \(e_{3}=ijk=0\),
contradicting \(i,j,k>0\).
\end{proof}

\section{Definition of the Addition Chain Problem}
\label{appendix:def_addseq}
An addition chain is a sequence of integers that provides an efficient way to ``build'' a target set of numbers starting from 1.

\begin{defin}
    Let $\astargets=\{\astarget_1, \astarget_2, \dots, \astarget_{\addseqntargets}\}$ be a finite set of positive integer targets. 
    An \defn{addition chain} for $\astargets$ is a sequence of integers $\assymbolset = \langle \assymbol_0, \assymbol_1, \dots, \assymbol_{\assymbolsetlength} \rangle$ with the following properties:
    \begin{enumerate}
        \item The sequence starts with $\assymbol_0 = 1$.
        \item Every subsequent element $\assymbol_i$ is the sum of two preceding elements:
        \begin{align}  
        \assymbol_i = \assymbol_j + \assymbol_k, \quad \text{for some } 0 \leq k \leq j < i. 
        \end{align}
        \item The sequence contains all targets: for every $\astarget_j \in \astargets$, there is some $\assymbol_r \in \assymbolset$ such that $\astarget_j = \assymbol_r$.
        \item By convention, the \defn{length} of the chain $\assymbolset$ is $R$.
    \end{enumerate}
\end{defin}

\begin{defin}
    Let $\astargets$ be a set of positive integers. 
    Given a maximum length $\asmaxlength$, the \defn{addition chain decision problem (\as)} requires deciding whether there exists
    an addition chain 
    for $\astargets$ with length at most $\asmaxlength$.
    The \defn{addition chain optimisation problem} is to find the minimal length for such an addition chain.
\end{defin}

We denote by $\addseqfull$ a function which returns $\valtrue$ if such a chain exists (meaning the \as instance is satisfied), and $\valfalse$ otherwise.

\section{Proof that the Unary OPE Decision Problem is (at Least) Weakly NP-Complete}
\label{appendix:uope_weaklynpc}

\uopeiffas*
\begin{proof}
    
We write $\uopefull$ 
for a function which returns $\valtrue$ if its input corresponds to a satisfiable instance of the \uope decision problem, and $\valfalse$ otherwise. To prove weak \np-completeness, we must show that the problem is in \np and that it is weakly \np-hard. Inclusion in \np was already established by \cite{kozma2024theoretical}. We prove weak \np-hardness via a reduction from the \as decision problem. First, we define this reduction.

\begin{reduction}
    \label{reduction:as_to_uope}
    Given an instance of \as\ consisting of a targets $\astargets=\{\astarget_1, \dots, \astarget_{\addseqntargets}\}$ and a length limit $\asmaxlength$, we construct an instance of the \uope\ problem with the following parameters. The dataset $\dataset$ is the set of unary strings corresponding to the targets: 
    $\dataset = \{\usymbol^{\astarget_1}, \usymbol^{\astarget_2}, \dots, \usymbol^{\astarget_{\addseqntargets}}\}$;
    the merge budget is set to the addition chain length limit: $\vocabsize = \asmaxlength$;
    and the token count threshold is set to the number of targets $ \maxsymbols = \addseqntargets$.
\end{reduction}

 Note that setting the threshold $\maxsymbols$ to the number of strings in the dataset implies that a valid solution must represent every string as a single token. The proof proceeds in two parts, showing both directions of the equivalence:
\begin{align}
    \addseqfull \iff \uopefull
\end{align}

\newcommand{\assymbolopt}{\assymbol^{\star}}
\newcommand{\assymbolsetopt}{\assymbolset^{\star}}

\paragraph{Forward Step ($\addseqfull \implies \uopefull$).}
We first show that a solution to the \as\ problem implies a solution to the \uope\ problem.
Assume there exists a valid addition chain $\assymbolsetopt = \langle \assymbol_0, \assymbol_1, \dots, \assymbol_{\assymbolsetlength} \rangle$
of length $\assymbolsetlength \le \asmaxlength$ for the target set $\astargets$. 
By definition, for each element $\assymbolopt_r \in \assymbolsetopt$ (where $r \ge 1$), there exist indices $r',r'' < r$ such that $\assymbolopt_r = \assymbolopt_{r'} + \assymbolopt_{r''}$.
We construct a merge sequence $\merges = \langle\merge_1, \dots, \merge_{\assymbolsetlengthopt} \rangle$ of length $\assymbolsetlengthopt$ where each merge is defined as $\merge_r = \mergestringwithparens{ \utoken^{\assymbolopt_{r'}} }{\utoken^{\assymbolopt_{r''}}} $, corresponding to the predecessors of $\assymbol_r$ in the addition chain.
The length of this merge sequence $\merges$ is $\assymbolsetlengthopt \le \asmaxlength$, satisfying the merge budget $\vocabsize=\asmaxlength$. 
By the iterative definition of the merge-extracted vocabulary, the resulting vocabulary $\vocabmerges = \{\utoken^{\assymbolopt_{r'} + \assymbolopt_{r''}} \mid \merge_r \in \merges\}$ will contain a token $\utoken^{\assymbolopt_r}$ for every element $\assymbolopt_r$ in the addition chain $\assymbolsetopt$.
Since the addition chain $\assymbolsetopt$ contains all targets $\astarget_{j} \in \astargets$, the vocabulary $\vocabmerges$ is guaranteed to contain a single token for each target string $\utoken^{\astarget_{j}} \in \dataset$. 
Consequently, the direct encoding function $\directtoken[\vocabmerges]$ can represent each string $\characters \in \dataset$ with exactly one token. The total token count is therefore:
\begin{align}
    \sum_{\characters \in \dataset} | \directtoken[\vocabmerges](\characters) | = \sum_{\characters \in \dataset} 1 = |\dataset|
\end{align}
By the construction used in our reduction, $|\dataset| = \maxsymbols$. The condition is met, thus proving the implication.

\paragraph{Backward Step ($\uopefull \implies \addseqfull$).}
Next, we show that a solution to the \uope\ problem implies a solution to the \as\ problem.
Assume there exists a merge sequence $\mergesopt$ of length $\vocabsize=\assymbolsetlength \le \asmaxlength$ that satisfies the \uope{} decision problem. 
The condition is:\looseness=-1
\begin{align}
    \sum_{\characters \in \dataset} | \directtoken[\vocabmerges](\characters) | \le \maxsymbols
\end{align}
where, by the reduction's construction, we have $\maxsymbols = |\dataset|$.
Since the tokenisation of any string must contain at least one token, this sum is also lower-bounded by $|\dataset|$.
Therefore, the inequality must hold with equality, which is only possible if every string is tokenised into exactly one token:
\begin{align}
    | \directtoken[\vocabmerges](\characters) | = 1 \quad \text{for all } \characters \in \dataset.
\end{align}
This implies that for every target $\astarget_{j} \in \astargets$, the corresponding string 
$\usymbol^{\astarget_{j}}$ must exist as a single token in the merge-extracted vocabulary, i.e., $\utoken^\subwordstring{{\astarget_{j}}} \in \vocabmerges$.
Now, construct an addition chain from $\vocabmerges$ as: $\assymbolset = [\ell_r \mid \utoken^\subwordstring{{\vocabsublentok_{r}}} \in \vocabmerges]$.
The construction of $\vocabmerges$ from merges guarantees that $\assymbolset$ is a valid addition chain.\footnote{
If any subword produced by a non-reachable merge exists in  $\vocabmerges$, it should be pruned from $\assymbolset$. A non-reachable merge is a merge $\merge_r =\mergestringwithparens{\utoken^{{\vocabsublentok}_{r'}}}{\utoken^{{\vocabsublentok}_{r''}}}$ whose pair of subwords $\utoken^{{\vocabsublentok}_{r'}}$ and $\utoken^{{\vocabsublentok}_{r''}}$ cannot be generated.}
Since $\vocabmerges$ contains all strings $\utoken^\subwordstring{{\astarget_{j}}}$, then $\assymbolset$ must contain all targets $\astarget_j \in \astargets$. 
Further, $\assymbolset$ was constructed from $\vocabsize \le \asmaxlength$ merges.
There is thus an addition chain for $\astargets$ of length at most $\asmaxlength$, which concludes the proof.
\end{proof}
\end{document}